\providecommand{\lin}[1]{\ensuremath{\left\langle #1 \right\rangle}}
  \providecommand{\R}{\mathbb{R}} 
  \DeclareMathOperator{\E}{{\mathbb E}}
  \renewcommand{\aa}{\mathbf{a}}
  \providecommand{\bb}{\mathbf{b}}
  \providecommand{\ww}{\mathbf{w}}
  \providecommand{\xx}{\mathbf{x}}
  \providecommand{\yy}{\mathbf{y}}
  \providecommand{\cN}{\mathcal{N}}
  \providecommand{\cO}{\mathcal{O}}
  \providecommand{\cX}{\mathcal{X}}
  \providecommand{\cZ}{\mathcal{Z}}
\providecommand{\mycomment}[3]{\todo[caption={},color=#3!20,inline]{\textbf{#1: }#2}}%
\providecommand{\myinlinecomment}[3]{%
  {\color{#1}#2: #3}}%
\newcommand\commenter[2]%
\newcommand\csname i#1\endcsname[1]{\myinlinecomment{#2}{#1}{##1}}
\newcommand\csname #1\endcsname[1]{\mycomment{#1}{##1}{#2}}
\newtheorem{lemma}{Lemma}
\newtheorem{corollary}[lemma]{Corollary}
\newtheorem{definition}{Definition}
\newtheorem{remark}[lemma]{Remark}
\newtheorem{theorem}[lemma]{Theorem}
\newtheorem{example}[lemma]{Example}
\renewcommand{\epsilon}{\varepsilon}
\newtcbox{\comparison}{on line,
  colframe=blue,colback=white,
  boxrule=0.5pt,arc=4pt,boxsep=0pt,left=6pt,right=6pt,top=6pt,bottom=6pt}
\title{Adaptive SGD with Polyak stepsize and Line-search: Robust Convergence and Variance Reduction}
\author{
  Xiaowen Jiang  \\
  CISPA\thanks{CISPA Helmholtz Center for Information Security}  \\
  Saarbrücken, Germany\\
  \texttt{xiaowen.jiang@cispa.de}\\
  \And
  Sebastian U. Stich \\ 
  CISPA\footnotemark[1] \\ 
  Saarbrücken, Germany\\
  \texttt{stich@cispa.de} \\
}
\begin{document}

\maketitle

\begin{abstract}
The recently proposed stochastic Polyak stepsize (SPS) and stochastic line-search (SLS) for SGD have shown remarkable effectiveness when training over-parameterized models. However, in non-interpolation settings, both algorithms only guarantee convergence to a neighborhood of a solution which may result in a worse output than the initial guess. While artificially decreasing the adaptive stepsize has been proposed to address this issue (\citet{decsps}), this approach results in slower convergence rates for convex and over-parameterized models. In this work, we make two contributions:
Firstly, we propose two new variants of SPS and SLS, called AdaSPS and AdaSLS, which guarantee convergence in non-interpolation settings and maintain sub-linear and linear convergence rates for convex and strongly convex functions when training over-parameterized models. 
AdaSLS requires no knowledge of problem-dependent parameters, and AdaSPS requires only a lower bound of the optimal function value as input.
Secondly, we equip AdaSPS and AdaSLS with a novel variance reduction technique and obtain algorithms that require $\smash{\widetilde{\mathcal{O}}}(n+1/\epsilon)$ gradient evaluations to achieve an $\mathcal{O}(\epsilon)$-suboptimality for convex functions, which improves upon the slower $\mathcal{O}(1/\epsilon^2)$ rates of AdaSPS and AdaSLS without variance reduction in the non-interpolation regimes. Moreover, our result  matches the fast rates of AdaSVRG but removes the inner-outer-loop structure, which is easier to implement and analyze.
Finally, numerical experiments on synthetic and real datasets validate our theory and demonstrate the effectiveness and robustness of our algorithms.
\end{abstract}

\section{Introduction}
Stochastic Gradient Descent (SGD) \citep{sgd_robbin} and its variants \citep{large_opt_book} are among the most preferred optimization algorithms for training modern machine learning models. These first-order methods only compute the stochastic gradients in each iteration, which is often more efficient than computing a full batch gradient. However, the performance of SGD is highly sensitive to the choice of the stepsize. One common approach is to use a fixed stepsize schedule, for instance, to keep it constant or to decrease it over time. Unfortunately, the theoretically optimal schedules are disparate across different function classes \citep{bubek_convex}, and usually depend on problem parameters that are often unavailable, such as the Lipschitz constant of the gradient. As a result, a heavy tuning of the stepsize parameter is required, which is typically expensive in practice. 

Instead of fixing the stepsize evolution, adaptive SGD methods adjust the stepsize on the fly \citep{AdaGrad,adam}. These algorithms often require less hyper-parameter tuning and still enjoy competitive performance in practice. Stochastic Polyak Stepsize (SPS)~\cite{sps,alig,polyak-book} is one of such recent advances. It has received rapid growing interest due to two factors: 1) the only possibly unknown parameter is the individual optimal function value which is often available in many machine learning applications and 2) its adaptivity utilizes the local curvature and smoothness information allowing the algorithm to accelerate and converge quickly when training over-parametrized models. However, when the optimal function value is inexact or the problems are non-interpolated, SPS cannot converge to the solution~\cite{sps}. \citet{decsps} address this issue by applying an artificially decreasing rule to SPS and the resulting algorithm DecSPS is able to converge as quickly as SGD with the optimal fixed decreasing stepsize schedule for convex problems. Furthermore, only a lower bound of the function value is needed for running DecSPS. Despite its convergent property in non-interpolated settings, the asymptotic convergence rates are much slower than SPS in interpolation regimes. 

Stochastic Line-Search (SLS) \citep{sls} is another adaptive stepsize that offers exceptional performance when training over-parametrized models. Compared with SPS, the knowledge of the optimal function value is not required, but at the cost of additional function value evaluations per iteration. However, SLS cannot converge in non-interpolated settings as well. 

There are no current methods that are \emph{robust}, in the sense that they can automatically adapt to the optimization setting (interpolation vs.\ non-interpolation). An ideal adaptive stepsize should enjoy robust convergence. This is because, in many real-world scenarios, it can be challenging to ascertain whether a model is effectively interpolating the data or not. Consider the setting of federated learning~\cite{FL} where millions of clients jointly train a machine learning model on their mobile devices, which usually cannot support huge-scale models.  Due to the fact that each client's data is stored locally, it becomes impractical to check the interpolation condition, i.e.\ to compare the optimal function value of the entire dataset with that of each individual data point. In this context, directly using SPS or DecSPS may result in slow or non-existent convergence. Thus, seeking a robust and parameter-free adaptive stepsize can be of great convenience for the users in practice.

Although SPS and SLS have become popular these days, there exist no current algorithms that successfully combine variance-reduction techniques \citep{VR_zhang,sarah,saga,loopless_svrg,sag,page} with these stepsizes, with a theoretical guarantee. Indeed, \citet{ada-svrg} provide a counter-example where classical line-search methods fail in the VR setting. As such, it is still unclear whether we can accelerate SGD with stepsizes from Polyak and line-search family in non-interpolated settings.

In this work, we aim to provide solutions to the aforementioned two issues, that is: 1) SPS, SLS, and DecSPS do not have robust convergence and 2) There is no theoretically sound algorithm that successfully accelerates SGD with Polyak and line-search type stepsizes with variance-reduction.

\subsection{Main contributions}
We summarize our main contributions as follows (Table \ref{tab:summary} summarizes the  complexity results established in this paper):
\begin{itemize}[leftmargin=*]
    \item In Section \ref{sec:adasps}, we propose the first adaptive methods that simultaneously achieve the best-known asymptotic rates in both strongly-convex or convex and interpolation or non-interpolation settings except for the case when we have strongly-convexity and non-interpolation. The first method called AdaSPS, a variant of SPS, requires only a lower bound of the optimal function value as input (similar to DecSPS) while AdaSLS, the second method based on SLS, is parameter-free. In the non-interpolated setting, we prove for both algorithms an $\cO(1/\epsilon^2)$ convergence rate for convex functions which matches the classical DecSPS and AdaGrad~\cite{AdaGrad} results, whereas SPS and SLS cannot converge in this case. In the interpolated regime, we establish fast $\cO(\log(1/\epsilon))$ and $\cO(1/\epsilon)$ rates under strong convexity and convexity conditions respectively, without knowledge of any problem-dependent parameters. In contrast, DecSPS converges at the slower $\cO(1/\epsilon^2)$ rate and for AdaGrad, the Lipschitz constant is needed to set its stepsize \cite{linear-AdaGrad}. 
    
    \item In Section \ref{sec:adasvrps}, 
    we design the first variance-reduced methods that can use Polyak stepsizes or line-search.
    We prove that, to reach an $\epsilon$-accuracy, the total number of gradient evaluations required in expectation is $\smash{\widetilde{\mathcal{O}}}(n+1/\epsilon)$ for convex functions which matches the rate of AdaSVRG \cite{ada-svrg}.
    With our newly proposed decreasing probability strategy, the artificially designed multi-stage inner-outer-loop structure is not needed, which makes our methods easier to analyze.

    Our novel VR-framework is based on proxy function sequences and can recover the standard VR methods~\cite{VR_zhang} as a special case.  We believe that this technique can be of independent interest to the optimization community and may motivate more personalized VR techniques in the future.
    \item 
    In Section \ref{sec:ex}, we verify our theoretical results by conducting numerical experiments.
\end{itemize}

\begin{table*}[ht!]
\resizebox{\textwidth}{!}
{\begin{minipage}{1.35\textwidth}
\centering
\begin{tabular}{@{}ccccccc@{}}
\toprule
\multirow{2}{*}{\textbf{Stepsize}} &
 \multicolumn{3}{c}{\textbf{Interpolation}} & \multicolumn{3}{c}{\textbf{Non-interpolation}}  \\ 
 \cmidrule(lr){2-4}\cmidrule(lr){5-7} &
\multicolumn{1}{c}{strongly-convex} & \multicolumn{1}{c}{convex} & \multicolumn{1}{c}{required input} & \multicolumn{1}{c}{strongly-convex}& \multicolumn{1}{c}{convex\footnote{The assumption of bounded iterates is also required except for $\text{SPS}$ and SLS.}} & \multicolumn{1}{c}{required input} \\
\midrule
$\text{SPS}/\text{SPS}_{\max}$ \cite{sps} & $\cO(\log(\frac{1}{\epsilon}))$ & $\cO(\frac{1}{\epsilon})$ & $f_{i_t}^\star$ & $\epsilon\ge\Omega(\sigma_{f,B}^2)$& $\epsilon\ge\Omega(\sigma_{f,B}^2)$ & $f_{i_t}^\star$ \\
\rule{0pt}{3ex} 
$\text{SLS}$ \cite{sls} & $\cO(\log(\frac{1}{\epsilon}))$ & $\cO(\frac{1}{\epsilon})$ & None & $\epsilon\ge\Omega(\sigma_{f,B}^2)$& $\epsilon\ge\Omega(\sigma_{f,B}^2)$ & None \\ 
\rule{0pt}{3ex} 
$\text{DecSPS}$ \cite{decsps} & $\cO(\frac{1}{\epsilon^2})$ & $\cO(\frac{1}{\epsilon^2})$ & $\ell_{i_t}^\star$ & $\cO(\frac{1}{\epsilon^2})$ & $\cO(\frac{1}{\epsilon^2})$  & $\ell_{i_t}^\star$ \vspace{0.1cm}\\ 
\hline
\rule{0pt}{3ex} 
$\text{AdaSPS}$ (this work) & $\cO(\log(\frac{1}{\epsilon}))$ & $\cO(\frac{1}{\epsilon})$ &$f_{i_t}^\star$ & $\cO(\frac{1}{\epsilon^2})$ & $\cO(\frac{1}{\epsilon^2})$  & $\ell_{i_t}^\star$\\
\rule{0pt}{3ex} 
$\text{AdaSLS}$ (this work) & $\cO(\log(\frac{1}{\epsilon}))$ & $\cO(\frac{1}{\epsilon})$ & None & $\cO(\frac{1}{\epsilon^2})$ & $\cO(\frac{1}{\epsilon^2})$  & None  \\
\bottomrule
\end{tabular}
\end{minipage}}
\caption{\small{Summary of convergence behaviors of the considered adaptive stepsizes for smooth functions. For $\text{SPS}/\text{SPS}_{\text{max}}$ and SLS in non-interpolation settings, $\Omega(\cdot)$ indicates the size of the neighborhood that they can converge to. In the other cases, the $\cO(\cdot)$ complexity provides the total number of gradient evaluations required for each algorithm to reach an $\cO(\epsilon)$ suboptimality. For convex functions, the suboptimality is defined as $\E[f(\Bar{\xx}_T)-f^\star]$ and for strongly convex functions, the suboptimality is defined as $\E[||\xx_T-\xx^\star||^2]$. }} 
\label{tab:summary}
\end{table*}

\subsection{More related work}
Line-search procedure has been successfully applied to accelerate large-scale machine learning training. Following \cite{sls}, \citet{relax_ls} propose to relax the condition of monotone decrease of objective function for training over-parameterized models. \citet{multidimls}  extends backtracking line-search to a multidimensional variant which provides a better diagonal preconditioners. In recent years, adaptive stepsizes from the AdaGrad family have become widespread and are particularly successful when training deep neural networks. Plenty of contributions have been made to analyze variants of AdaGrad for different classes of functions \cite{AdaGrad,adanorm-2010,adanorm-2015,wald,linear-AdaGrad}, among which \citet{adagrad+ls} first propose to use line-search to set the stepsize for AdaGrad to enhance its practical performance. More recently, variance reduction has successfully been applied to AdaGrad stepsize and faster convergence rates have been established for convex and non-convex functions \cite{ada-svrg,svrg-nonconvex}.

Another promising direction is the Polyak stepsize (PS) \cite{polyak-book} originally designed as a subgradient method for solving non-smooth convex problems. \citet{hazan_polyak} show that PS indeed gives simultaneously the optimal convergence result for a more general class of convex functions. \citet{increment-polyak} propose several variants of PS as incremental subgradient methods and they also discuss the method of  dynamic estimation of the optimal function value when it is not known. Recently, more effort has been put into extending deterministic PS to the stochastic setting \cite{L4,alig,oberman2019}. However, theoretical guarantees of the algorithms still remain elusive until the emergence of SPS/SPS$_{\max}$ \cite{sps} which both demonstrates empirical success and provides strong theoretical guarantees. Subsequently, further improvements and new variants such as DecSPS \cite{decsps} and SPS with a moving target \cite{spsmt} have been introduced.  A more recent line of work  interprets stochastic Polyak stepsize as a subsampled Newton Raphson method and interesting algorithms have been designed based on the first-order local expansion \cite{spsmt,gower2022cutting} as well as the second-order expansion~\cite{sp2}.

\section{Problem setup and background}
\label{sec:background}
\subsection{Notations}
In this work, we consider solving the finite-sum smooth convex optimization problem:
\begin{equation}
    \min_{\xx \in \R^d} \left[ f(\xx)=\frac{1}{n}\sum_{i=1}^n f_i(\xx) \right] \,.
\label{eq:problem}
\end{equation}
This type of problem appears frequently in the modern machine learning applications \citep{hastie}, where each $f_i(\xx)$ represents the loss of a model on the $i$-th data point parametrized by the parameter $\xx$. Stochastic Gradient Descent (SGD) \citep{sgd_robbin} is one of the most popular methods for solving the problem~\eqref{eq:problem}. At each iteration, SGD takes the  form: 
\begin{equation}
\xx_{t+1} = \xx_t - \eta_t \nabla f_{i_t}(\xx_t)\;,
\label{SGD}
\end{equation}
where $\eta_t$ is the stepsize parameter, 
$i_t\subseteq[n]$ is a random set of size $B$ sampled independently at each iteration $t$ and $\nabla f_{i_t}(\xx)=\frac{1}{B}\sum_{i\in i_t}\nabla f_i (\xx)$ is the minibatch gradient. 

Throughout the paper, we assume that there exists a non-empty set of optimal points $\cX^\star\subset\R^d$, and we use $f^\star$ to denote the optimal value of $f$ at a point $\xx^\star\in\cX^\star$. 
We use $f_{i_t}^\star$ to denote the infimum of minibatch function $f_{i_t}(x)$, i.e.\ $f_{i_t}^\star=\inf_{\xx\in\R^d}\frac{1}{B}\sum_{i\in i_t}f_{i}(\xx)$. We assume that all the individual functions $\{f_i(\xx)\}$ are $L$-smooth. Finally, we denote the optimal objective difference, first introduced in \cite{sps}, by $\sigma_{f,B}^2=f^\star-\E_{i_t}[f_{i_t}^\star]$. The problem~\eqref{eq:problem} is said to be interpolated if $\sigma_{f,1}^2=0$, which implies that $\sigma_{f,B}^2=0$ for all $B\le n$ since $\sigma_{f,B}^2$ is non-increasing w.r.t $B$.
 
\subsection{SGD with stochastic polyak
stepsize} 
 \citet{sps} propose to set $\eta_t$ as: $\eta_t=2\frac{f_{i_t}(\xx_t)-f_{i_t}^\star}{||\nabla f_{i_t}(\xx_t)||^2}$, which is well known as the Stochastic Polyak stepsize (SPS). In addition to SPS, they also propose a bounded variant SPS$_{\max}$ which has the form $\eta_t=\min\bigl\{2\frac{f_{i_t}(\xx_t)-f_{i_t}^\star}{||\nabla f_{i_t}(\xx_t)||^2},\gamma_b\bigr\}$ where $\gamma_b>0$. Both algorithms require the input of the exact $f_{i_t}^\star$ which is often unavailable when the batch size $B>1$ or when the interpolation condition does not hold. \citet{decsps} removes the requirement for $f_{i_t}^\star$ and propose to set $\eta_t$ as: $\eta_t=\frac{1}{\sqrt{t+1}}\min\Big\{\frac{f_{i_t}(\xx_t)-\ell_{i_t}^\star}{||\nabla f_{i_t}(\xx_t)||^2},\sqrt{t}\eta_{t-1}\Big\}$ for $t\ge1$ (DecSPS), where $\eta_0>0$ is a constant and $\ell_{i_t}^\star$ is an input lower bound such that $\ell_{i_t}^\star\le f_{i_t}^\star$. In contrast to the exact optimal function value, a lower bound $\ell_{i_t}^\star$ is often available in practice, in particular for machine learning problems when the individual loss functions are non-negative. We henceforth denote the estimation error by:
\begin{equation}
{\rm err}^2_{f,B}:=\E_{i_t}[f_{i_t}^\star-\ell_{i_t}^\star]\,.
\end{equation}

For convex functions, SPS achieves a fast convergence up to a neighborhood of size $\Omega(\sigma_{f,B}^2)$ and its variant SPS$_{\max}$ converges up to  $\Omega(\sigma_{f,B}^2\gamma_b /\alpha)$ where $\alpha=\min\{\frac{1}{L},\gamma_b\}$. Note that the size of the neighborhood cannot be further reduced by choosing an appropriate $\gamma_b$. In contrast, DecSPS converges at the rate of $\cO(1/\sqrt{T})$ which matches the standard result for SGD with decreasing stepsize. 
However, the strictly decreasing $\cO(1/\sqrt{t})$ stepsize schedule hurts its performance in interpolated settings. For example, DecSPS has a much slower $\cO(1/\sqrt{T})$ convergence rate compared with the fast $\cO(\exp(-T\mu/L))$ rate of SPS when optimizing strongly-convex objectives. Therefore, both algorithms do not have the \emph{robust} convergence property (achieving fast convergence guarantees in both interpolated and non-interpolated regimes) and we aim to fill this gap.

\section{Adaptive SGD with polyak stepsize and line-search}
\label{sec:adasps}
In this section, we introduce and analyze two adaptive algorithms to solve  problem \eqref{eq:problem}.

\subsection{Proposed methods}
\textbf{AdaSPS.} 
Our first stepsize is defined as the following: 
\begin{align}
\eta_t=\min\left\{\frac{f_{i_t}(\xx_t)-\ell_{i_t}^\star}{c_p||\nabla f_{i_t}(\xx_t)||^2}\frac{1}{\sqrt{\sum_{s=0}^t f_{i_s}(\xx_s)-\ell_{i_s}^\star}},\eta_{t-1}\right\}\,, & & \text{with }\eta_{-1}=+\infty\,, \label{AdaSPS}
\tag{AdaSPS}
\end{align}
where $\ell_{i_t}^\star$ is an input parameter that must satisfy $\ell_{i_t}^\star\le f_{i_t}^\star$ and $c_p>0$ is an input constant to adjust the magnitude of the stepsize (we discuss suggested choices in Section \ref{sec:ex}).

AdaSPS can be seen as an extension of DecSPS. However, unlike the strict $\cO(1/\sqrt{t})$ decreasing rule applied in DecSPS, AdaSPS accumulates the function value difference during the optimization process which enables it to dynamically adapt to the underlying unknown interpolation settings. 

\textbf{AdaSLS.}
We provide another stepsize that can be applied even when a lower bound estimation is  unavailable. The method is based on line-search and thus is completely parameter-free, but requires additional function value evaluations in each iteration:
\begin{align}
 \eta_t=\min\left\{\frac{\gamma_t}{c_l\sqrt{\sum_{s=0}^t \gamma_{s}||\nabla f_{i_s}(\xx_s)||^2}},\eta_{t-1}\right\}\,, & & \text{with }\eta_{-1}=+\infty\,,
\label{AdaSLS}
\tag{AdaSLS}
\end{align}
where $c_l>0$ is an input constant, and the scale $\gamma_t$ is obtained via stardard Armijo line-search (see Algorithm~\ref{Algorithm:Armijo} for further implementation details  in the Appendix~\ref{sec:line-search}) such that the following conditions are satisfied:
\begin{equation}
f_{i_t}(\xx_t-\gamma_t\nabla f_{i_t}(
\xx_t))\le f_{i_t}(\xx_t)-\rho\gamma_t||\nabla f_{i_t}(\xx_t)||^2\quad\text{and}\quad\gamma_t\le\gamma_{\max},\;0<\rho<1\;,
\label{eq:Armijo}
\end{equation}
for line search parameters $\gamma_{\rm max}$ and $\rho$. 

\textbf{Discussion.} 
Our adaptation mechanism in AdaSPS/AdaSLS is reminiscent of AdaGrad type methods, in particular to AdaGrad-Norm, the scalar version of AdaGrad, that aggregates the gradient norm in the denominator and takes the form $\eta_t=\frac{c_g}{\sqrt{\sum_{s=0}^t||\nabla f_{i_s}(\xx_s)||^2+b_0^2}}$ where $c_g>0$ and $b_0^2\ge0$.

The primary distinction between AdaSPS and AdaSLS compared to AdaGrad-Norm is the inclusion of an additional component that captures the curvature information at each step, and not using squared gradient norms in AdaSPS.  In contrast to the strict decreasing behavior of AdaGrad-Norm, AdaSPS and AdaSLS can automatically mimic a constant stepsize when navigating a flatter region. 

\citet{adagrad+ls} suggest using line-search to set the stepsize for AdaGrad-Norm which takes the form $\eta_t=\frac{\gamma_t}{\sqrt{\sum_{s=0}^t||\nabla f_{i_s}(\xx_s)||^2}}$ where $\gamma_{t}\le\gamma_{t-1}$ is required for solving non-interpolated convex problems. While this stepsize is similar to AdaSLS, the scaling of the denominator gives a suboptimal convergence rate as we demonstrate in the following section.

\subsection{Convergence rates}
In this section, we present the convergence results for AdaSPS and AdaSLS. We list the helpful lemmas in Appendix \ref{sec:pf_lm}. The proofs can be found in Appendix \ref{sec:pf_main}.

\textbf{General convex.} 
We denote $\cX$ to be a convex compact set with diameter $D$ such that there exists a solution $\xx^\star\in \cX$ and $\sup_{\xx,\yy\in \cX}||\xx-\yy||^2\le D^2$. We let $\Pi_\cX$ denote the Euclidean projection onto $\cX$. For general convex stochastic optimization, it seems inevitable that adaptive methods require the bounded iterates assumption or an additional projection step to prove convergence due to the lack of knowledge of problem-dependent parameters \cite{unbound_adp,AdaGrad,wald,decsps,ada-svrg}. Here, we employ the latter solution by running projected stochastic gradient descent (PSGD):
\begin{equation}
\xx_{t+1} = \Pi_\cX(\xx_t - \eta_t \nabla f_{i_t}(\xx_t)).
\label{eq:PSGD}
\end{equation}
\begin{theorem}[General convex]
Assume that $f$ is convex, each $f_i$ is $L$-smooth and $\cX$ is a convex compact feasible set with diameter $D$,  PSGD with AdaSPS or AdaSLS converges as:
\begin{equation}
\begin{split}
    &(\text{AdaSPS}):\;\E[f(\Bar{\xx}_T)-f^\star]\le\frac{\tau_p^2}{T}+\frac{\tau_p\sqrt{\sigma_{f,B}^2+{\rm err}_{f,B}^2}}{\sqrt{T}}\;, \\ 
    &(\text{AdaSLS}):\;\E[f(\Bar{\xx}_T)-f^\star]\le\frac{\tau_l^2}{T}+\frac{\tau_l\sigma_{f,B}}{\sqrt{T}}\;,
    \label{eq:thm:convex}
\end{split}
\end{equation}
where $\Bar{\xx}_T=\frac{1}{T}\sum_{t=0}^{T-1}\xx_t$, $\tau_p=(2c_pLD^2+\frac{1}{c_p})$ and $\tau_l=\max\Bigl\{\frac{L}{(1-\rho)\sqrt{\rho}},\frac{1}{\gamma_{\max}\sqrt{\rho}}\Bigr\}c_lD^2+\frac{1}{c_l\sqrt{\rho}}$.
\label{thm:convex}
\end{theorem}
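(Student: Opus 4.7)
The plan is to adapt the classical projected SGD analysis to these adaptive stepsizes, exploiting two structural properties: the monotonicity $\eta_t\le\eta_{t-1}$ (from the $\min$ with $\eta_{t-1}$) and the closed-form relationship between $\eta_t$, $f_{i_t}(\xx_t)$ and $\|\nabla f_{i_t}(\xx_t)\|^2$. I will sketch the argument for AdaSPS and indicate where AdaSLS differs. Starting from non-expansiveness of $\Pi_\cX$ and convexity of $f_{i_t}$,
\begin{equation*}
\|\xx_{t+1}-\xx^\star\|^2 \le \|\xx_t-\xx^\star\|^2 - 2\eta_t\bigl(f_{i_t}(\xx_t)-f_{i_t}(\xx^\star)\bigr) + \eta_t^2\|\nabla f_{i_t}(\xx_t)\|^2\,.
\end{equation*}
For AdaSPS I rewrite $f_{i_t}(\xx_t)-f_{i_t}(\xx^\star) = a_t - b_t$ with $a_t := f_{i_t}(\xx_t)-\ell_{i_t}^\star\ge 0$ and $b_t := f_{i_t}(\xx^\star)-\ell_{i_t}^\star\ge 0$; for AdaSLS, analogously with $f_{i_t}^\star$ in place of $\ell_{i_t}^\star$, giving $\tilde a_t,\tilde b_t\ge 0$. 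The definition of the stepsize yields the key inequality $\eta_t\|\nabla f_{i_t}(\xx_t)\|^2 \le a_t/(c_p\sqrt{A_t})$ with $A_t := \sum_{s\le t}a_s$ (and the analogue $\eta_t\|\nabla f_{i_t}(\xx_t)\|^2 \le \gamma_t\|\nabla f_{i_t}(\xx_t)\|^2/(c_l\sqrt{\Gamma_t})$, $\Gamma_t := \sum_{s\le t}\gamma_s\|\nabla f_{i_s}(\xx_s)\|^2$, for AdaSLS).

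Next I divide the one-step inequality by $\eta_t>0$ and sum. By Abel summation (using $\eta_t$ non-increasing and $\|\xx_t-\xx^\star\|^2\le D^2$) the resulting LHS telescopes to at most $D^2/\eta_{T-1}$, while the noise term is controlled by the standard lemma $\sum_t a_t/\sqrt{A_t}\le 2\sqrt{A_T}$. To close the loop I need a matching lower bound on $\eta_t$. Using $L$-smoothness in the form $\|\nabla f_{i_t}(\xx_t)\|^2\le 2L(f_{i_t}(\xx_t)-f_{i_t}^\star)\le 2La_t$, a short induction gives $\eta_t\ge 1/(2Lc_p\sqrt{A_T})$; for AdaSLS, the Armijo condition is automatically satisfied for $\gamma\le 2(1-\rho)/L$, hence $\gamma_t\ge\gamma_{\min} := \min\{2(1-\rho)/L,\gamma_{\max}\}$, and an analogous induction yields $\eta_t\ge\gamma_{\min}/(c_l\sqrt{\Gamma_T})$. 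Combining these pieces produces the \emph{deterministic} inequality $\sum_t(a_t-b_t) \le \tau_p\sqrt{A_T}$ for AdaSPS; for AdaSLS, I additionally invoke Armijo as $\gamma_t\|\nabla f_{i_t}(\xx_t)\|^2\le (f_{i_t}(\xx_t)-f_{i_t}^\star)/\rho$ to convert the $\sqrt{\Gamma_T}$ bound into $\tfrac{1}{\sqrt{\rho}}\sqrt{\sum_t\tilde a_t}$, yielding $\sum_t(\tilde a_t-\tilde b_t)\le\tfrac{\tau_l}{\sqrt{\rho}}\sqrt{\sum_t\tilde a_t}$.

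Only now do I take expectations; because $\eta_t$ depends on $i_t$, no conditional expectation can be taken mid-proof, which is precisely why the previous step must be carried out deterministically. Applying Jensen's inequality $\E[\sqrt{A_T}]\le\sqrt{\E[A_T]}$ together with the identities $\E[a_t-b_t] = f(\xx_t)-f^\star$ and $\E[a_t] = f(\xx_t)-f^\star + \sigma_{f,B}^2+\mathrm{err}_{f,B}^2$ (respectively $\E[\tilde a_t-\tilde b_t]=f(\xx_t)-f^\star$ and $\E[\tilde a_t] = f(\xx_t)-f^\star + \sigma_{f,B}^2$ for AdaSLS), I arrive at $S_T\le\tau_p\sqrt{S_T+T(\sigma_{f,B}^2+\mathrm{err}_{f,B}^2)}$ where $S_T:=\sum_{t=0}^{T-1}(f(\xx_t)-f^\star)\ge 0$. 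Viewing this as a quadratic inequality in $\sqrt{S_T}$ yields $S_T\lesssim\tau_p^2+\tau_p\sqrt{T(\sigma_{f,B}^2+\mathrm{err}_{f,B}^2)}$; dividing by $T$ and applying convexity of $f$ in $f(\bar\xx_T)-f^\star\le S_T/T$ delivers the AdaSPS bound, and the AdaSLS bound follows identically with $\sigma_{f,B}^2$ replacing $\sigma_{f,B}^2+\mathrm{err}_{f,B}^2$. The main obstacle is the second step: tying the stepsize's own denominator $\sqrt{A_T}$ (resp.\ $\sqrt{\Gamma_T}$) to the telescoped term $D^2/\eta_{T-1}$ through the smoothness/Armijo lower bound on $\eta_t$ — this self-consistency is precisely what lets the adaptive stepsize auto-tune and what separates the argument from the strictly prescribed $\cO(1/\sqrt{t})$ schedule of DecSPS.
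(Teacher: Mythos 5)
Your proof follows the same skeleton as the paper's: non-expansiveness of the projection, Abel-type telescoping against the non-increasing stepsize to get $D^2/\eta_{T-1}$, the Wald-type inequality $\sum_t a_t/\sqrt{A_t}\le 2\sqrt{A_T}$, the smoothness-based lower bound on $\eta_t$ to close the self-consistency loop, and finally Jensen plus a quadratic inequality to extract the rate. Two points need fixing, one substantive and one minor.

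\textbf{Individual convexity is not assumed.} The theorem only assumes that the \emph{average} $f$ is convex, while each $f_i$ is merely $L$-smooth. You open with ``convexity of $f_{i_t}$'' to replace $-2\eta_t\lin{\nabla f_{i_t}(\xx_t),\xx_t-\xx^\star}$ by $-2\eta_t\bigl(f_{i_t}(\xx_t)-f_{i_t}(\xx^\star)\bigr)$ in the one-step inequality, and this substitution is what lets you work deterministically with $a_t-b_t$ throughout. That step is not available under the stated hypotheses. The paper instead carries the inner product $\lin{\nabla f_{i_t}(\xx_t),\xx_t-\xx^\star}$ through the deterministic part, telescopes it to the bound $\tau\sqrt{A_{T-1}}$, and only \emph{after} taking expectations uses $\E_{i_t}[\nabla f_{i_t}(\xx_t)]=\nabla f(\xx_t)$ together with convexity of $f$ to turn $\E[\lin{\nabla f(\xx_t),\xx_t-\xx^\star}]$ into $\E[f(\xx_t)-f^\star]$. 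Your shortcut proves the result under a strictly stronger assumption; in fact, the paper's discussion after the theorem emphasizes that SPS/SLS/DecSPS need individual convexity here while AdaSPS/AdaSLS do not, so this distinction is part of the claim.

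\textbf{The backtracking lower bound for AdaSLS.} You argue that since Armijo holds for all $\gamma\le 2(1-\rho)/L$, the returned $\gamma_t$ satisfies $\gamma_t\ge\min\{2(1-\rho)/L,\gamma_{\max}\}$. That is not how backtracking behaves: starting from $\gamma_{\max}$ and multiplying by $\beta$ each time, the accepted $\gamma$ may overshoot into the region by one more factor of $\beta$. Since the paper fixes $\beta\ge\tfrac12$, the guaranteed bound is $\gamma_t\ge\min\{(1-\rho)/L,\gamma_{\max}\}$ -- precisely why $\tau_l$ features $L/(1-\rho)$ rather than $L/(2(1-\rho))$. Your $\gamma_{\min}$ is optimistic by a factor of two.

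Everything else -- the use of $\eta_t\|\nabla f_{i_t}(\xx_t)\|^2\le a_t/(c_p\sqrt{A_t})$, the Armijo conversion $\gamma_t\|\nabla f_{i_t}(\xx_t)\|^2\le (f_{i_t}(\xx_t)-f_{i_t}^\star)/\rho$, the observation that expectations must be deferred because $\eta_t$ is $i_t$-measurable, and the final Jensen/quadratic-inequality step -- matches the paper.
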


As a consequence of Theorem \ref{thm:convex}, if ${\rm err}_{f,B}^2=\sigma_{f,B}^2=0$, then PSGD with AdaSPS or AdaSLS converges as $\cO(\frac{1}{T})$. Suppose $\gamma_{\max}$ is sufficiently large, then picking $c_p^\star=\frac{1}{\sqrt{2LD^2}}$ and $c_l^\star=\frac{\sqrt{1-\rho}}{\sqrt{LD^2}}$ gives a $\cO(\frac{LD^2}{T})$ rate under the interpolation condition, which is slightly worse than $\frac{L||\xx_0-\xx^\star||^2}{T}$ obtained by SPS and SLS but is better than $\cO(\frac{LD^2}{\sqrt{T}})$ obtained by DecSPS. If otherwise $\sigma_{f,B}^2>0$, then AdaSPS, AdaSLS, and DecSPS converge as $\cO(1/\sqrt{T})$ which matches the rate of Vanilla SGD with decreasing stepsize. Finally, AdaGrad-Norm gives a similar rate in both cases while AdaGrad-Norm with line-search \cite{adagrad+ls} shows a suboptimal rate of $\cO(\frac{L^3D^4}{T}+\frac{D^2L^{3/2}\sigma}{\sqrt{T}})$. It is worth noting that SPS, DecSPS and SLS require an additional assumption on individual convexity.  

\begin{theorem}[Individual convex+interpolation]

   Assume that $f$ is convex, each $f_i$ is convex and $L$-smooth, and that ${\rm err}_{f,B}^2=\sigma_{f,B}^2=0$, by setting $c_p=\frac{c_p^{\text{scale}}}{\sqrt{f_{i_0}(\xx_0)-f_{i_0}^\star}}$ and $c_l=\frac{c_l^{\text{scale}}}{\rho\sqrt{\gamma_0||\nabla f_{i_0}(\xx_0)||^2}}$ where the constants $c_p^{\text{scale}}\ge 1$ and $c_p^{\text{scale}}\ge 1$, then for any $T\ge1$, SGD (no projection) with AdaSPS or AdaSLS converges as: 
\begin{equation}
   \text{(AdaSPS)} \quad \E[f(\Bar{\xx}_T)-f^\star]\le \Biggl(4L(c_p^{\text{scale}})^2\E_{i_0}\Bigl[\frac{||\xx_0-\xx^\star||^2}{f_{i_0}(\xx_0)-f^\star}\Bigr]\Biggr)\frac{L||\xx_0-\xx^\star||^2}{T}\;.
\end{equation}
and 
\begin{equation}
   \text{(AdaSLS)} \quad \E[f(\Bar{\xx}_T)-f^\star]\le \Biggl(\frac{(c_l^{\text{scale}})^2}{\rho^3L\min^2\{\frac{1-\rho}{L},\gamma_{\max}\}}\E_{i_0}\Bigl[\frac{||\xx_0-\xx^\star||^2}{\gamma_0||\nabla f_{i_0}(\xx_0)||^2}\Bigr]\Biggr)\frac{L||\xx_0-\xx^\star||^2}{T}\;.
\end{equation}
where $\Bar{\xx}_T=\frac{1}{T}\sum_{t=1}^{T}\xx_t$,
\label{theorem:convex+interp}
\end{theorem}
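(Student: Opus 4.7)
The plan is to run the standard (unprojected) SGD descent argument and exploit the adaptive stepsize twice: once as an \emph{upper} bound on $\eta_t$ to absorb the quadratic $\eta_t^2\|\nabla f_{i_t}(\xx_t)\|^2$ term in the recursion, and once as a \emph{lower} bound (via smoothness or the Armijo lemma together with the monotonicity induced by the outer $\min$ with $\eta_{t-1}$) to enable an AdaGrad-type telescoping. The role of the scalings $c_p^{\text{scale}}\ge 1$ and $c_l^{\text{scale}}\ge 1$ is precisely to make the cancellation of the quadratic term work uniformly in $t$, not only at $t=0$. Under the assumption $\sigma_{f,B}^2 = {\rm err}_{f,B}^2 = 0$, it holds a.s.\ that $\ell_{i_t}^\star = f_{i_t}^\star = f_{i_t}(\xx^\star)$ and $\nabla f_{i_t}(\xx^\star)=0$, which is used silently below.

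\textbf{AdaSPS.} Write $a_t := f_{i_t}(\xx_t) - f_{i_t}^\star$ and $S_t := \sum_{s\le t}a_s$. Individual convexity gives
\[
\|\xx_{t+1}-\xx^\star\|^2 \le \|\xx_t-\xx^\star\|^2 - 2\eta_t a_t + \eta_t^2\|\nabla f_{i_t}(\xx_t)\|^2.
\]
The definition implies $\eta_t\|\nabla f_{i_t}\|^2 \le a_t/(c_p\sqrt{S_t})$; the choice $c_p = c_p^{\text{scale}}/\sqrt{a_0}$ with $c_p^{\text{scale}}\ge 1$ together with $S_t\ge S_0 = a_0$ forces $c_p\sqrt{S_t}\ge 1$, so $\eta_t^2\|\nabla f_{i_t}\|^2 \le \eta_t a_t$ and the recursion collapses to $\|\xx_{t+1}-\xx^\star\|^2 \le \|\xx_t-\xx^\star\|^2 - \eta_t a_t$. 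Combining smoothness $\|\nabla f_{i_t}\|^2 \le 2La_t$ with the fact that the outer $\min$ makes $\eta_t$ the running minimum of the raw formula yields the pointwise lower bound $\eta_t \ge 1/(2Lc_p\sqrt{S_t})$. Summing, and using the telescoping inequality $\sum_{t=0}^{T-1}a_t/\sqrt{S_t}\ge \sqrt{S_{T-1}}$, gives $S_{T-1}\le 4L^2c_p^2\|\xx_0-\xx^\star\|^4$. Taking total expectation with the tower property (interpolation yields $\E[a_t\mid \xx_t] = f(\xx_t)-f^\star$), applying Jensen for $\bar\xx_T$, and substituting $c_p^2 = (c_p^{\text{scale}})^2/a_0$ followed by averaging over $i_0$ produces the claimed bound.

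\textbf{AdaSLS.} Let $b_t := \gamma_t\|\nabla f_{i_t}(\xx_t)\|^2$ and $R_t := \sum_{s\le t} b_s$. The Armijo condition together with $f_{i_t}(\cdot)\ge f_{i_t}^\star$ gives $\rho b_t \le a_t$. The stepsize formula yields $\eta_t\|\nabla f_{i_t}\|^2 \le b_t/(c_l\sqrt{R_t})$, and the scaling $c_l = c_l^{\text{scale}}/(\rho\sqrt{b_0})$ with $c_l^{\text{scale}}\ge 1$ gives $c_l\sqrt{R_t}\ge 1/\rho$, hence $\eta_t^2\|\nabla f_{i_t}\|^2 \le \rho\eta_t b_t \le \eta_t a_t$. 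The descent again simplifies to $\|\xx_{t+1}-\xx^\star\|^2 \le \|\xx_t-\xx^\star\|^2 - \eta_t a_t$, implying $\|\xx_t-\xx^\star\|\le \|\xx_0-\xx^\star\|$ and $\sum \eta_t a_t \le \|\xx_0-\xx^\star\|^2$. The standard Armijo-with-smoothness lemma gives $\gamma_t \ge \gamma_{\min} := \min\{(1-\rho)/L,\gamma_{\max}\}$, and the $\min$ monotonicity then yields $\eta_t \ge \gamma_{\min}/(c_l\sqrt{R_t})$. To convert $\sum\eta_t a_t$ into $\sum a_t$, I would apply Cauchy--Schwarz
\[
\bigl(\textstyle\sum a_t\bigr)^2 \le \bigl(\textstyle\sum\eta_t a_t\bigr)\bigl(\textstyle\sum a_t/\eta_t\bigr) \le \|\xx_0-\xx^\star\|^2 \cdot \frac{c_l\sqrt{R_{T-1}}}{\gamma_{\min}}\sum a_t,
\]
and then control $R_{T-1}$ by a parallel use of the descent: $\sum\rho\eta_t b_t \le \sum\eta_t a_t \le \|\xx_0-\xx^\star\|^2$ together with $\eta_t b_t\ge \gamma_{\min} b_t/(c_l\sqrt{R_t})$ and the AdaGrad lemma $\sum b_t/\sqrt{R_t}\ge \sqrt{R_{T-1}}$ give $\sqrt{R_{T-1}}\le c_l\|\xx_0-\xx^\star\|^2/(\rho\gamma_{\min})$. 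Plugging back yields $\sum a_t \le c_l^2\|\xx_0-\xx^\star\|^4/(\rho\gamma_{\min}^2)$. Expectation, tower property, Jensen, and the substitution $c_l^2 = (c_l^{\text{scale}})^2/(\rho^2 b_0)$ reproduce the stated constants.

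\textbf{Main obstacle.} The delicate part is the AdaSLS chain $\sum\eta_t a_t \Rightarrow \sum a_t$: the Cauchy--Schwarz detour introduces $\sqrt{R_{T-1}}$, which itself must be controlled by a \emph{second} pass through the same descent inequality — this time isolating $b_t$ rather than $a_t$ — followed by a second application of the AdaGrad telescoping lemma. A secondary subtlety is verifying that $c_p^{\text{scale}}\ge 1$ and $c_l^{\text{scale}}\ge 1$ really enable the quadratic term to be absorbed at \emph{every} iterate and not just at $t=0$; this hinges on the monotonicity of $S_t$ (resp.\ $R_t$) combined with the outer $\min$ against $\eta_{t-1}$ in the stepsize definition, which together guarantee that the lower bounds on $c_p\sqrt{S_t}$ and $c_l\sqrt{R_t}$ propagate for all $t$.
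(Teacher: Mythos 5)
Your proof is correct and reaches the stated bounds, but the route differs from the paper in a couple of places worth flagging. The paper retains the inner product $\langle\nabla f_{i_t}(\xx_t),\xx_t-\xx^\star\rangle$ in the descent recursion, lower-bounds $\eta_{T-1}$ explicitly (via the telescoping lemma), replugs this constant into the recursion, and only then takes the expectation \emph{conditional on $i_0$} so that $\E_{i_t}[\langle\nabla f_{i_t}(\xx_t),\xx_t-\xx^\star\rangle\,|\,i_0]\ge \E[f(\xx_t)-f^\star\,|\,i_0]$ for $t\ge 1$ (this is why the paper's average runs over $t=1,\dots,T$). You instead pass to $a_t=f_{i_t}(\xx_t)-f_{i_t}^\star$ at the outset via individual convexity, close the argument \emph{pathwise} — $S_{T-1}\le 4L^2c_p^2\|\xx_0-\xx^\star\|^4$ on every sample path — and only afterwards take a single unconditional expectation, using $\E[a_t\,|\,\xx_t]=f(\xx_t)-f^\star$ (valid since $\sigma_{f,B}^2=0$) together with the observation that $c_p^2=(c_p^{\mathrm{scale}})^2/(f_{i_0}(\xx_0)-f^\star)$ is random through $i_0$, whence $\E_{i_0}[1/(f_{i_0}(\xx_0)-f^\star)]$ appears. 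This is a slightly cleaner accounting and avoids the conditional-expectation bookkeeping; it does yield the average over $t=0,\dots,T-1$ rather than $t=1,\dots,T$ as stated in the theorem, a cosmetic mismatch that is easily repaired by summing the pathwise inequality through $t=T$. For AdaSLS, the Cauchy--Schwarz detour
\begin{equation}
\Bigl(\sum a_t\Bigr)^2\le\Bigl(\sum\eta_t a_t\Bigr)\Bigl(\sum a_t/\eta_t\Bigr)
\end{equation}
is unnecessary: once you have $\|\xx_{t+1}-\xx^\star\|^2\le\|\xx_t-\xx^\star\|^2-\eta_t a_t$ with $a_t\ge 0$ and $\eta_t$ non-increasing, simply summing with $\eta_t\ge\eta_{T-1}$ gives $\eta_{T-1}\sum a_t\le\|\xx_0-\xx^\star\|^2$ directly, which is what the paper does; you then invoke your "second pass" bound $\sqrt{R_{T-1}}\le c_l\|\xx_0-\xx^\star\|^2/(\rho\gamma_{\min})$ — that step is essential in both proofs and is identical to the paper's. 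Net effect: both proofs rest on the same three ingredients (quadratic-term absorption via $c^{\mathrm{scale}}\ge 1$, the smoothness/Armijo lower bound on $\eta_t$, and the AdaGrad telescoping inequality), with your version trading the paper's conditional-expectation-plus-replug scheme for a pathwise argument that is arguably more transparent, at the cost of one redundant Cauchy--Schwarz in the AdaSLS branch.
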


The result above implies that the bounded iterates assumption is not needed for individual convexity and interpolated settings by picking $c_p$ and $c_l$ to satisfy certain conditions that do not depend on unknown parameters. To our  knowledge, no such result exists for stepsizes from the AdaGrad family. It is also worth noting that the $\min$ operator defined in AdaSPS or AdaSLS is not needed in the proof.

\begin{remark}
    We note that for non-interpolated problems, AdaSPS only requires the knowledge of $\ell_{i_t}^\star$ while the exact $f_{i_t}^\star$ is needed under the interpolation condition. We argue that in many standard machine learning problems, simply picking zero will suffice. For instance, $f_{i_t}^\star=0$ for over-parameterized logistic regression and  after adding a regularizer, $\ell_{i_t}^\star=0$.
\end{remark}

\textbf{Strongly convex.} We now present two algorithmic behaviors of AdaSPS and AdaSLS for strongly convex functions. In particular,
We show that 1) the projection step can be removed as shown in DecSPS, and 2) if the interpolation condition holds, the $\min$ operator is not needed and the asymptotic linear convergence rate is preserved. The full statement of Lemma \ref{lemma:bounded iterates} can be found in Appendix \ref{appendix:sec:lemma3}.
\begin{lemma}[Bounded iterates]
    Let each $f_i$ be $\mu$-strongly convex and $L$-smooth. For any $t=0, \dots, T$, the iterates of SGD with AdaSPS or AdaSLS satisfy: $||\xx_t-\xx^\star||^2\le D_{\max}$, for a constant $D_{\max}$ specified in the appendix in Equation~\eqref{eq:dmax}.
    \label{lemma:bounded iterates}
\end{lemma}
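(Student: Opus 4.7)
The plan is to derive, for both stepsize choices, a contractive recursion of the form $a_{t+1} \le (1 - \eta_t \mu) a_t + \eta_t C$ where $a_t := \|\xx_t - \xx^\star\|^2$ and $C$ is a deterministic problem-dependent constant, and then conclude $a_t \le D_{\max}$ by induction with $D_{\max} := \max\{\|\xx_0 - \xx^\star\|^2, C/\mu\}$ (up to a constant factor handling edge cases).

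I would start from the SGD update and apply $\mu$-strong convexity of $f_{i_t}$, which gives
\begin{equation*}
a_{t+1} \le (1 - \eta_t \mu) a_t - 2\eta_t (f_{i_t}(\xx_t) - f_{i_t}(\xx^\star)) + \eta_t^2 \|\nabla f_{i_t}(\xx_t)\|^2 \,.
\end{equation*}
For AdaSPS, I would split $f_{i_t}(\xx_t) - f_{i_t}(\xx^\star) = h_t - (f_{i_t}(\xx^\star) - \ell_{i_t}^\star)$, where $h_t := f_{i_t}(\xx_t) - \ell_{i_t}^\star \ge 0$, upper-bounding the cross-term by $2\eta_t K$ with $K := \max_i (f_i(\xx^\star) - \ell_i^\star)$ a finite deterministic constant. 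The noise term is controlled by the stepsize structure: the definition gives $\eta_t \|\nabla f_{i_t}(\xx_t)\|^2 \le h_t / (c_p \sqrt{\sum_{s \le t} h_s}) \le \sqrt{h_t}/c_p$, so $\eta_t^2 \|\nabla f_{i_t}(\xx_t)\|^2 \le \eta_t \sqrt{h_t}/c_p$. Combining the term $-2\eta_t h_t$ with this noise bound via Young's inequality (maximizing the quadratic in $\sqrt{h_t}$) absorbs both into a constant residual of order $\eta_t/(c_p^2)$. For AdaSLS, the analogous steps use $\tilde h_t := f_{i_t}(\xx_t) - f_{i_t}^\star$ together with the Armijo condition $\gamma_t \|\nabla f_{i_t}(\xx_t)\|^2 \le \rho^{-1} \tilde h_t$; combined with the AdaSLS formula this yields $\eta_t \|\nabla f_{i_t}(\xx_t)\|^2 \le \sqrt{\tilde h_t}/(c_l \sqrt{\rho})$, and the same Young's inequality argument applies, while the cross-term is bounded by $\Delta_{\max} := \max_i (f_i(\xx^\star) - f_i^\star)$.

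Putting everything together produces the target recursion $a_{t+1} \le (1 - \eta_t \mu) a_t + \eta_t C$, and the induction closes straightforwardly in the regime $\eta_t \le 1/\mu$: if $a_t \le D_{\max}$, then $a_{t+1}\le(1-\eta_t\mu)D_{\max}+\eta_t C\le D_{\max}$. The edge case $\eta_t > 1/\mu$ is handled using the monotonicity $\eta_t \le \eta_0$ together with a deterministic upper bound on $\eta_0$ obtained from $L$-smoothness at $\xx_0$ (which bounds $\|\nabla f_{i_0}(\xx_0)\|$ from below relative to $\sqrt{h_0}$ via the PL inequality implied by strong convexity). The principal obstacle is the coupling between the history-dependent stepsize and the contractive dynamics: since $\sqrt{\sum_{s \le t} h_s}$ can be small at early iterations and the noise term only scales as $\sqrt{h_t}$ rather than $h_t$, closing the induction with an explicit deterministic $D_{\max}$ requires carefully combining the Young's inequality absorption, stepsize monotonicity, and a worst-case bound on $\eta_0$ to pin down all the constants entering $D_{\max}$.
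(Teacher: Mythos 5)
Your proposal reaches the same recursion $a_{t+1}\le(1-\eta_t\mu)a_t+\eta_t C$ as the paper, starting from the same strong-convexity expansion of the SGD step and the same decomposition $-2\eta_t(f_{i_t}(\xx_t)-f_{i_t}(\xx^\star))=2\eta_t(f_{i_t}(\xx^\star)-\ell_{i_t}^\star)-2\eta_t h_t$, but the absorption of the noise term is genuinely different. The paper bounds $\eta_t^2\|\nabla f_{i_t}(\xx_t)\|^2\le \eta_t\,h_t/\bigl(c_p\sqrt{\sum_{s\le t}h_s}\bigr)$ and then does a case split on whether $c_p\sqrt{\sum_{s\le t}h_s}$ is above or below $1/2$: in the small case it crudely bounds $h_t\le (2c_p)^{-2}$ and $\sum_{s\le t}h_s\ge h_0$ to get the residual $b=1/\bigl(4c_p^3\sqrt{h_0}\bigr)$, while in the large case the $-2\eta_t h_t$ term dominates and the residual vanishes. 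You instead use $\sum_{s\le t}h_s\ge h_t$ to get $\eta_t^2\|\nabla f_{i_t}(\xx_t)\|^2\le \eta_t\sqrt{h_t}/c_p$, and then a single Young-style maximization of $-2h_t+\sqrt{h_t}/c_p$ over $h_t\ge 0$ yields the residual $1/(8c_p^2)$; the analogous computation for AdaSLS via the Armijo bound $\rho\gamma_t\|\nabla f_{i_t}(\xx_t)\|^2\le\tilde h_t$ gives $1/(8c_l^2\rho)$. This is cleaner: one step instead of two cases, and the resulting constant does not depend on the first-iterate gap $h_0=f_{i_0}(\xx_0)-\ell_{i_0}^\star$, which the paper's does.

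One small place where your write-up is more complicated than necessary is the edge case $\eta_t>1/\mu$. The paper closes this via its Lemma~\ref{lemma:decsps}: since $\eta_t$ is non-increasing, there is at most one index $j$ at which the sign of $1-\mu\eta_t$ flips, and for $t<j$ one simply has $a_{t+1}\le\eta_t C\le\eta_0 C$, so $\eta_0 C$ is placed directly into $D_{\max}$ as a third term. You propose instead to bound $\eta_0$ deterministically via $L$-smoothness or the PL/strong-convexity lower bound on $\|\nabla f_{i_0}(\xx_0)\|$; this is not wrong, but it is unnecessary (and the chain of inequalities you sketch, e.g.\ bounding $\eta_0$ using $h_0$ versus $\tilde h_0=f_{i_0}(\xx_0)-f_{i_0}^\star$, would need to be cleaned up since $h_0\ge\tilde h_0$ points the wrong way). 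Simply carrying $\eta_0 C$ into $D_{\max}$, as the paper does, closes the induction with no further assumptions on $\xx_0$.
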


\begin{corollary}[Individual strongly convex]
    Assume each $f_i$ is $\mu$-strongly convex and $L$-smooth, Theorem~\ref{thm:convex} holds with PSGD and $D$ replaced by SGD and $D_{\max}$ defined in Lemma \ref{lemma:bounded iterates}.
\end{corollary}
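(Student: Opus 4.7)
The plan is to follow the argument of Theorem~\ref{thm:convex} line by line, replacing the two roles played by the projected set $\cX$ with the automatic control of iterates guaranteed by Lemma~\ref{lemma:bounded iterates}.

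First, I would inspect where the projection $\Pi_\cX$ enters the original proof. It appears only via non-expansiveness, in the one-step identity
\begin{equation*}
\|\xx_{t+1}-\xx^\star\|^2 = \|\Pi_\cX(\xx_t-\eta_t\nabla f_{i_t}(\xx_t))-\xx^\star\|^2 \le \|\xx_t-\eta_t\nabla f_{i_t}(\xx_t)-\xx^\star\|^2.
\end{equation*}
When PSGD is replaced by SGD, this step becomes an equality and the rest of the per-step descent lemma for AdaSPS/AdaSLS is unchanged, since the stepsize formulas do not involve $\cX$. Hence the removal of the projection is cost-free at the descent level.

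Second, I would isolate every occurrence of the diameter constant $D^2$ in the proof of Theorem~\ref{thm:convex}. These arise when bounding $\|\xx_t-\xx^\star\|^2$ uniformly in $t$, in particular inside the telescoping argument that produces the residual $\sum_{t}(\tfrac{1}{\eta_t}-\tfrac{1}{\eta_{t-1}})\|\xx_t-\xx^\star\|^2$ and inside the estimates of the adaptive denominators $\sqrt{\sum_s (f_{i_s}(\xx_s)-\ell_{i_s}^\star)}$ and $\sqrt{\sum_s \gamma_s\|\nabla f_{i_s}(\xx_s)\|^2}$ that give the constants $\tau_p$ and $\tau_l$. Under $\mu$-strong convexity, Lemma~\ref{lemma:bounded iterates} supplies a deterministic, pathwise bound $\|\xx_t-\xx^\star\|^2 \le D_{\max}$ for all $t$, so I can substitute $D_{\max}$ for $D^2$ wherever it appears. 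Consequently the same chain of inequalities yields the bounds of~\eqref{eq:thm:convex} with
\begin{equation*}
\tau_p = 2c_p L D_{\max} + \tfrac{1}{c_p}, \qquad \tau_l = \max\Bigl\{\tfrac{L}{(1-\rho)\sqrt{\rho}},\tfrac{1}{\gamma_{\max}\sqrt{\rho}}\Bigr\}c_l D_{\max} + \tfrac{1}{c_l\sqrt{\rho}}.
\end{equation*}

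The rest of the argument is a transcription of the proof of Theorem~\ref{thm:convex}: convexity of $f$ together with the AdaSPS/AdaSLS definitions of $\eta_t$ converts the descent inequality into a telescoping sum, and the concavity of $\sqrt{\cdot}$ together with the adaptive denominators bounds the accumulated noise, producing the two rates in~\eqref{eq:thm:convex}. The main obstacle I anticipate is a purely bookkeeping one: one must check that the bound from Lemma~\ref{lemma:bounded iterates} really is deterministic and uniform in $t$ (not just in expectation), so that the substitution $D^2\mapsto D_{\max}$ can be carried out inside the expectation without additional Cauchy--Schwarz steps; the statement in Lemma~\ref{lemma:bounded iterates} is pathwise, so this poses no issue and the corollary follows.
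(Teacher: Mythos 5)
Your argument is correct and matches exactly what the paper intends (it gives no explicit proof of the corollary, treating it as immediate from Lemma~\ref{lemma:bounded iterates}): the projection enters the proof of Theorem~\ref{thm:convex} only through non-expansiveness and through the uniform bound $\|\xx_t-\xx^\star\|^2\le D^2$, and under individual strong convexity Lemma~\ref{lemma:bounded iterates} supplies the pathwise bound $\|\xx_t-\xx^\star\|^2\le D_{\max}$, so the same chain of inequalities goes through verbatim with $D^2$ replaced by $D_{\max}$.
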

Although it has not been formally demonstrated that AdaGrad/AdaGrad-Norm can relax the assumption on bounded iterates for strongly convex functions, we believe that with a similar proof technique, this property still holds for AdaGrad/AdaGrad-Norm.

We next show that AdaSPS and AdaSLS achieve linear convergence under the interpolation condition. 
\begin{theorem}[Strongly convex + individual convex + interpolation]
    Consider SGD with AdaSPS \eqref{AdaSPS} or AdaSLS \eqref{AdaSLS} stepsize. Suppose that each $f_i$ is convex and $L$-smooth and that $\sigma_{f,B}^2={\rm err}_{f,B}^2=0$. If we let $c_p=\frac{c_p^{\text{scale}}}{\sqrt{f_{i_0}(\xx_0)-f_{i_0}^\star}}$ and $c_l=\frac{c_l^{\text{scale}}}{\rho\sqrt{\gamma_0||\nabla f_{i_0}(\xx_0)||^2}}$ where the constants $c_p^{\text{scale}}\ge 1$ and $c_p^{\text{scale}}\ge 1$, then for any $T\ge 1$, it holds that:
    \begin{equation}
    \text{(AdaSPS)} \quad \E[||\xx_{T+1}-\xx^\star||^2]\le\E_{i_0}\Bigg[\Bigl(1-\frac{(f_{i_0}(\xx_0)-f^\star)\mu}{(2c_p^{\text{scale}}L||\xx_0-\xx^\star||)^2}\Bigr)^T\Biggr]||\xx_{0}-\xx^\star||^2\;,
    \end{equation}
    and
    \begin{equation}
    \text{(AdaSLS)} \quad \E[||\xx_{T+1}-\xx^\star||^2]\le\E_{i_0}\Biggl[\Bigl(1-\frac{\mu\rho^3\min^2\{\frac{1-\rho}{L},\gamma_{\max}\}\gamma_0||\nabla f_{i_0}(\xx_0)||^2}{(c_l^{\text{scale}}||\xx_0-\xx^\star||)^2}\Bigr)^T\Biggr]||\xx_0-\xx^\star||^2\;.
    \end{equation}
\label{thm:stconvex}
\end{theorem}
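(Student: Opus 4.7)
The plan is to bootstrap the AdaGrad-style summation argument used for the convex interpolated case (Theorem~\ref{theorem:convex+interp}) by combining it with $\mu$-strong convexity of $f$. The crucial observation is that under interpolation the accumulator $S_t := \sum_{s=0}^{t}(f_{i_s}(\xx_s) - f_{i_s}^\star)$ appearing in AdaSPS (respectively $Q_t := \sum_{s=0}^{t}\gamma_s\|\nabla f_{i_s}(\xx_s)\|^2$ for AdaSLS) is uniformly bounded in $t$, so that $\eta_t$ is effectively lower bounded by a constant depending only on problem parameters and on the initial iterate. This is what turns the nominal $\cO(1/T)$ rate into a geometric one.

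For AdaSPS, I would first check that the choice $c_p = c_p^{\text{scale}}/\sqrt{f_{i_0}(\xx_0)-f_{i_0}^\star}$ with $c_p^{\text{scale}} \ge 1$ guarantees $c_p\sqrt{S_t} \ge 1$ for every $t$, so that the outer $\min$-clipping with $\eta_{t-1}$ is inactive and the analytic formula may be substituted directly (as in Theorem~\ref{theorem:convex+interp}). Expanding the standard recursion, using individual convexity of $f_{i_t}$, the identity $\eta_t\|\nabla f_{i_t}(\xx_t)\|^2 = (f_{i_t}(\xx_t) - f_{i_t}^\star)/(c_p\sqrt{S_t})$, and the smoothness-convexity bound $(f_{i_t}(\xx_t) - f_{i_t}^\star)/\|\nabla f_{i_t}(\xx_t)\|^2 \ge 1/(2L)$ then produces the per-step descent
\begin{equation*}
\|\xx_{t+1} - \xx^\star\|^2 \;\le\; \|\xx_t - \xx^\star\|^2 - \frac{f_{i_t}(\xx_t) - f_{i_t}^\star}{2Lc_p\sqrt{S_t}}.
\end{equation*}
Telescoping together with the elementary inequality $\sum_{t=0}^{T-1} a_t/\sqrt{\sum_{s=0}^{t} a_s} \ge \sqrt{\sum_{t=0}^{T-1} a_t}$ delivers the uniform bound $\sqrt{S_t} \le 2Lc_p\|\xx_0-\xx^\star\|^2$. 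Re-substituting this constant into the per-step descent, taking conditional expectation (interpolation gives $\E_{i_t}[f_{i_t}(\xx_t) - f_{i_t}^\star] = f(\xx_t) - f^\star$), and applying $\mu$-strong convexity produces the per-step geometric contraction; inserting the chosen $c_p$, iterating $T$ times, and taking outer expectation over $i_0$ yields the announced bound.

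The AdaSLS case is structurally identical with the Armijo condition~\eqref{eq:Armijo} playing the role of the Polyak identity. Individual convexity gives $\langle\nabla f_{i_t}(\xx_t), \xx_t-\xx^\star\rangle \ge f_{i_t}(\xx_t) - f_{i_t}^\star \ge \rho\gamma_t\|\nabla f_{i_t}(\xx_t)\|^2$, and the choice $c_l = c_l^{\text{scale}}/(\rho\sqrt{\gamma_0\|\nabla f_{i_0}(\xx_0)\|^2})$ with $c_l^{\text{scale}} \ge 1$ ensures $\rho c_l\sqrt{Q_t} \ge 1$ throughout, which is exactly the threshold needed to absorb the quadratic term $\eta_t^2\|\nabla f_{i_t}(\xx_t)\|^2$. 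The classical line-search lower bound $\gamma_t \ge \min\{(1-\rho)/L,\gamma_{\max}\}$ converts $\gamma_t(f_{i_t}(\xx_t) - f_{i_t}^\star)$ into a decrease of the expected order, and the same AdaGrad-style telescoping, now applied to $\sum_t \gamma_t\|\nabla f_{i_t}(\xx_t)\|^2/\sqrt{Q_t}$ after invoking the Armijo lower bound, yields $\sqrt{Q_t} \le c_l\|\xx_0-\xx^\star\|^2/(\rho\min\{(1-\rho)/L,\gamma_{\max}\})$; plugging this back and invoking strong convexity gives the geometric rate with the announced $\rho^3\min^2\{(1-\rho)/L,\gamma_{\max}\}$ factor.

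The main obstacle is the bootstrap: the same accumulator is used twice, first to obtain a uniform upper bound on $\sqrt{S_t}$ (respectively $\sqrt{Q_t}$) from the summed descent, and then fed back into the per-step descent to obtain a constant effective stepsize. Carrying this through cleanly requires the initial scaling of $c_p$ (respectively $c_l$) to be chosen so that the outer $\min$-clipping never activates, which is precisely why the announced contraction factor depends on $(f_{i_0}(\xx_0) - f^\star)/\|\xx_0 - \xx^\star\|^2$ (respectively $\gamma_0\|\nabla f_{i_0}(\xx_0)\|^2/\|\xx_0 - \xx^\star\|^2$) rather than a clean $\mu/L$ condition-number factor, and why the final bound takes the $\E_{i_0}[(1-\cdot)^T]$ form rather than a deterministic $(1-\cdot)^T$.
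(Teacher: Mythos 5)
Your overall strategy — showing that under interpolation the accumulator is uniformly bounded, so that the stepsize is bounded below by a constant, and then unrolling a per-step descent — is exactly the route the paper takes. The paper presents it as a lower bound on $\eta_{T-1}$ rather than an upper bound on $\sqrt{S_{T-1}}$, but the telescoping inequality, the use of \eqref{eq:wald}, and the resulting quantity $\sqrt{S_{T-1}}\le 2Lc_p\|\xx_0-\xx^\star\|^2$ are identical. The bootstrap is correct and self-consistent; the AdaSLS branch, including the factor $\rho^3\min^2\{\frac{1-\rho}{L},\gamma_{\max}\}$, also lines up.

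There is, however, a genuine (if small) quantitative gap: by applying individual convexity to the \emph{entire} inner product and reducing the per-step descent to $\|\xx_{t+1}-\xx^\star\|^2 \le \|\xx_t-\xx^\star\|^2-\eta_t\bigl(f_{i_t}(\xx_t)-f_{i_t}^\star\bigr)$, you throw away curvature information that the theorem's constant actually relies on. Taking conditional expectation and then using $f(\xx_t)-f^\star\ge \tfrac{\mu}{2}\|\xx_t-\xx^\star\|^2$ yields a contraction factor proportional to $\tfrac{\mu}{2}$, whereas the announced bounds carry a full $\mu$. The paper instead uses individual convexity only to absorb the quadratic term (i.e.\ $\eta_t^2\|\nabla f_{i_t}(\xx_t)\|^2\le\eta_t\bigl(f_{i_t}(\xx_t)-f_{i_t}^\star\bigr)\le\eta_t\lin{\nabla f_{i_t}(\xx_t),\xx_t-\xx^\star}$), leaving a full copy of $-\eta_t\lin{\nabla f_{i_t}(\xx_t),\xx_t-\xx^\star}$ in the recursion. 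After expectation this gives $\lin{\nabla f(\xx_t),\xx_t-\xx^\star}\ge f(\xx_t)-f^\star+\tfrac{\mu}{2}\|\xx_t-\xx^\star\|^2\ge\mu\|\xx_t-\xx^\star\|^2$, recovering the factor $\mu$. You should keep the inner product in the final recursion (and only switch to $f_{i_t}(\xx_t)-f_{i_t}^\star$ inside the telescoping step that bounds $\sqrt{S_{T-1}}$) to match the stated constants exactly. As a minor point, your claim that the $\min$-clipping is \emph{inactive} is not established; what the argument actually needs is only that $\eta_t\le\tfrac{f_{i_t}(\xx_t)-f_{i_t}^\star}{c_p\|\nabla f_{i_t}(\xx_t)\|^2\sqrt{S_t}}$ and $\eta_t\ge\tfrac{1}{2c_pL\sqrt{S_t}}$, both of which hold with the $\min$ in place (Lemma~\ref{lemma:step_bound_adasps}); whether the clipping ever triggers is irrelevant.
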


We now compare the above results with the other stepsizes. Under the same settings, DecSPS has a slower $\cO(1/\sqrt{T})$ convergence rate due to the usage of $\cO(1/\sqrt{t})$ decay stepsize. While AdaGrad-Norm does have a linear acceleration phase when the accumulator grows large, to avoid an $\cO(1/\epsilon)$ slow down, the parameters of AdaGrad-Norm have to satisfy $c_g<b_0/L$, which requires the knowledge of Lipschitz constant \cite{linear-AdaGrad}. Instead, the conditions on $c_p$ and $c_l$ for AdaSPS and AdaSLS only depend on the function value and gradient norm at $\xx_0$ which can be computed at the first iteration. SPS, SLS and Vannilia-SGD with constant stepsize achieve fast linear convergence with a better conditioning number $\exp(-\frac{\mu}{L}T)$. It is worth noting that Vannila-SGD can further remove the individual convexity assumption. 

\textbf{Discussion.} In non-interpolation regimes, AdaSPS and AdaSLS only ensure a slower $\cO(1/\sqrt{t})$ convergence rate compared with $\cO(1/t)$ rate achieved by vanilla SGD with $\cO(1/t)$ decay stepsize when optimizing strongly-convex functions \cite{large_opt_book}. To our knowledge, no parameter-free adaptive stepsize exists that achieves such a fast rate under the same assumptions. While one may consider removing the square root operator defined in the denominator, the knowledge of $\mu$ constant is still necessary for setting the stepsize \cite{log-AdaGrad} and the resulting algorithm is no longer a unified adaptive algorithm. Therefore, discovering an accumulation rule that can adapt to both convex and strongly-convex functions would be a significant further contribution.

\section{AdaSPS and AdaSLS with variance-reduction}
\label{sec:adasvrps}

The classical VR algorithms such as SVRG \cite{VR_zhang} construct a sequence of variance-reduced stochastic gradients: $g_t:=\nabla f_{i_t}(\xx_t)+\nabla f(\ww_t)-\nabla f_{i_t}(\ww_t)$ where $\ww_t$ denotes some past iterate. This  sequence has diminishing variance, i.e.\ $\E[||g_t-\nabla f(\xx_t)||^2]\to 0$ as $\xx_t\to\xx^\star$. While the algorithms from the AdaGrad family  have been successfully applied to setting the stepsize for $g_t$ \cite{ada-svrg}, intuitive SPS/SLS-like stepsizes can instead prevent convergence (in Appendix \ref{appendix:sec:counter_ex} we provide counter-examples of SPS and its variants, \cite{ada-svrg} provide a counter-example for line-search methods). 

\subsection{Algorithm design: achieving variance-reduction without interpolation}
It is known that adaptive methods such as SPS or SLS converge linearly on problems where the interpolation condition holds, i.e.\ $f(\xx)$ with $\sigma_{f,B}=0$.

For problems that do not satisfy the interpolation condition, our approach is to transition the problem to an equivalent one that satisfies the interpolation condition.
One such transformation is to shift each individual function by the gradient of $f_i(\xx)$ at $\xx^\star$, i.e.\ $F_i(\xx)=f_i(\xx)-\xx^T\nabla f_i(\xx^\star)$. In this case $f(\xx)$ can be written as $f(\xx)=\frac{1}{n}\sum_{i=1}^n F_i(\xx)$ due to the fact that $\frac{1}{n}\sum_{i=1}^n\nabla f_i(\xx^\star)=0$. Note that $\nabla F_i(\xx^\star)=\nabla f_i(\xx^\star)-\nabla f_i(\xx^\star)=0$ which implies that each $F_i(\xx)$ shares the same minimizer and thus the interpolation condition is  satisfied ($\sigma_{f,1}^2=0$). 
However, $\nabla f_i(\xx^\star)$ is usually not available at hand. This motivates us to design the following algorithm.

\subsection{Algorithms and convergence}
Inspired by this observation, we attempt to reduce the variance of the functions $\sigma_{f,B}^2$ by constructing a sequence of random functions $\{F_{i_t}(\xx)\}$ such that $\sigma_{\frac{1}{n}\sum_{i=1}^nF_{i_t}(\xx),B}^2\to0$ as $\xx_t\to\xx^\star$. A similar idea can also be found in the works on federated learning with variance-reduction \cite{feddyn,scaffold,proxskip}. However, directly applying SPS or SLS to $\{F_{i_t}(\xx)\}$ still requires the knowledge of the Lipschitz constant to guarantee convergence. We address this problem by using our proposed AdaSPS and AdaSLS. The whole procedure of the final algorithm is summarized in Algorithm~\ref{AdaVR}.

\begin{algorithm}[tb]
\begin{algorithmic}[1]
\Require $\xx_0\in\R^d$, $\mu_F>0$, $c_p>0$ or $c_l>0$
\State set $\ww_{0}=\xx_0$, $\eta_{-1}=+\infty$
\For{$t=0$ to $T-1$}
\State uniformly sample $i_t\subseteq[n]$\smallskip
\State set $F_{i_t}(\xx)=f_{i_t}(\xx)+\xx^T(\nabla f(\ww_t)-\nabla f_{i_t}(\ww_t))+\frac{\mu_F}{2}||\xx-\xx_t||^2$\smallskip
\State $\eta_t=\min\Bigl\{\frac{F_{i_t}(\xx_t)-F_{i_t}^\star}{c_p||\nabla F_{i_t}(\xx_t)||^2}\frac{1}{\sqrt{\sum_{s=0}^t F_{i_s}(\xx_s)-F_{i_s}^\star}},\eta_{t-1}\Bigr\}$ (AdaSVRPS)\smallskip
\State $\eta_t=\min\Bigl\{\gamma_t\frac{1}{c_l\sqrt{\sum_{s=0}^t \gamma_{s}||\nabla F_{i_s}(\xx_s)||^2}},\eta_{t-1}\Bigr\}$ (AdaSVRLS)\footnotemark{}\smallskip
\State $\xx_{t+1}=\Pi_{\cX}\bigl(\xx_t-\eta_t\nabla F_{i_t}(\xx_t)\bigr)$\smallskip
\State $\ww_{t+1}=\begin{cases}
    \ww_{t}&\; \text{with probability}\;1-p_{t+1} \\
    \xx_{t}&\; \text{with probability}\;p_{t+1}
    \end{cases}$
\EndFor\\
\Return $\Bar{\xx}_T=\frac{1}{T}\sum_{t=0}^{T-1}\xx_t$
\caption{(Loopless) AdaSVRPS and AdaSVRLS}
\label{AdaVR}
\end{algorithmic}
\end{algorithm}
\footnotetext{where $\gamma_t$ is obtained via the standard Armijo line-search (Algorithm~\ref{Algorithm:Armijo}) which satisfies: $F_{i_t}(\xx_t-\gamma_t\nabla F_{i_t}(\xx_t))\le F_{i_t}(\xx_t)-\rho\gamma_t||\nabla F_{i_t}(\xx_t)||^2$ and $\gamma_t\le\gamma_{\max}$.}

At each iteration of Algorithm~\ref{AdaVR}, we construct a proxy function by adding two quantities to the minibatch function $f_{i_t}(\xx)$, where $\frac{\mu_F}{2}||\xx-\xx_t||^2$ is a proximal term that helps improve the inherent stochasticity due to the partial information obtained from $f_{i_t}(\xx)$. Note that adding a proximal term has been successfully applied to federated learning and it effectively addresses the systems heterogeneity issues \cite{fedprox}. The additional inner product quantity is used to draw closer the minimizers of $f_{i_t}(\xx)$ and  $f(\xx)$. Following \cite{loopless_svrg,page}, the full gradient is computed with a coin flip probability. Note that Algorithm~\ref{AdaVR} still works with $\eta_t$ replaced with SVRG and AdaSVRG stepsize since $\E_{i_t}[\nabla F_{i_t}(\xx_t)]=\nabla f(\xx_t)$, and thus this framework can be seen as a generalization of the standard VR methods. This may allow us to combine more optimization algorithms with VR techniques. 

\begin{lemma}
Assume each $f_i$ is convex and $L$-smooth, for any $t\ge0$, the iterates generated by Algorithm \ref{AdaVR} satisfy:
\begin{equation}
    \E_{i_t}[F_{i_t}(\xx_t)-F_{i_t}^\star]\le f(\xx_t)-f^\star+\frac{1}{2\mu_F}\E_{i_t}\bigl[||\nabla f_{i_t}(\ww_t)-\nabla f_{i_t}(\xx^\star)||^2\bigr] \,.
    \label{eq:lemma:vr}
\end{equation}
\label{lemma:vr}
\end{lemma}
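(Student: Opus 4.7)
The plan is to split the gap through $\xx^\star$ as an intermediate point:
\begin{equation*}
F_{i_t}(\xx_t) - F_{i_t}^\star = \bigl[F_{i_t}(\xx_t) - F_{i_t}(\xx^\star)\bigr] + \bigl[F_{i_t}(\xx^\star) - F_{i_t}^\star\bigr],
\end{equation*}
then bound each bracket separately and hope that an unwanted $\tfrac{\mu_F}{2}\|\xx_t-\xx^\star\|^2$ introduced in one piece cancels with its counterpart in the other.

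For the first bracket, I would directly substitute the definition of $F_{i_t}$. The quadratic terms give $0-\tfrac{\mu_F}{2}\|\xx^\star-\xx_t\|^2$, and the linear term evaluated at $\xx_t$ minus the one at $\xx^\star$ yields $(\xx_t-\xx^\star)^{\!\top}(\nabla f(\ww_t)-\nabla f_{i_t}(\ww_t))$. Taking $\E_{i_t}$ and using that $\ww_t$ is independent of $i_t$ (it was fixed at the previous iteration's coin flip), the expectation of the linear cross term vanishes. This produces
\begin{equation*}
\E_{i_t}\bigl[F_{i_t}(\xx_t)-F_{i_t}(\xx^\star)\bigr]\;=\;f(\xx_t)-f^\star-\tfrac{\mu_F}{2}\|\xx_t-\xx^\star\|^2.
\end{equation*}

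For the second bracket, I exploit the fact that $F_{i_t}$ is $\mu_F$-strongly convex (the only source of strong convexity being the proximal quadratic), so by the standard PL consequence of strong convexity,
\begin{equation*}
F_{i_t}(\xx^\star)-F_{i_t}^\star\;\le\;\tfrac{1}{2\mu_F}\|\nabla F_{i_t}(\xx^\star)\|^2.
\end{equation*}
Computing $\nabla F_{i_t}(\xx^\star) = \nabla f_{i_t}(\xx^\star) + \nabla f(\ww_t)-\nabla f_{i_t}(\ww_t) + \mu_F(\xx^\star-\xx_t)$, I split this into the random part $U$ (the first three summands) and the deterministic part $\bar V=\mu_F(\xx^\star-\xx_t)$. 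Since $\E_{i_t}[\nabla f_{i_t}(\xx^\star)]=\nabla f(\xx^\star)=0$ and $\E_{i_t}[\nabla f_{i_t}(\ww_t)]=\nabla f(\ww_t)$, one gets $\E_{i_t}[U]=0$, so the cross term in $\E_{i_t}\|U+\bar V\|^2$ drops and I obtain $\E_{i_t}\|U\|^2 + \mu_F^2\|\xx_t-\xx^\star\|^2$. For $\E_{i_t}\|U\|^2$ I rewrite $U=-Z+\E_{i_t}[Z]$ with $Z=\nabla f_{i_t}(\ww_t)-\nabla f_{i_t}(\xx^\star)$ (again using $\nabla f(\xx^\star)=0$) and apply the elementary $\mathrm{Var}(Z)\le\E\|Z\|^2$.

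Combining, the bound on the second bracket contributes $+\tfrac{\mu_F}{2}\|\xx_t-\xx^\star\|^2$, which exactly cancels the negative term from the first bracket, leaving the desired inequality. The step I expect to need the most care is the second bracket: one must recognize that the seemingly harmful deterministic shift $\mu_F(\xx^\star-\xx_t)$ inside $\nabla F_{i_t}(\xx^\star)$ interacts benignly with the quadratic term already produced by the first bracket. Picking any point other than $\xx^\star$ as the intermediate reference, or applying the PL bound at $\xx_t$ instead, appears to destroy this cancellation and produce a worse constant or an extra $L/\mu_F$ factor, which explains why this particular decomposition is the right one.
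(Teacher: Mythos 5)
Your proposal is correct and follows essentially the same route as the paper: split the gap through $\xx^\star$, compute the first bracket exactly and take expectation so the cross term vanishes, bound $F_{i_t}(\xx^\star)-F_{i_t}^\star$ via $\mu_F$-strong convexity of $F_{i_t}$, then expand $\E_{i_t}\|\nabla F_{i_t}(\xx^\star)\|^2$ using $\E_{i_t}[U]=0$ so that the $\frac{\mu_F}{2}\|\xx_t-\xx^\star\|^2$ terms cancel and the variance bound gives the final term. No meaningful deviation from the paper's argument.
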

\vspace{-\baselineskip}
Lemma \ref{lemma:vr} essentially provides us with the guarantee that as $\ww_t,\xx_t\to\xx^\star$, $ \E_{i_t}[F_{i_t}(\xx_t)-F_{i_t}^\star]\to0$, which implies diminishing variance. We are ready to establish the convergence rate for Algorithm \ref{AdaVR}.

\begin{theorem}
     Assume each $f_i$ is convex and $L$ smooth and $\cX$ is a convex compact feasible set with diameter $D$. Let $p_{t}=\frac{1}{at+1}$ with $0\le a <1$. Algorithm \ref{AdaVR} converges as:
     \begin{align}
    &\text{(AdaSVRPS)}\; & &\E[f(\Bar{\xx}_T)-f^\star]
    \le\frac{1+\frac{2L}{(1-a)\mu_F}}{T}\Bigl(2c_p(L+\mu_F)D^2+\frac{1}{c_p}\Bigr)^2 \;,
    \end{align}
    \begin{equation}
    \text{(AdaSVRLS)}\; \E[f(\Bar{\xx}_T)-f^\star]\le\frac{1+\frac{2L}{(1-a)\mu_F}}{T}\Bigl(\max\Bigl\{\frac{L+\mu_F}{(1-\rho)\sqrt{\rho}},\frac{1}{\gamma_{\max}\sqrt{\rho}}\Bigr\}c_lD^2+\frac{1}{c_l\sqrt{\rho}}\Bigr)^2\;,
\end{equation}
where $\Bar{\xx}_T=\frac{1}{T}\sum_{t=0}^{T-1}\xx_t$. 
\label{thm:vr}
\end{theorem}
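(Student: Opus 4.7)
The proof proceeds by viewing Algorithm~\ref{AdaVR} as AdaSPS/AdaSLS applied to a time-varying sequence of \emph{proxy} loss functions $F_{i_t}$ whose stochastic gradients are unbiased for $\nabla f$, and then closing the loop via Lemma~\ref{lemma:vr} to control the noise that enters the AdaSPS/AdaSLS denominator accumulator. Concretely, I first collect the structural facts I will use: since each $f_i$ is convex and $L$-smooth and since $F_{i_t}(\xx)=f_{i_t}(\xx)+\xx^{\top}(\nabla f(\ww_t)-\nabla f_{i_t}(\ww_t))+\tfrac{\mu_F}{2}\norm{\xx-\xx_t}^2$, each $F_{i_t}$ is $\mu_F$-strongly convex and $(L+\mu_F)$-smooth; moreover $\nabla F_{i_t}(\xx_t)=\nabla f_{i_t}(\xx_t)+\nabla f(\ww_t)-\nabla f_{i_t}(\ww_t)$ is an SVRG-type estimator with $\E_{i_t}[\nabla F_{i_t}(\xx_t)]=\nabla f(\xx_t)$ and, crucially, $F_{i_t}^\star$ is well-defined so the AdaSVRPS stepsize is valid.

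\textbf{Main inequality via the proof of Theorem~\ref{thm:convex}.} I would replay the argument behind Theorem~\ref{thm:convex} line for line, but with $f_{i_t}$ replaced by $F_{i_t}$ (so the smoothness constant becomes $L+\mu_F$, producing the $\tau_p,\tau_l$ expressions stated in the theorem). The iterate recursion $\norm{\xx_{t+1}-\xx^\star}^2\le\norm{\xx_t-\xx^\star}^2-2\eta_t\lin{\nabla F_{i_t}(\xx_t),\xx_t-\xx^\star}+\eta_t^2\norm{\nabla F_{i_t}(\xx_t)}^2$ is handled exactly as in the AdaSPS/AdaSLS proofs: the monotonicity $\eta_t\le\eta_{t-1}$ allows separating $\eta_t$ from the inner product (and using $\E_{i_t}[\nabla F_{i_t}(\xx_t)]=\nabla f(\xx_t)$ together with convexity of $f$ to obtain $f(\xx_t)-f^\star$ on the LHS), while the AdaGrad-style denominator telescopes the variance term through Lemma~3.5-type square-root inequalities. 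The output is an estimate of the shape
\begin{equation*}
\sum_{t=0}^{T-1}\E[f(\xx_t)-f^\star] \le \tau^2 + \tau\,\E\!\left[\sqrt{\sum_{t=0}^{T-1}\bigl(F_{i_t}(\xx_t)-F_{i_t}^\star\bigr)}\right],
\end{equation*}
with $\tau=\tau_p$ for AdaSVRPS and $\tau=\tau_l$ for AdaSVRLS (in the latter case the accumulator is $\gamma_s\norm{\nabla F_{i_s}(\xx_s)}^2$, but is controlled by the same Armijo calculation as in Theorem~\ref{thm:convex}).

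\textbf{Closing the loop through the variance bound.} Here Lemma~\ref{lemma:vr} enters. Combined with co-coercivity of $\nabla f_i$ (which uses convexity and $L$-smoothness of each $f_i$) and $\nabla f(\xx^\star)=0$, we obtain $\E_{i_t}[\norm{\nabla f_{i_t}(\ww_t)-\nabla f_{i_t}(\xx^\star)}^2]\le 2L\bigl(f(\ww_t)-f^\star\bigr)$, hence
\begin{equation*}
\E[F_{i_t}(\xx_t)-F_{i_t}^\star]\le \E[f(\xx_t)-f^\star] + \tfrac{L}{\mu_F}\,\E[f(\ww_t)-f^\star].
\end{equation*}
The coin-flip update $\ww_{t+1}\in\{\ww_t,\xx_t\}$ with probability $p_{t+1}=1/(a(t+1)+1)$ makes $\ww_t$ a convex combination of past $\xx_k$'s; summing $\E[f(\ww_t)-f^\star]$ against those weights and exchanging the order of summation yields $\sum_{t=0}^{T-1}\E[f(\ww_t)-f^\star]\le \tfrac{1}{1-a}\sum_{t=0}^{T-1}\E[f(\xx_t)-f^\star]$, which is exactly the source of the factor $1+\tfrac{2L}{(1-a)\mu_F}$. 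Plugging this into the displayed inequality and applying Jensen's inequality $\E[\sqrt{\,\cdot\,}]\le\sqrt{\E[\,\cdot\,]}$ gives a self-bounding inequality $S\le \tau^2 + \tau\sqrt{(1+\tfrac{2L}{(1-a)\mu_F})S}$ for $S=\sum_{t=0}^{T-1}\E[f(\xx_t)-f^\star]$.

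\textbf{Finishing and the main obstacle.} Solving this quadratic-type inequality yields $S\le(1+\tfrac{2L}{(1-a)\mu_F})\tau^2$ up to constants matching the theorem statement, and Jensen's inequality applied to $\Bar{\xx}_T=\tfrac{1}{T}\sum_t\xx_t$ together with convexity of $f$ converts this into the $\cO(1/T)$ bound claimed for $\E[f(\Bar{\xx}_T)-f^\star]$. The main obstacle I anticipate is the bookkeeping in the second step: because $\eta_t$ depends on $i_t$ (and on the entire past of both $\xx$ and $\ww$), the standard trick of replacing $\eta_t$ by $\eta_{t-1}$ inside expectations must be combined with the fact that $F_{i_t}$ is itself random (through $\ww_t$), so that the bound on $\E_{i_t}[F_{i_t}(\xx_t)-F_{i_t}^\star]$ must be taken inside the square root before invoking Jensen. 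Once this conditioning is done in the right order, the rest is routine and the factor $(1+\tfrac{2L}{(1-a)\mu_F})$ appears exactly as predicted by the weight calculation for $\{\ww_t\}$.
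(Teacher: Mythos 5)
Your overall plan is correct and tracks the paper's argument closely: replay the Theorem~\ref{thm:convex} machinery with $F_{i_t}$ in place of $f_{i_t}$ to get $\sum_{t=0}^{T-1}\E[f(\xx_t)-f^\star]\le\tau\sqrt{\sum_{t=0}^{T-1}\E[F_{i_t}(\xx_t)-F_{i_t}^\star]}$, then close with Lemma~\ref{lemma:vr}. (Note the right-hand side at this stage should not have the extra ``$+\tau^2$'' you wrote; that term only appears after invoking Lemma~\ref{lemma:simple_quadratic}, and here the additive $b$ is zero, so one simply squares and divides by $S=\sum_t\E[f(\xx_t)-f^\star]$.) The one place where you diverge from the paper is in handling the $\E[f(\ww_t)-f^\star]$ terms. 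The paper introduces a Lyapunov function $\cZ_t\propto\frac{1}{p_t\mu_F}\norm{\nabla f_{i_t}(\ww_t)-\nabla f_{i_t}(\xx^\star)}^2$ whose coefficient is tuned so that $(1-a)p_t+\frac{(1-p_{t+1})p_t}{p_{t+1}}-1=0$, making the sum telescope exactly. You instead unroll the coin-flip recursion $\E[f(\ww_t)]=p_t\E[f(\xx_{t-1})]+(1-p_t)\E[f(\ww_{t-1})]$ and exchange the order of summation. Both routes rest on the same arithmetic identity $\frac{1}{p_{k+1}}-\frac{1}{p_k}=a$, and both close the loop; the Lyapunov route is tighter bookkeeping, the unrolling route is perhaps more transparent about where the $\frac{1}{1-a}$ comes from.

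One caution on your unrolling step: the inequality you state, $\sum_{t=0}^{T-1}\E[f(\ww_t)-f^\star]\le\frac{1}{1-a}\sum_{t=0}^{T-1}\E[f(\xx_t)-f^\star]$, is not quite what the exchange of summation yields. Writing $a_t=\E[f(\xx_t)-f^\star]$, $b_t=\E[f(\ww_t)-f^\star]$, unrolling gives
\begin{equation*}
\sum_{t=0}^{T-1}b_t \;=\; a_0\,S_0 \;+\; \sum_{k=1}^{T-1} p_k\,a_{k-1}\,S_k\,, \qquad S_k:=\sum_{t=k}^{T-1}\prod_{j=k+1}^{t}(1-p_j)\,,
\end{equation*}
and a downward induction using $S_k=1+(1-p_{k+1})S_{k+1}$ together with $\frac{1}{p_{k+1}}-\frac{1}{p_k}=a$ gives $S_k\le\frac{1}{(1-a)p_k}$. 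Substituting, the second sum is bounded by $\frac{1}{1-a}\sum_{k=0}^{T-2}a_k$, but the first term contributes an \emph{extra} $\frac{a_0}{1-a}$. Only after absorbing $a_0\le S$ does one obtain $\sum_t b_t\le\frac{2}{1-a}S$, and it is this factor of $2$ (not $1$) that, via $\frac{L}{\mu_F}\cdot\frac{2}{1-a}$, produces the $\frac{2L}{(1-a)\mu_F}$ in the theorem. The paper's Lyapunov bookkeeping produces the same extra contribution through the boundary term $\E[\cZ_0]$. So your final constant is actually consistent with the theorem, but the intermediate claim $\sum b_t\le\frac{1}{1-a}\sum a_t$ needs the $a_0/(1-a)$ correction to be literally true.
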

Suppose $\gamma_{\max}$ is sufficiently large, then picking $\mu_F^\star=\cO(L)$, $c_p^\star=\cO(\frac{1}{\sqrt{LD^2}})$ and $c_l^\star=\cO(\frac{\sqrt{1-\rho}}{\sqrt{LD^2}})$ yields an $\cO(\frac{LD^2}{T})$ rate which matches the $\cO(\frac{L||\xx_0-\xx^\star||^2}{T})$ rate of full-batch gradient descent except for a larger term $D^2$ due to the lack of knowledge of the Lipschitz constant. 

\begin{corollary}
    Under the setting of Theorem \ref{thm:vr}, given an arbitrary accuracy $\epsilon$,  the total number of gradient evaluations required to have $\E[f(\Bar{\xx}_T)-f^\star]
    \le\epsilon$ in expectation is $\cO(\log(1/\epsilon)n+1/\epsilon)$.
\end{corollary}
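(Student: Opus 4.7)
The plan is to combine the $\cO(1/T)$ bound from Theorem~\ref{thm:vr} with a per-iteration accounting of gradient evaluations, choosing the coin-flip decay parameter $a \in (0,1)$ as a free constant (for instance $a = 1/2$).

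First, apply Theorem~\ref{thm:vr} with the tuning $\mu_F^\star = \Theta(L)$, $c_p^\star = \Theta(1/\sqrt{LD^2})$ (or $c_l^\star = \Theta(\sqrt{1-\rho}/\sqrt{LD^2})$ for AdaSVRLS) indicated in the discussion following its statement. This yields $\E[f(\Bar{\xx}_T) - f^\star] \le C/T$ for some constant $C = \cO(LD^2)$ (depending on the fixed $a$), so $T = \lceil C/\epsilon \rceil = \cO(1/\epsilon)$ iterations suffice to reach $\epsilon$-suboptimality.

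Next, I account for per-iteration gradient work. Iteration $t$ of Algorithm~\ref{AdaVR} uses $\nabla F_{i_t}(\xx_t) = \nabla f_{i_t}(\xx_t) + \nabla f(\ww_t) - \nabla f_{i_t}(\ww_t)$, so $\nabla f_{i_t}(\xx_t)$ and $\nabla f_{i_t}(\ww_t)$ must be computed freshly, whereas $\nabla f(\ww_t)$ can be cached across iterations on which $\ww$ is unchanged and recomputed only when the anchor is refreshed (i.e.\ when $\ww_{t+1}=\xx_t$). For AdaSVRLS the Armijo backtracking evaluates only the function $F_{i_t}$, which is feasible without additional gradient queries since $\nabla f(\ww_t)$ and $\nabla f_{i_t}(\ww_t)$ are already cached; line-search therefore contributes no extra gradient evaluations.

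Finally, I bound the expected number of refreshes. The coin flips are independent with $\Pr[\ww_{t+1}=\xx_t]=p_{t+1}=1/(a(t+1)+1)$, so by linearity of expectation
\begin{equation}
\E\bigl[\text{\# refreshes in } T \text{ iterations}\bigr] = \sum_{s=1}^{T} \frac{1}{as+1} \le \frac{1 + \log(aT+1)}{a} = \cO(\log T).
\end{equation}
Including the initial $\nabla f(\xx_0)$, the expected full-batch cost is $\cO(n \log T)$. Adding the $\cO(T)$ per-sample evaluations across iterations yields an expected total of $\cO(T + n\log T) = \cO(1/\epsilon + n\log(1/\epsilon))$ gradient evaluations, matching the claim. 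The argument is essentially mechanical once Theorem~\ref{thm:vr} is in place; the only step requiring real care is to verify that the Armijo procedure inside AdaSVRLS introduces no further gradient calls, which follows by inspecting the form of $F_{i_t}$ and Algorithm~\ref{Algorithm:Armijo}.
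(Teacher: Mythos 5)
Your proposal matches the paper's proof essentially step-for-step: both apply the $\cO(1/T)$ bound from Theorem~\ref{thm:vr} to get $T=\cO(1/\epsilon)$, both observe that iteration $t$ costs $\cO(1)$ stochastic gradients plus an expected $p_{t+1}\cdot n$ for the full-gradient refresh, and both sum $p_t$ over $T$ iterations to obtain the $\cO(\log(T)/a)=\cO(\log T)$ refresh count for fixed $a\in(0,1)$. Your extra remarks on caching $\nabla f(\ww_t)$ and on the Armijo line-search for AdaSVRLS adding only function (not gradient) queries are correct and make explicit what the paper compresses into ``calls the stochastic gradient oracle $\cO(1+p_tn)$ times in expectation,'' but the underlying argument is the same.
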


\begin{proof}
Algorithm \ref{AdaVR} calls the stochastic gradient oracle in expectation $\cO(1+p_tn)$ times at iteration $t$. Therefore, the total number of gradient evaluations is upper bounded by $\cO(\sum_{t=0}^{T-1}p_tn+T)$. By our choice of $p_t$, it holds that $\sum_{t=0}^{T-1}p_t\le\frac{1}{a}\sum_{t=0}^{T-1}\frac{1}{t+2}\le\frac{1}{a}(\log(T)+1-1)=\frac{1}{a}\log(T)$. Due to the sublinear convergence rate of Algorithm \ref{AdaVR}, we conclude that the total number of stochastic gradient calls is $\cO(\log(1/\epsilon)n+1/\epsilon)$. 
\end{proof}

The proved efficiency of stochastic gradient calls matches the fast rates of SARAH \cite{sarah}/SVRG and AdaSVRG~\cite{ada-svrg}. Note that SVRG and SARAH require knowledge of the problem-dependent parameters. AdaSVRG needs to predefine the target accuracy $\epsilon$, to design the number of stages and the inner-outer-loop size. Furthermore, its convergence result cannot be extended to the case with arbitrary full gradient update frequency. Indeed, AdaSVRG fails to converge with inner-loop size being one and fixing $g_t=\nabla f(\xx_t)$. One drawback of Algorithm~\ref{AdaVR} is the additional assumption on individual convexity. Since SPS/SLS has to assume the same condition for the proof in the interpolation settings, we believe this assumption is unfortunately also necessary for variance-reduction.

\textbf{Discussion.} The classical SVRG with Armijo line-search (presented as Algorithm 6 in \cite{ada-svrg}) employs the same gradient estimator as SVRG but chooses its stepsize based on the returning value of line-search on the individual function $f_{i}$. Similarly, SVRG with classical Polyak stepsize uses the individual curvature information of $f_i$ to set the stepsize for the global variance-reduced gradient. Due to the misleading curvature information provided by the biased function $f_i$, both methods have convergence issues. In constrast, Algorithm~\ref{AdaVR} reduces the bias by adding a correction term $\xx^T(\nabla f(\ww_t)-\nabla f_{i_t}(\ww_t))$ with global information to $f_i$ and then applying line-search or Polyak-stepsize on the variance-reduced functions $F_{i_t}$. This main difference essentially guarantees the convergence.

\section{Numerical evaluation}
\label{sec:ex}
In this section, we illustrate the main properties of our proposed methods in numerical experiments. Due to space limitations, we only showcase AdaSPS/AdaSVRPS results in the main text. The study of AdaSLS/AdaSVRLS is deferred to Appendix \ref{appendix:ex} where also a detailed description of the experimental setup can be found. We define the theoretically justified hyperparameters $c_p^{\text{scale}}:=c_p\sqrt{f_{i_0}(\xx_0)-\ell_{i_0}^\star}\ge1$ for AdaSPS and $c_p^{\text{scale}}:=c_p\sqrt{F_{i_0}(\xx_0)-F_{i_0}^\star}\ge 1$ for AdaSVRPS. 
The discussion about replacing $F_{i_0}^\star$ with $\ell_{i_0}^\star$ can be found in Appendix \ref{appendix:ex}.

\textbf{Synthetic data.} We illustrate the robustness property on a class of synthetic problems.
We consider the minimization of a quadratic of the form: $f(\xx)=\frac{1}{n}\sum_{i=1}^nf_i(\xx)$ where $f_i(\xx)=\frac{1}{2}(\xx-\bb_i)^TA_i(\xx-\bb_i)$, $\bb_i\in\R^d$ and $A_i\in\R^{d\times d}$ is a diagonal matrix. We use $n=50$, $d=1000$. 
We can control the convexity of the problem by choosing different matrices $A_i$, and control interpolation by either setting all  $\{\bb_i\}$  to be identical or different. We generate a strongly convex instance where the eigenvalues of  $\nabla^2 f(\xx)$ are between $1$ and $10$, and a general convex instance by setting some of the eigenvalues to small values close to zero (while ensuring that each $\nabla^2 f_i(\xx)$ is positive semi-definite). The exact procedure to generate these problems is described in Appendix~\ref{appendix:ex}. 

 For all methods, we use a batch size $B=1$. 
 For AdaSPS/AdaSVRPS we fix $c_p^{\text{scale}}=1$, and for AdaSVRPS we further use $\mu_F=10$ and $p_t=\frac{1}{0.1t+1}$.  We compare against DecSPS~\cite{decsps}, SPS~\cite{sps} and SVRG~\cite{VR_zhang} and tune the stepsize for SVRG by picking the best one from $\{10^i\}_{i=-4,..,3}$.
 In Figure \ref{fig:syn}, we observe that SPS does not converge in the non-interpolated settings and DecSPS suffers from a slow $\cO(1/\sqrt{T})$ convergence on the two interpolated problems. AdaSPS shows the desired convergence rate across all cases which matches the theory while AdaSVRPS further improves the performance in the non-interpolated regime. In Figure~\ref{fig:syn} we also note that AdaSPS can automatically adjust the stepsize according to the underlying interpolation condition while DecSPS always follows an $\cO(1/\sqrt{t})$ decreasing stepsize schedule.

\begin{figure*}[tb!]
    \centering
    \includegraphics[width=1\textwidth]{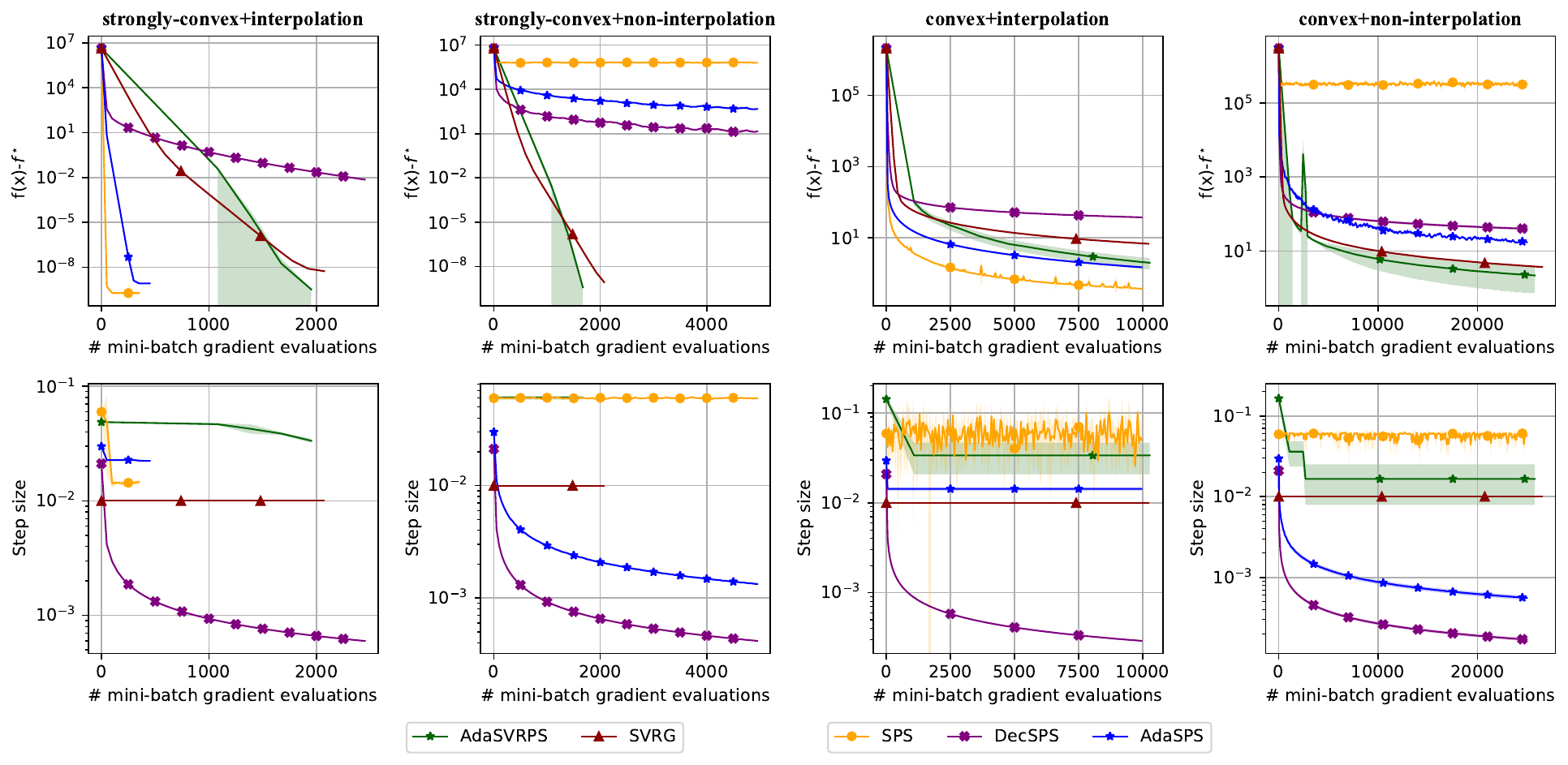}  
    \vspace*{-6mm}
    \caption{Illustration of the robust convergence of AdaSPS on synthetic data with quadratic loss. SPS has superior performance on the two interpolated problems but cannot converge when the interpolation condition does not hold. DecSPS suffers from a slow convergence on both interpolated problems. AdaSVRPS and SVRG show remarkable performances when solving non-interpolated problems. (Repeated 3 times. The solid lines and the shaded area represent the mean and the standard deviation.)}
    \label{fig:syn}
\end{figure*}

\textbf{Binary classification on LIBSVM datasets.}
We experiment with binary classification on four diverse datasets from \cite{libsvm}. We consider the standard regularized logistic loss: $f(\xx)=\frac{1}{n}\sum_{i=1}^n\log(1+\exp(-y_i\cdot \aa_i^T\xx))+\frac{1}{2n}||\xx||^2$ where $(\aa_i,y_i)\in\R^{d+1}$ are features and labels. We benchmark against popular optimization algorithms including Adam~\cite{adam}, SPS~\cite{sps}, DecSPS~\cite{decsps}, AdaGrad-Norm~\cite{AdaGrad}, SVRG~\cite{VR_zhang} and AdaSVRG~\cite{ada-svrg}. We report the best $c_{p}^{\text{scale}}\in\{0.5,1,2\}$, $\mu_F\in\{10^{-4},10^2\}$ and the best learning rate from $\{10^i\}_{i=-4,..,3}$ for SGD, Adam and AdaGrad-Norm. We compute the full gradient at the beginning of each epoch for SVRG and AdaSVRG. We report $p_t=\frac{B}{n}$ for AdaSVRPS. Comparisons with decreasing probability framework can be found in Appendix \ref{appendix:ex} with different choices of batch size. We observe that Adam and SPS have remarkable performances on duke with $n=48$ and $d=7129$ which clearly satisfies interpolation. AdaSPS consistently performs well on the other three larger datasets. Finally, three VR methods give comparable performances when interpolation is not satisfied.

\textbf{Discussion.} AdaSPS and AdaSVRPS consistently demonstrate robust convergence across all tasks, and achieve performance on par with, if not better than, the best-tuned algorithms. Consequently, these optimizers can be considered reliable for practical use.

\begin{figure*}[tb!]
    \centering
    \includegraphics[width=1\textwidth]{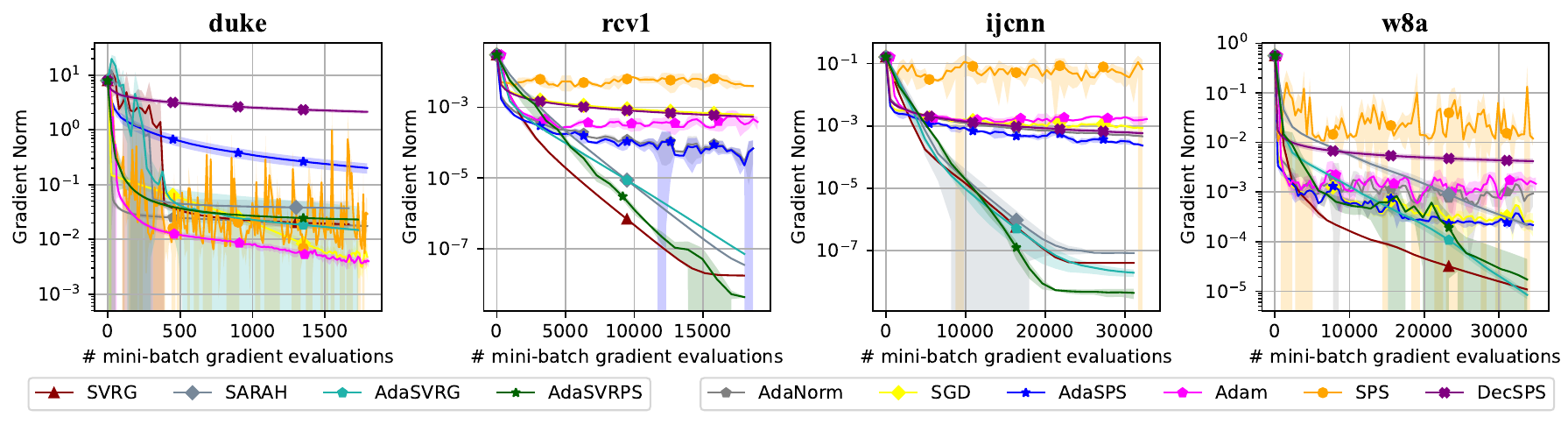}  
    \vspace*{-6mm}
    \caption{Comparison of AdaSPS/AdaSVRPS against seven other popular optimizers on four LIBSVM datasets, with batch size $B=1$ for duke, $B=64$ for rcv1, $B=64$ for ijcnn and $B=128$ for w8a. AdaSPS and AdaSVRPS have competitive performance on rcv1, ijcnn and w8a while SPS and Adam converge fast on duke. (Repeated 3 times. The solid lines and the shaded area represent the mean and the standard deviation.)}
    \label{fig:libsvm}
\end{figure*}

\section{Conclusion and future work}

We proposed new variants of SPS and SLS algorithms and demonstrated their robust and fast convergence in both interpolated and non-interpolated settings. We further accelerate both algorithms for convex optimization with a novel variance reduction technique. Interesting future directions may include: accelerating AdaSPS and AdaSLS with momentum, developing effective robust adaptive methods for training deep neural networks, designing an adaptive algorithm that gives a faster rate $\cO(1/T)$ under strong convexity, extensions to distributed and decentralized settings. 

\newpage
\bibliographystyle{plainnat}
{\small
\bibliography{reference}
}
\appendix
\numberwithin{equation}{section}
\numberwithin{figure}{section}
\numberwithin{table}{section}

\newpage
\small
{\Huge\textbf{Appendix}}
\vspace{0.5cm}
\section{Technical Preliminaries}
\label{sec:pf_lm}

\subsection{Basic Definitions}
We use the following definitions throughout the paper.
\begin{definition}[convexity]
A differentiable function $f:\R^d\to\R$ is convex if\;$\forall\;\xx,\yy\in\R^d$,

\begin{equation}
    f(\yy)\ge f(\xx)+\lin{\nabla f(\xx),\yy-\xx}\;.
    \label{df:convex}
    \end{equation}
\end{definition} 

\begin{definition}[strong convexity]
A differentiable function $f:\R^d\to\mathbb{R}$ is $\mu$-strongly convex if\;$\forall\; \xx,\yy\in\R^d$,
\begin{equation}
    f(\yy)\ge f(\xx)+\lin{\nabla f(\xx),\yy-\xx}+\frac{\mu}{2}||\xx-\yy||^2\;.
    \label{df:stconvex}
\end{equation}
\end{definition} 

\begin{definition}[$L$-smooth]
Let function $f:\R^d\to\mathbb{R}$ be differentiable. $f$ is smooth if there exists $L>0$ such that $\forall\;\xx,\yy\in\R^d$,
\begin{equation}
    ||\nabla f(\xx)-\nabla f(\yy)||\le L||\xx-\yy||\;.
\end{equation}
\label{df:smooth}
\end{definition}

\subsection{Useful Lemmas}
We frequently use the following helpful lemmas for the proof. 
\begin{lemma}[\citet{nesterov_book}, Lemma 1.2.3]
    Definition~\ref{df:smooth} implies that there exists a quadratic upper bound on f:
\begin{equation}
    f(\yy)\le f(\xx)+\lin{\nabla f(\xx),\yy-\xx}|+\frac{L}{2}||\yy-\xx||^2 \;, \forall\xx,\yy\in\R^d\;. \label{df:smooth:quad}
\end{equation}
\end{lemma}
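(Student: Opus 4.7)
The plan is to derive the quadratic upper bound from the Lipschitz gradient condition via the fundamental theorem of calculus applied to the one-dimensional restriction of $f$ along the segment from $\xx$ to $\yy$. Define $\phi(t) = f(\xx + t(\yy-\xx))$ for $t \in [0,1]$, so that $\phi'(t) = \lin{\nabla f(\xx + t(\yy-\xx)),\,\yy-\xx}$ and $f(\yy) - f(\xx) = \phi(1) - \phi(0) = \int_0^1 \phi'(t)\,dt$.

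Next I would insert and subtract $\lin{\nabla f(\xx),\yy-\xx}$ inside the integrand to isolate the first-order Taylor term, writing
\begin{equation*}
f(\yy) - f(\xx) = \lin{\nabla f(\xx),\yy-\xx} + \int_0^1 \lin{\nabla f(\xx + t(\yy-\xx)) - \nabla f(\xx),\,\yy-\xx}\,dt.
\end{equation*}
The remainder integral is then bounded from above by the Cauchy--Schwarz inequality, which yields
\begin{equation*}
\int_0^1 \lin{\nabla f(\xx + t(\yy-\xx)) - \nabla f(\xx),\,\yy-\xx}\,dt \le \int_0^1 \norm{\nabla f(\xx + t(\yy-\xx)) - \nabla f(\xx)}\cdot\norm{\yy-\xx}\,dt.
\end{equation*}
Applying the $L$-smoothness assumption (Definition~\ref{df:smooth}) to the gradient difference gives $\norm{\nabla f(\xx + t(\yy-\xx)) - \nabla f(\xx)} \le L t \norm{\yy-\xx}$, so the integral is at most $L \norm{\yy-\xx}^2 \int_0^1 t\,dt = \tfrac{L}{2}\norm{\yy-\xx}^2$. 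Combining these bounds produces the claimed inequality.

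There is no real obstacle in this proof: the only subtleties are ensuring that $\phi$ is differentiable on $[0,1]$ (which follows since $f$ is differentiable on $\R^d$ by assumption) and that the Lipschitz continuity of $\nabla f$ is applied to the correct pair of points, namely $\xx$ and $\xx + t(\yy-\xx)$ whose distance is exactly $t\norm{\yy-\xx}$. The whole argument is a three-line calculation once the integral representation is written down, so I would present it compactly without further decomposition.
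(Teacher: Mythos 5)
Your proof is correct and is exactly the standard argument given in Nesterov's book (the paper itself does not reprove this lemma; it simply cites Nesterov, Lemma~1.2.3, whose proof is the same fundamental-theorem-of-calculus plus Cauchy--Schwarz computation you carry out). No issues.
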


\begin{lemma}[\citet{nesterov_book}, Theorem 2.1.5]
    If a convex function $f$ satisfies \eqref{df:smooth:quad}, then it holds that:
\begin{equation}
    f(\yy)\ge f(\xx)+\lin{\nabla f(\xx),\yy-\xx}|+\frac{1}{2L}||\nabla f(\yy)-\nabla f(\xx)||^2 \;, \forall\xx,\yy\in\R^d \label{eq:smooth+convex:property}.
\end{equation}
\end{lemma}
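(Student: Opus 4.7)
The plan is to reduce this co-coercivity-like inequality to a straightforward application of the smoothness quadratic upper bound \eqref{df:smooth:quad}, by exploiting an auxiliary convex function that has $\xx$ as a global minimizer. Directly comparing $f(\yy)$ and $f(\xx)$ is awkward because neither point is distinguished; the trick is to manufacture such a distinguished point via a linear shift.

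Concretely, I would fix $\xx \in \R^d$ and introduce
\begin{equation*}
\phi_{\xx}(\zz) := f(\zz) - \langle \nabla f(\xx), \zz \rangle \,.
\end{equation*}
Three easy observations then drive the argument. First, $\phi_{\xx}$ is convex, since it is the sum of the convex function $f$ and a linear term. Second, $\phi_{\xx}$ inherits $L$-smoothness from $f$ (the Hessian, in the twice-differentiable case, is unchanged; in general, $\nabla \phi_\xx(\uu)-\nabla \phi_\xx(\vv) = \nabla f(\uu)-\nabla f(\vv)$, so the Lipschitz constant is preserved and hence the quadratic upper bound \eqref{df:smooth:quad} holds for $\phi_\xx$). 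Third, $\nabla \phi_{\xx}(\xx) = \nabla f(\xx) - \nabla f(\xx) = 0$, so by convexity $\xx$ is a \emph{global minimizer} of $\phi_{\xx}$.

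With these in hand, I would apply the quadratic upper bound \eqref{df:smooth:quad} for $\phi_\xx$ at the point $\yy$ with increment $-\tfrac{1}{L}\nabla \phi_{\xx}(\yy)$ to obtain
\begin{equation*}
\phi_{\xx}\!\left(\yy - \tfrac{1}{L}\nabla \phi_{\xx}(\yy)\right) \le \phi_{\xx}(\yy) - \tfrac{1}{2L}\|\nabla \phi_{\xx}(\yy)\|^2 \,.
\end{equation*}
Since $\xx$ minimizes $\phi_{\xx}$, the left-hand side is bounded below by $\phi_{\xx}(\xx)$. Unfolding the definition, $\phi_{\xx}(\xx) = f(\xx) - \langle \nabla f(\xx), \xx\rangle$, $\phi_{\xx}(\yy) = f(\yy) - \langle \nabla f(\xx), \yy\rangle$, and $\nabla \phi_{\xx}(\yy) = \nabla f(\yy) - \nabla f(\xx)$. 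Substituting and rearranging yields exactly \eqref{eq:smooth+convex:property}.

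I do not expect a real obstacle here: every step is mechanical once the auxiliary function $\phi_\xx$ is introduced. The only conceptual subtlety worth flagging is \emph{why} the shift is needed in the first place: the quadratic upper bound alone only controls $f$ along a descent step, whereas the desired inequality couples two arbitrary points; promoting $\xx$ to a minimizer of a related function is what converts ``descent from $\yy$'' into a bound involving $f(\xx)$. Symmetry (swapping $\xx,\yy$ and adding) also immediately yields the standard co-coercivity inequality as a corollary, though that is not needed for the stated lemma.
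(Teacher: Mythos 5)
The paper gives no proof for this lemma---it simply cites Nesterov (Theorem 2.1.5)---and your argument is exactly the classical proof found there: introduce $\phi_{\xx}(\zz)=f(\zz)-\langle\nabla f(\xx),\zz\rangle$, observe that $\xx$ is its global minimizer, and apply the quadratic upper bound at $\yy$ with the step $-\tfrac1L\nabla\phi_{\xx}(\yy)$. Your proof is correct; as a small streamlining, you can obtain \eqref{df:smooth:quad} for $\phi_{\xx}$ directly from \eqref{df:smooth:quad} for $f$ by adding the linear term $-\langle\nabla f(\xx),\cdot\rangle$ to both sides, which sidesteps the detour through the Lipschitz-gradient characterization (the stated hypothesis is only \eqref{df:smooth:quad}).
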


\begin{lemma}[\citet{wald}]
For any non-negative sequence $a_0,...,a_T$, the following holds:
\begin{equation}
\sqrt{\sum_{t=0}^Ta_t}\le\sum_{t=0}^T\frac{a_t}{\sqrt{\sum_{i=0}^ta_i}}\le2\sqrt{\sum_{t=0}^Ta_t}\;.
    \label{eq:wald}
\end{equation}
If $a_0\ge1$, then the following holds:
\begin{equation}
    \sum_{t=0}^T\frac{a_t}{\sum_{i=0}^ta_i}\le\log(\sum_{t=0}^Ta_t)+1\;.
    \label{eq:wald2}
\end{equation}
\end{lemma}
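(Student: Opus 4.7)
The plan is to set $S_t := \sum_{i=0}^t a_i$ so that $a_t = S_t - S_{t-1}$ (with the convention $S_{-1}=0$), and then prove both inequalities by telescoping-type arguments. Throughout I would tacitly handle degenerate cases (e.g.\ $S_t=0$, which forces $a_0=\dots=a_t=0$ and makes the corresponding summands zero under the natural $0/0\mapsto 0$ convention).

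For the lower bound in \eqref{eq:wald}, the key observation is monotonicity: since $S_t\le S_T$ for every $t\le T$, we have $\frac{a_t}{\sqrt{S_t}}\ge\frac{a_t}{\sqrt{S_T}}$, and summing gives $\sum_{t=0}^T\frac{a_t}{\sqrt{S_t}}\ge\frac{S_T}{\sqrt{S_T}}=\sqrt{S_T}$. For the upper bound, the standard trick is to rationalize:
\begin{equation*}
2(\sqrt{S_t}-\sqrt{S_{t-1}})=\frac{2(S_t-S_{t-1})}{\sqrt{S_t}+\sqrt{S_{t-1}}}\ge\frac{S_t-S_{t-1}}{\sqrt{S_t}}=\frac{a_t}{\sqrt{S_t}}.
\end{equation*}
Summing telescopes to $\sum_{t=0}^T\frac{a_t}{\sqrt{S_t}}\le 2\sqrt{S_T}-2\sqrt{S_{-1}}=2\sqrt{S_T}$.

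For \eqref{eq:wald2}, I would use the elementary inequality $1-x\le -\log x$ valid for $x>0$ with $x=S_{t-1}/S_t$. This gives, for $t\ge 1$,
\begin{equation*}
\frac{a_t}{S_t}=1-\frac{S_{t-1}}{S_t}\le \log\!\left(\frac{S_t}{S_{t-1}}\right)=\log S_t-\log S_{t-1}.
\end{equation*}
The $t=0$ term is handled separately: $\frac{a_0}{S_0}=1$. Summing and telescoping yields
\begin{equation*}
\sum_{t=0}^T\frac{a_t}{S_t}\le 1+\log S_T-\log S_0=1+\log\!\left(\frac{S_T}{a_0}\right)\le 1+\log S_T,
\end{equation*}
where the final step uses the hypothesis $a_0\ge 1$ so that $\log(S_T/a_0)\le\log S_T$.

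I do not expect real obstacles: all three inequalities reduce to a one-line telescoping argument after the substitution $a_t=S_t-S_{t-1}$. The only mildly subtle point is making sure the logarithm step in \eqref{eq:wald2} starts from $t=1$ (so that $S_{t-1}>0$ is guaranteed by $a_0\ge 1$), which is precisely the reason for the $a_0\ge 1$ hypothesis.
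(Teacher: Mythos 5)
Your proof is correct and is essentially the paper's argument unrolled: the paper proves both bounds by induction on $T$, and its inductive steps rest on exactly the inequalities $\frac{y}{\sqrt{x}}\le 2\sqrt{x}-2\sqrt{x-y}$ and $\frac{y}{x}\le \log x-\log(x-y)$ (for $x\ge y\ge 0$) that your telescoping steps use with $x=S_t$, $y=a_t$. The only minor divergence is the lower bound in \eqref{eq:wald}, which you obtain directly from the monotonicity $S_t\le S_T$ rather than via the paper's induction; both are valid, and yours is slightly shorter.
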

\begin{proof}
    To show equation \eqref{eq:wald}, we proceed with the proof by induction. For $t=0$, \eqref{eq:wald} holds trivially since $\sqrt{a_0}\le\sqrt{a_0}\le2\sqrt{a_0}$. 
    Assume equation \eqref{eq:wald} holds for $T-1$. For RHS, we have:
    \begin{equation}
    \begin{split}
        \sum_{t=0}^{T-1}\frac{a_t}{\sum_{i=0}^ta_i}+\frac{a_T}{\sqrt{\sum_{i=0}^T a_i}}&\le2\sqrt{\sum_{t=0}^{T-1}a_t}+\frac{a_T}{\sqrt{\sum_{i=0}^T a_i}}\\
        &=2\sqrt{\sum_{t=0}^{T}a_t-a_T}+\frac{a_T}{\sqrt{\sum_{t=0}^T a_t}} \\ 
        &\le 2\sqrt{\sum_{t=0}^Ta_t}\;.
    \end{split}
    \end{equation}
    where the last inequality is due to the fact that $2\sqrt{x-y}+\frac{y}{\sqrt{x}}\le2\sqrt{x}$ for any $x\ge y\ge0$. For LHS, we have: 
    \begin{equation}
    \begin{split}
        \sum_{t=0}^{T-1}\frac{a_t}{\sum_{i=0}^ta_i}+\frac{a_T}{\sqrt{\sum_{i=0}^T a_i}}&\ge\sqrt{\sum_{t=0}^{T-1}a_t}+\frac{a_T}{\sqrt{\sum_{i=0}^T a_i}}\\
        &=\sqrt{\sum_{t=0}^{T}a_t-a_T}+\frac{a_T}{\sqrt{\sum_{t=0}^T a_t}} \\ 
        &\ge \sqrt{\sum_{t=0}^Ta_t}\;.
    \end{split}
    \end{equation}
    where the last inequality is due to the fact that $\sqrt{x-y}+\frac{y}{\sqrt{x}}\ge\sqrt{x}$ for any $x\ge y\ge0$.
    
    We next show equation \eqref{eq:wald2} by induction. For $t=0$, equation \eqref{eq:wald2} trivially holds since $1\le\log(a_0)+1$. Assume \eqref{eq:wald2} holds for $T-1$, we have:
    \begin{equation}
    \begin{split}
        \sum_{t=0}^{T}\frac{a_t}{\sum_{i=0}^t a_i}&\le\log(\sum_{t=0}^{T-1}a_t)+1+\frac{a_T}{\sum_{i=0}^T a_i} \\
        &\le\log(\sum_{t=0}^Ta_t)+1\;.
    \end{split}
    \end{equation}
    where the last inequality is due to the fact that $\log(x-y)+\frac{y}{x}\le\log(x)$ for any $x\ge y\ge0$ since $e^{\frac{y}{x}}\le\frac{1+\frac{y}{x}}{1-\frac{y^2}{x^2}}$.
\end{proof}

\begin{lemma}[{\citet[Lemma 5]{ada-svrg}}]
If $x^2\le a(x+b)$ for $a\ge 0$ and $b\ge 0$, then it holds that:
\begin{equation}
    x\le a+\sqrt{ab}\;.
    \label{eq:simple_quadratic}
\end{equation}
\label{lemma:simple_quadratic}
\end{lemma}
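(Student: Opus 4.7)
The plan is to treat the hypothesis $x^2 \le a(x+b)$ as a quadratic inequality in $x$ and solve it explicitly. Rewriting it as $x^2 - ax - ab \le 0$, I observe that the quadratic $z \mapsto z^2 - az - ab$ opens upward, so the inequality forces $x$ to lie between its two real roots (the roots are real since the discriminant $a^2 + 4ab$ is nonnegative under the assumption $a,b \ge 0$). In particular, $x$ is bounded above by the larger root, giving
\begin{equation*}
x \le \frac{a + \sqrt{a^2 + 4ab}}{2}\;.
\end{equation*}

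It then suffices to show the purely algebraic estimate $\frac{a + \sqrt{a^2 + 4ab}}{2} \le a + \sqrt{ab}$, i.e.\ $\sqrt{a^2 + 4ab} \le a + 2\sqrt{ab}$. Since both sides are nonnegative, I can square them and check that the resulting inequality $a^2 + 4ab \le a^2 + 4a\sqrt{ab} + 4ab$ reduces to $0 \le 4a\sqrt{ab}$, which holds trivially because $a,b \ge 0$. Chaining the two bounds yields the claim $x \le a + \sqrt{ab}$.

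There is no real obstacle in this proof; the only subtlety worth mentioning is the handling of the degenerate case $a = 0$, where the hypothesis collapses to $x^2 \le 0$, forcing $x = 0 = a + \sqrt{ab}$, so the bound holds trivially. In all other cases the squaring step is reversible because both sides are nonnegative, so the argument is clean and entirely self-contained.
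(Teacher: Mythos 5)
Your proof is correct. The paper cites this lemma from \citet{ada-svrg} without reproducing the argument, and your route---solving the quadratic inequality for its larger root and then bounding $\sqrt{a^2+4ab}$ by $a+2\sqrt{ab}$---is exactly the standard proof (the cited source bounds the square root via subadditivity, $\sqrt{a^2+4ab}\le\sqrt{a^2}+\sqrt{4ab}$, whereas you verify the same inequality by squaring, a negligible difference).
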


The following Lemma is an extension of Lemma 5 in \cite{decsps}.
\begin{lemma}
Let $z_{t+1}\le(1-a\eta_t)z_t+\eta_tb$ and $z_t\ge 0$ where $a>0$, $b>0$ and $\eta_t>0, \eta_{t+1}\le\eta_{t},\;\forall t\ge 0$. It holds that:
\begin{equation}
    z_t\le\max\{\frac{b}{a},z_0,\eta_0 b\},\;\forall t\ge 0\;.
\end{equation}
\begin{proof}
    Since $\eta_t$ is non-increasing, $1-a\eta_t\le 0$ is non-decreasing. For any $t\ge0$ such that
    $1-a\eta_t\le 0$, we have $z_{t+1}\le\eta_tb\le\eta_0b$. If $1-a\eta_t\le 0$ for all $t\ge 0$, then the proof is done. Otherwise, let us assume there exists a first index $j$ such that $1-a\eta_j> 0$ and we have $z_j\le\max\{z_0,\eta_0b\}:=\Tilde{z}_0$.  We proceed with the proof starting with the index $j$ by induction.
    For $t=j$, the lemma trivially holds. Let us assume $z_t\le\max\{\frac{b}{a},\Tilde{z}_0\}$ for $t>j$. If $\frac{b}{a}\ge \Tilde{z}_0$, then by induction, we have:
    \begin{equation}
        z_{t+1}\le (1-a\eta_t)\frac{b}{a}+\eta_tb=\frac{b}{a}\;.
    \end{equation}
    If instead $\frac{b}{a}\le \Tilde{z}_0$, then by induction, we have:
    \begin{equation}
        z_{t+1}\le (1-a\eta_t)\Tilde{z}_0+\eta_tb=\Tilde{z}_0-\eta_t(a\Tilde{z}_0-b)\le \Tilde{z}_0\;.
    \end{equation}
Combining the above cases concludes the proof.
\end{proof}
\label{lemma:decsps}
\end{lemma}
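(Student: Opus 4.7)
The plan is to split the trajectory into two regimes according to the sign of the coefficient $1 - a\eta_t$ in the recursion. Since $\eta_t$ is non-increasing in $t$, the quantity $1 - a\eta_t$ is non-decreasing, so there is at most one transition from a ``large stepsize'' phase in which $1 - a\eta_t \le 0$ to a ``small stepsize'' phase in which $1 - a\eta_t \in (0,1)$. The two phases call for genuinely different arguments: the first uses non-negativity of $z_t$ to drop the contractive term, while the second uses an inductive invariant to control the fixed-point behaviour of the recursion.

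First I would handle the large-stepsize phase. Whenever $\eta_t \ge 1/a$, the factor $(1 - a\eta_t)$ is non-positive and $z_t \ge 0$, so the recursion immediately yields $z_{t+1} \le \eta_t b \le \eta_0 b$. If this regime persists for all $t$, the lemma is proven. Otherwise let $j$ be the first index with $\eta_j < 1/a$; by the previous observation $z_j \le \max\{z_0,\eta_0 b\}$ (the $z_0$ term is only needed when $j=0$). This gives a clean starting point for the second phase.

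Second, starting from index $j$, I would show by induction that $z_t \le M := \max\{b/a, z_0, \eta_0 b\}$. The inductive step splits into two sub-cases depending on which term attains the maximum. If $b/a \ge \max\{z_0,\eta_0 b\}$, then assuming $z_t \le b/a$ gives $z_{t+1} \le (1-a\eta_t)(b/a) + \eta_t b = b/a$ by direct cancellation. If instead $b/a \le \tilde{z}_0 := \max\{z_0,\eta_0 b\}$, then assuming $z_t \le \tilde{z}_0$ gives $z_{t+1} \le \tilde{z}_0 - \eta_t(a\tilde{z}_0 - b)$, which is at most $\tilde{z}_0$ because $a \tilde{z}_0 \ge b$. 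In either case the invariant propagates.

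There is no genuine analytic difficulty here; the argument is purely algebraic. The main obstacle is bookkeeping: one must check that the invariant is in fact preserved across the transition index $j$ (not merely from some later starting point), and that the two cases in the inductive step exhaust the possibilities. A secondary subtlety is that the lemma's conclusion contains three quantities inside the maximum, and the recursion has both a contractive and an additive piece, so choosing the correct inductive invariant (one that is a fixed point of the map $z \mapsto (1-a\eta_t)z + \eta_t b$ when $z \ge b/a$) is what makes the induction go through cleanly.
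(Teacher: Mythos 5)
Your proposal is correct and follows essentially the same route as the paper's own proof: you split at the sign change of $1-a\eta_t$ (using monotonicity of $\eta_t$ to guarantee a single transition), bound the large-stepsize phase by dropping the non-positive contractive term, and then run the same two-case induction from the transition index $j$ with the same invariant $\max\{b/a,\tilde z_0\}$. The only cosmetic difference is your parenthetical remark that the $z_0$ term in the base case is relevant only when $j=0$, which is a small clarification rather than a different argument.
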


The following lemma is commonly used in the works on Polyak stepsize~\cite{sps,hazan_polyak}.
\begin{lemma}
    Suppose a function $f$ is $L$-smooth and $\mu$-strongly convex, then the following holds:
    \begin{equation}
        \frac{1}{2L}\le\frac{f(\xx)-f^\star}{||\nabla f(\xx)||^2}\le\frac{1}{2\mu}\;.
        \label{eq:step_bound}
    \end{equation}
\end{lemma}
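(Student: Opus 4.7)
The plan is to prove both inequalities by the standard ``minimize each quadratic envelope'' trick: the lower bound comes from smoothness (quadratic upper envelope), and the upper bound comes from strong convexity (quadratic lower envelope). Both estimates are tight at the minimizer of the respective quadratic in $\yy$, and both rely only on the inequalities already stated as \eqref{df:stconvex} and \eqref{df:smooth:quad}.

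For the lower bound $\frac{f(\xx)-f^\star}{\|\nabla f(\xx)\|^2}\ge\frac{1}{2L}$, I would apply the smoothness quadratic upper bound \eqref{df:smooth:quad} at the pair $(\xx,\yy)$ for an arbitrary $\yy$, then minimize the right-hand side over $\yy$. The minimizer is $\yy = \xx - \tfrac{1}{L}\nabla f(\xx)$, which yields
\begin{equation*}
f(\yy) \le f(\xx) - \frac{1}{2L}\|\nabla f(\xx)\|^2 \qquad \text{for this particular }\yy.
\end{equation*}
Since $f^\star \le f(\yy)$ for any $\yy$, choosing this $\yy$ gives $f^\star \le f(\xx) - \frac{1}{2L}\|\nabla f(\xx)\|^2$, which rearranges to the claimed lower bound. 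Strong convexity is not needed here, only smoothness.

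For the upper bound $\frac{f(\xx)-f^\star}{\|\nabla f(\xx)\|^2}\le\frac{1}{2\mu}$, I would instead use the $\mu$-strong convexity inequality \eqref{df:stconvex}. For any $\yy$, it gives $f(\yy)\ge f(\xx)+\langle\nabla f(\xx),\yy-\xx\rangle+\frac{\mu}{2}\|\yy-\xx\|^2$. Minimizing the right-hand side over $\yy$ at $\yy=\xx-\tfrac{1}{\mu}\nabla f(\xx)$ produces the Polyak--{\L}ojasiewicz-type inequality
\begin{equation*}
f(\yy) \ge f(\xx) - \frac{1}{2\mu}\|\nabla f(\xx)\|^2 \qquad \text{for every }\yy\in\R^d.
\end{equation*}
Specializing to $\yy = \xx^\star$ gives $f^\star \ge f(\xx) - \frac{1}{2\mu}\|\nabla f(\xx)\|^2$, which rearranges to the upper bound.

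There is no real obstacle here; both halves are immediate consequences of well-known envelope arguments. The only small caveat is to notice that the upper-bound direction uses only strong convexity (not smoothness) and the lower-bound direction uses only smoothness (not strong convexity), so the two halves can be proved independently. In particular, the bound $\frac{1}{2L}\le \frac{f(\xx)-f^\star}{\|\nabla f(\xx)\|^2}$ remains valid without strong convexity whenever $\nabla f(\xx)\ne 0$, which is implicitly assumed so that the ratio is defined.
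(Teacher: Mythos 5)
Your proof is correct, and since the paper supplies no proof of this lemma (it simply cites it as a standard fact used in the Polyak-stepsize literature), there is nothing to compare against: your argument is exactly the textbook one. Both halves are handled cleanly — the lower bound by minimizing the smoothness upper envelope over $\yy$ and plugging the minimizer into $f^\star \le f(\yy)$, and the upper bound by minimizing the strong-convexity lower envelope to obtain the PL inequality and then setting $\yy = \xx^\star$ — and your observation that the two halves are independent (smoothness only for the lower bound, strong convexity only for the upper) and that the ratio implicitly presupposes $\nabla f(\xx)\neq 0$ is accurate.
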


The following lemma provides upper and lower bounds for the stepsize of AdaSPS.
\begin{lemma}
Suppose each $f_i$ is $L$-smooth, then the stepsize of AdaSPS \eqref{AdaSPS} satisfies:
\begin{equation}
    \frac{1}{2c_pL}\frac{1}{\sqrt{\sum_{s=0}^t f_{i_s}(\xx_s)-\ell_{i_s}^\star}}\le\eta_t\le\frac{f_{i_t}(\xx_t)-\ell_{i_t}^\star}{c_p||\nabla f_{i_t}(\xx_t)||^2}\frac{1}{\sqrt{\sum_{s=0}^t f_{i_s}(\xx_s)-\ell_{i_s}^\star}}\;.
    \label{eq:step_bound_adasps}
\end{equation}
\begin{proof}
The upper bound follows from the definition of the stepsize \eqref{AdaSPS}. To prove the lower bound, we note that the stepsize \eqref{AdaSPS} is composed of two parts where the first component $\frac{f_{i_s}(\xx_s)-\ell_{i_s}^\star}{c_p||\nabla f_{i_s}(\xx_s)||^2}\ge\frac{1}{2c_pL}$ for all $0\le s\le t$ due to \eqref{eq:step_bound}, and the second component is always decreasing. Finally, recall that $\eta_{-1}=+\infty$ and thus the proof is completed.
\end{proof}
\label{lemma:step_bound_adasps}
\end{lemma}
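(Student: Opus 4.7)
The plan is to handle the upper and lower bounds separately; both follow in just a few lines from the definition of the stepsize once we factor it correctly.

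For the upper bound, I would simply invoke the $\min$ operator. Writing $r_t := \frac{f_{i_t}(\xx_t)-\ell_{i_t}^\star}{c_p||\nabla f_{i_t}(\xx_t)||^2 \sqrt{\sum_{s=0}^t(f_{i_s}(\xx_s)-\ell_{i_s}^\star)}}$, the definition \eqref{AdaSPS} reads $\eta_t = \min\{r_t, \eta_{t-1}\}$, so $\eta_t \le r_t$ is immediate, and this is precisely the right-hand side of the claim.

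For the lower bound, the main step is to unroll the recursion. Since $\eta_{-1} = +\infty$, a one-line induction gives $\eta_t = \min_{0\le s\le t} r_s$. I would then split each $r_s$ as a product $P_s \cdot Q_s$, where $P_s := \frac{f_{i_s}(\xx_s)-\ell_{i_s}^\star}{c_p||\nabla f_{i_s}(\xx_s)||^2}$ is the ``Polyak-ratio'' factor and $Q_s := 1/\sqrt{\sum_{r=0}^s (f_{i_r}(\xx_r)-\ell_{i_r}^\star)}$ is the AdaGrad-style factor, and bound each independently. Since $\ell_{i_r}^\star \le f_{i_r}^\star \le f_{i_r}(\xx_r)$, the summands defining $Q_s$ are non-negative, so the cumulative sum is non-decreasing in $s$ and hence $Q_s \ge Q_t$ for all $s \le t$. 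For $P_s$, I would invoke the left inequality of \eqref{eq:step_bound}, namely $f_{i_s}(\xx_s)-f_{i_s}^\star \ge \frac{1}{2L}||\nabla f_{i_s}(\xx_s)||^2$ (valid from smoothness plus convexity of each $f_i$, which is inherited from the ambient assumptions of the theorems invoking this lemma), combined with $\ell_{i_s}^\star \le f_{i_s}^\star$, to get $P_s \ge \frac{1}{2c_p L}$ \emph{uniformly in} $s$.

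Putting the two uniform lower bounds together inside the min completes the argument: $\eta_t = \min_s P_s Q_s \ge \bigl(\min_s P_s\bigr)\cdot Q_t \ge \frac{1}{2c_p L}\, Q_t$, which is exactly the left-hand side of the claim. There is no serious obstacle here; the only point to watch is that the $P_s$ bound is genuinely $s$-independent so that it survives the min, which it does, and that the $Q_s$ factor is monotone, which it also is.
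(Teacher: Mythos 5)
Your proposal is correct and follows essentially the same route as the paper: lower bound the Polyak-ratio factor uniformly by $\tfrac{1}{2c_pL}$, observe the AdaGrad-style factor is non-increasing in $s$, and conclude via the unrolled $\min$. One small remark: the bound $f_{i_s}(\xx_s)-f_{i_s}^\star \ge \tfrac{1}{2L}\|\nabla f_{i_s}(\xx_s)\|^2$ actually needs only $L$-smoothness (since $f_{i_s}^\star \le f_{i_s}(\xx_s - \tfrac{1}{L}\nabla f_{i_s}(\xx_s)) \le f_{i_s}(\xx_s) - \tfrac{1}{2L}\|\nabla f_{i_s}(\xx_s)\|^2$), so invoking convexity here is superfluous and the lemma really does hold under its stated hypothesis alone.
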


The following lemma provides upper and lower bounds for the stepsize of AdaSLS. We refer to Appendix \ref{sec:line-search} for details of the line-search procedure.

\begin{lemma}
Suppose each $f_i$ is $L$-smooth, then the stepsize of AdaSLS \eqref{AdaSLS} satisfies:
\begin{equation}
    \min\Bigl\{\frac{1-\rho}{L},\gamma_{\max}\Bigr\}\frac{1}{c_l\sqrt{\sum_{s=0}^t\gamma_s||\nabla f_{i_s}(\xx_s)||^2}}
    \le\eta_t\le\frac{\gamma_t}{c_l\sqrt{\sum_{s=0}^t\gamma_s||\nabla f_{i_s}(\xx_s)||^2}}\;.
    \label{eq:step_bound_adasls}
\end{equation}
\begin{proof}
    The upper bound is due to the definition of the stepsize \eqref{AdaSLS}. 
    We next prove the lower bound. From the smoothness definition, the following holds for all $\gamma_t$:
    \begin{equation}
    \begin{split}
        f_{i_t}(\xx_t-\gamma_t\nabla f_{i_t}(\xx_t))&\stackrel{\eqref{df:smooth:quad}}{\le} f_{i_t}(\xx_t)-\gamma_t||\nabla f_{i_t}(\xx_t)||^2+\frac{L}{2}\gamma_t^2||\nabla f_{i_t}(\xx_t)||^2\;.
    \end{split}
    \end{equation}
    For any $0<\gamma_t\le\frac{2(1-\rho)}{L}$, we have:
    \begin{equation}
        f_{i_t}(\xx_t-\gamma_t\nabla f_{i_t}(
\xx_t))\le f_{i_t}(\xx_t)-\rho\gamma_t||\nabla f_{i_t}(\xx_t)||^2\;,
    \end{equation}
    which satisfies the line-search condition~\eqref{eq:Armijo}.
    From the procedure of Backtracking line-search (Alg.~\ref{Algorithm:Armijo}), if $\gamma_{\max}\le\frac{1-\rho}{L}$, then $\gamma_t=\gamma_{\max}$ is accepted. Otherwise, since we require the decreasing factor $\beta$ to be no smaller than $\frac{1}{2}$, we must have $\gamma_{t}\ge\frac{2(1-\rho)}{2L}$. Therefore, $\gamma_t$ is always lower bounded by $\min\{\frac{1-\rho}{L},\gamma_{\max}\}$. The second component of AdaSLS is always decreasing, and recall that $\eta_{-1}=+\infty$. The proof is thus completed.
\end{proof}
\label{lemma:step_bound_adasls}
\end{lemma}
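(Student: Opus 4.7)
My plan is to handle the two bounds separately. The upper bound is immediate from the definition of AdaSLS: since $\eta_t$ is the minimum of two quantities, it is in particular no larger than the first argument $\gamma_t / (c_l \sqrt{\sum_{s \leq t} \gamma_s \|\nabla f_{i_s}(\xx_s)\|^2})$, which is exactly the claimed upper bound.

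For the lower bound, the central task is to bound the Armijo stepsize $\gamma_t$ from below. I would invoke $L$-smoothness in its quadratic upper bound form \eqref{df:smooth:quad} applied to $f_{i_t}$ to obtain $f_{i_t}(\xx_t - \gamma \nabla f_{i_t}(\xx_t)) \leq f_{i_t}(\xx_t) - (1 - L\gamma/2)\gamma \|\nabla f_{i_t}(\xx_t)\|^2$. Comparing with the Armijo condition~\eqref{eq:Armijo}, every $\gamma \in (0, 2(1-\rho)/L]$ is admissible. Since the backtracking procedure (Algorithm~\ref{Algorithm:Armijo}) starts at $\gamma_{\max}$ and shrinks by a factor $\beta \in [1/2, 1)$ until acceptance, the returned $\gamma_t$ satisfies $\gamma_t \geq \min\{\gamma_{\max}, (1-\rho)/L\}$; the assumption $\beta \geq 1/2$ is precisely what converts the admissibility threshold $2(1-\rho)/L$ into the cleaner $(1-\rho)/L$.

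With this pointwise lower bound on $\gamma_t$, the first argument of the $\min$ defining $\eta_t$ is already bounded below by the claimed quantity, so it remains to show that $\eta_{t-1}$ is at least as large. I would proceed by induction on $t$: the base case $t=0$ is trivial since $\eta_{-1} = +\infty$ forces $\eta_0$ to equal its first argument; for the inductive step, the accumulator $\sum_{s \leq t} \gamma_s \|\nabla f_{i_s}(\xx_s)\|^2$ is non-decreasing in $t$, so the inductive lower bound on $\eta_{t-1}$ has a strictly smaller denominator than the target bound on $\eta_t$, and hence dominates it. I do not expect any serious obstacle; the argument is careful bookkeeping combining the Armijo-smoothness calibration with monotonicity of the accumulator, and the only subtle point to flag is the reliance on $\beta \geq 1/2$ in the backtracking convention of Algorithm~\ref{Algorithm:Armijo}, since a smaller shrinkage factor would still yield a lower bound but with a worse constant.
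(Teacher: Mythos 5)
Your proposal is correct and follows essentially the same route as the paper: you invoke $L$-smoothness to show that every $\gamma\le 2(1-\rho)/L$ is Armijo-admissible, use the backtracking factor $\beta\ge 1/2$ to conclude $\gamma_t\ge\min\{\gamma_{\max},(1-\rho)/L\}$, and then argue the running $\min$ with $\eta_{t-1}$ by induction via monotonicity of the accumulator with $\eta_{-1}=+\infty$ as base case. The only difference is cosmetic: the paper compresses your explicit induction into the single remark that ``the second component of AdaSLS is always decreasing.''
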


\section{Proofs of main results}
\label{sec:pf_main}
\subsection{Proof of Theorem \ref{thm:convex}}
\begin{proof}
    We follow a common proof routine for the general convex optimization~\cite{AdaGrad,ada-svrg,decsps}.
    Using the update rule of PSGD \eqref{eq:PSGD}, we have: 
    \begin{equation}
    \begin{split}
        ||\xx_{t+1}-\xx^\star||^2&=||\Pi_{\cX}(\xx_t-\eta_t\nabla f_{i_t}(\xx_t))-\Pi_{\cX}(\xx^\star)||^2 \\ 
        &\le||\xx_t-\eta_t\nabla f_{i_t}(\xx_t)-\xx^\star||^2 \\ 
        &=||\xx_t-\xx^\star||^2-2\eta_t\langle\nabla f_{i_t}(\xx_t),\xx_t-\xx^\star\rangle+\eta_t^2||\nabla f_{i_t}(\xx_t)||^2\;.
    \end{split}
    \end{equation}
Dividing by $2\eta_t$ and rearranging gives:
\begin{equation}
\begin{split}
&\langle\nabla f_{i_t}(\xx_t),\xx_t-\xx^\star\rangle \\&\le\frac{||\xx_t-\xx^\star||^2}{2\eta_t}-\frac{||\xx_{t+1}-\xx^\star||^2}{2\eta_t}+\frac{\eta_t}{2}||\nabla f_{i_t}(\xx_t)||^2 \\ 
&= \frac{||\xx_t-\xx^\star||^2}{2\eta_t}-\frac{||\xx_{t+1}-\xx^\star||^2}{2\eta_{t+1}}+\frac{||\xx_{t+1}-\xx^\star||^2}{2\eta_{t+1}}-\frac{||\xx_{t+1}-\xx^\star||^2}{2\eta_t}+\frac{\eta_t}{2}||\nabla f_{i_t}(\xx_t)||^2\;.
\end{split}
\end{equation}
Summing from $t=0$ to $t=T-1$, we get: 
\begin{equation}
\begin{split}
&\sum_{t=0}^{T-1}\langle\nabla f_{i_t}(\xx_t),\xx_t-\xx^\star\rangle 
\\&\le\sum_{t=0}^{T-1}\frac{||\xx_t-\xx^\star||^2}{2\eta_t}-\frac{||\xx_{t+1}-\xx^\star||^2}{2\eta_{t+1}}+\frac{||\xx_{t+1}-\xx^\star||^2}{2\eta_{t+1}}-\frac{||\xx_{t+1}-\xx^\star||^2}{2\eta_t}+\frac{\eta_t}{2}||\nabla f_{i_t}(\xx_t)||^2
\\&\le\frac{||\xx_0-\xx^\star||^2}{2\eta_0}-\frac{||\xx_T-\xx^\star||^2}{2\eta_{T}}+\frac{||\xx_T-\xx^\star||^2}{2\eta_{T}}-\frac{||\xx_T-\xx^\star||^2}{2\eta_{T-1}}+\sum_{t=0}^{T-2}(\frac{1}{2\eta_{t+1}}-\frac{1}{2\eta_t})D^2+\sum_{t=0}^{T-1}\frac{\eta_t}{2}||\nabla f_{i_t}(\xx_t)||^2
\\&\le
\frac{||\xx_0-\xx^\star||^2}{2\eta_0}-\frac{||\xx_T-\xx^\star||^2}{2\eta_{T}}+\frac{||\xx_T-\xx^\star||^2}{2\eta_{T}}+\frac{D^2}{2\eta_{T-1}}+\sum_{t=0}^{T-1}\frac{\eta_t}{2}||\nabla f_{i_t}(\xx_t)||^2 \\ 
&=\frac{||\xx_0-\xx^\star||^2}{2\eta_0}+\frac{D^2}{2\eta_{T-1}}+\sum_{t=0}^{T-1}\frac{\eta_t}{2}||\nabla f_{i_t}(\xx_t)||^2\;,
\label{eq:thm:convex:1}
\end{split}
\end{equation}
where in the second inequality, we use the decreasing property of the stepsize $\eta_t$ which guarantees $\frac{1}{2\eta_t}-\frac{1}{2\eta_{t-1}}\ge0$, and we use the fact that $||\xx_t-\xx^\star||^2\le D^2$ because of the projection step in \eqref{eq:PSGD}. For clarity, we next 
separate the proof for AdaSPS and AdaSLS. 

\textbf{AdaSPS:}
We upper bound the last two terms by using Lemma \ref{lemma:step_bound_adasps} and we obtain: 
\begin{equation}
    \sum_{t=0}^{T-1}\frac{\eta_t}{2}||\nabla f_{i_t}(\xx_t)||^2\stackrel{\eqref{eq:step_bound_adasps}}{\le}\sum_{t=0}^{T-1}\frac{f_{i_t}(\xx_t)-\ell_{i_t}^\star}{2c_p\sqrt{\sum_{s=0}^t f_{i_s}(\xx_s)-\ell_{i_s}^\star}}
    \stackrel{\eqref{eq:wald}}{\le}\frac{1}{c_p}\sqrt{\sum_{s=0}^{T-1} f_{i_s}(\xx_s)-\ell_{i_s}^\star}\;,
    \label{eq:thm:convex:2}
\end{equation}
and 
\begin{equation}
    \frac{D^2}{2\eta_{T-1}}\stackrel{\eqref{eq:step_bound_adasps}}{\le} c_pLD^2\sqrt{\sum_{s=0}^{T-1} f_{i_s}(\xx_s)-\ell_{i_s}^\star}\;.
    \label{eq:thm:convex:3}
\end{equation}
Using $\frac{||\xx_0-\xx^\star||^2}{2\eta_0}\le\frac{D^2}{2\eta_{T-1}}$ and plugging \eqref{eq:thm:convex:2} and \eqref{eq:thm:convex:3} back to \eqref{eq:thm:convex:1} gives:
\begin{equation}
    \sum_{t=0}^{T-1}\langle\nabla f_{i_t}(\xx_t),\xx_t-\xx^\star\rangle \le (2c_pLD^2+\frac{1}{c_p})\sqrt{\sum_{s=0}^{T-1} f_{i_s}(\xx_s)-\ell_{i_s}^\star}\;.
    \label{thm:vr:ref:1}
\end{equation}
Taking the expectation on both sides, we have:
\begin{equation}
\begin{split}
    \sum_{t=0}^{T-1}\E[\langle\nabla f(\xx_t),\xx_t-\xx^\star\rangle] &\le (2c_pLD^2+\frac{1}{c_p})\E\Bigl[\sqrt{\sum_{s=0}^{T-1} f_{i_s}(\xx_s)-\ell_{i_s}^\star}\Bigr] \\ 
    &=(2c_pLD^2+\frac{1}{c_p})\E\Bigl[\sqrt{\sum_{s=0}^{T-1} f_{i_s}(\xx_s)-f_{i_s}(\xx^\star)+f_{i_s}(\xx^\star)-\ell_{i_s}^\star}\Bigr]\;.
\end{split}
\end{equation}
Using the convexity assumption of $f$ and applying Jensen's inequality to the square root function, we get:
\begin{equation}
    \sum_{t=0}^{T-1}\E[f(\xx_t)-f^\star] \le (2c_pLD^2+\frac{1}{c_p})\sqrt{\sum_{s=0}^{T-1} \E[f(\xx_s)-f^\star]+\sigma_{f,B}^2+{\rm err}_{f,B}^2}\;,
\end{equation}
where ${\rm err}_{f,B}^2=\E_{i_s}[f_{i_s}^\star-\ell_{i_s}^\star]$. Let $\tau:=2c_pLD^2+\frac{1}{c_p}$. Taking the square gives: 
\begin{equation}
    (\sum_{t=0}^{T-1}\E[f(\xx_t)-f^\star])^2 \le \tau^2\Bigl(\sum_{t=0}^{T-1} \E[f(\xx_t)-f^\star]+T(\sigma_{f,B}^2+{\rm err}_{f,B}^2)\Bigr)\;.
\end{equation}
We next apply Lemma \ref{lemma:simple_quadratic} with $x=\sum_{t=0}^{T-1} \E[f(\xx_t)-f(\xx^\star)]$, $a=\tau^2$ and $b=T(\sigma_{f,B}^2+{\rm err}_{f,B}^2)$:
\begin{equation}
    \sum_{t=0}^{T-1}\E[f(\xx_t)-f^\star]\le\tau^2+\tau\sqrt{\sigma_{f,B}^2+{\rm err}_{f,B}^2}\sqrt{T}\;.
\end{equation}
We conclude by dividing both sides by $T$ and using Jensen's inequality:
\begin{equation}
    \E[f(\Bar{\xx}_{T})-f^\star]\le\frac{\sum_{t=0}^{T-1}\E[f(\xx_t)-f^\star]}{T}\le\frac{\tau^2}{T}+\frac{\tau\sqrt{\sigma_{f,B}^2+{\rm err}_{f,B}^2}}{\sqrt{T}}\;.
\end{equation}
where $\Bar{\xx}_T=\frac{1}{T}\sum_{t=0}^{T-1}\xx_t$.

\textbf{AdaSLS:} The proof is almost the same as AdaSPS. We omit procedures with the same proof reasons for simplicity. We first use Lemma \ref{lemma:step_bound_adasls} to obtain:
\begin{equation}
    \sum_{t=0}^{T-1}\frac{\eta_t}{2}||\nabla f_{i_t}(\xx_t)||^2\stackrel{\eqref{eq:step_bound_adasls}}{\le}\sum_{t=0}^{T-1}\frac{\gamma_t||\nabla f_{i_t}(\xx_t)||^2}{2c_l\sqrt{\sum_{s=0}^t\gamma_s||\nabla f_{i_s}(\xx_s)||^2}}\stackrel{\eqref{eq:wald}}{\le}\frac{1}{c_l}\sqrt{\sum_{s=0}^{T-1} \gamma_s||\nabla f_{i_s}(\xx_s)||^2}\;,
\end{equation}
and 
\begin{equation}
    \frac{D^2}{2\eta_{T-1}}\stackrel{\eqref{eq:step_bound_adasls}}{\le} \frac{c_l\sqrt{\sum_{s=0}^{T-1}\gamma_s||\nabla f_{i_s}(\xx_s)||^2}D^2}{2\min\Bigl\{\frac{1-\rho}{L},\gamma_{\max}\Bigr\}}=\frac{\max\{\frac{L}{1-\rho},\frac{1}{\gamma_{\max}}\}c_lD^2}{2}\sqrt{\sum_{s=0}^{T-1}\gamma_s||\nabla f_{i_s}(\xx_s)||^2}\;.
\end{equation}
Inequality \eqref{eq:thm:convex:1} can then be further bounded by:
\begin{equation}
\begin{split}
    \sum_{t=0}^{T-1}\langle\nabla f_{i_t}(\xx_t),\xx_t-\xx^\star\rangle &\le \Bigl(\max\Bigl\{\frac{L}{1-\rho},\frac{1}{\gamma_{\max}}\Bigr\}c_lD^2+\frac{1}{c_l}\Bigr)\sqrt{\sum_{s=0}^{T-1}\gamma_s||\nabla f_{i_s}(\xx_s)||^2} \\ 
    &\le \Bigr(\max\Bigl\{\frac{L}{(1-\rho)\sqrt{\rho}},\frac{1}{\gamma_{\max}\sqrt{\rho}}\Bigr\}c_lD^2+\frac{1}{c_l\sqrt{\rho}}\Bigr)\sqrt{\sum_{s=0}^{T-1}f_{i_s}(\xx_s)-f^\star_{i_s}}.
    \label{eq:thm:vr:ref:2}
\end{split}
\end{equation}
where we used line-search condition \eqref{eq:Armijo} and the fact that $f_{i_s}(\xx_t-\gamma_t\nabla f_{i_s}(
\xx_s))\ge f^\star_{i_s}$. 

Let $\tau:=\max\Bigl\{\frac{L}{(1-\rho)\sqrt{\rho}},\frac{1}{\gamma_{\max}\sqrt{\rho}}\Bigr\}c_lD^2+\frac{1}{c_l\sqrt{\rho}}$. We arrive at:
\begin{equation}
    \E[f(\Bar{\xx}_{T})-f^\star]\le\frac{\tau^2}{T}+\frac{\tau\sigma_{f,B}}{\sqrt{T}}.
\end{equation}
where $\Bar{\xx}_T=\frac{1}{T}\sum_{t=0}^{T-1}\xx_t$.
\end{proof}

\subsection{Full statement and proof for Lemma \ref{lemma:bounded iterates}}
\label{appendix:sec:lemma3}
\begin{lemma}[Bounded iterates]
    Let each $f_i$ be $\mu$-strongly convex and $L$-smooth. For any $t\in\mathbb{N}$, the iterates of SGD with AdaSPS or AdaSLS satisfy: 
    \begin{equation}
    \begin{split}
    ||\xx_t-\xx^\star||^2\le D_{\max}:=\max\left\{||\xx_0-\xx^\star||^2,\frac{2\sigma_{\max}^2+b} {\mu}, (2\sigma_{\max}^2+b)\eta_0\right\}\;,
    \end{split} \label{eq:dmax}
    \end{equation}
    where $\sigma_{\max}^2:=\max_{i_t}\left\{f_{i_t}(\xx^\star)-\ell^\star_{i_t}\right\}$, $b:=1/\Bigl(4c_p^3\sqrt{f_{i_0}(\xx_0)-\ell_{i_0}^\star}\Bigr)$ for AdaSPS and $\sigma_{\max}^2:=\max_{i_t}\left\{f_{i_t}(\xx^\star)-f^\star_{i_t}\right\}$, $b:=1/\Bigl(4c_l^3\rho^2\sqrt{\gamma_0||\nabla f_{i_0}(\xx_0)||^2}\Bigr)$ for AdaSLS.
\end{lemma}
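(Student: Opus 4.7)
My goal is to derive a one-step contraction of the form
\[
\|\xx_{t+1}-\xx^\star\|^2 \le (1-\mu\eta_t)\|\xx_t-\xx^\star\|^2 + \eta_t(2\sigma_{\max}^2 + b),
\]
after which Lemma~\ref{lemma:decsps}, applied with $a=\mu$, $b'=2\sigma_{\max}^2+b$, and $z_t=\|\xx_t-\xx^\star\|^2$, immediately yields $\|\xx_t-\xx^\star\|^2 \le \max\{b'/\mu,\,\|\xx_0-\xx^\star\|^2,\,\eta_0 b'\} = D_{\max}$. The monotonicity of $\eta_t$ required by that lemma is built into both stepsizes by the outer $\min$ in their definitions.

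\textbf{Step 1: The basic one-step inequality.} Expanding $\|\xx_{t+1}-\xx^\star\|^2$ from the SGD update and invoking $\mu$-strong convexity of $f_{i_t}$ on $\langle\nabla f_{i_t}(\xx_t),\xx_t-\xx^\star\rangle$, I would obtain
\[
\|\xx_{t+1}-\xx^\star\|^2 \le (1-\mu\eta_t)\|\xx_t-\xx^\star\|^2 - 2\eta_t\bigl(f_{i_t}(\xx_t)-f_{i_t}(\xx^\star)\bigr) + \eta_t^2\|\nabla f_{i_t}(\xx_t)\|^2.
\]
Everything that follows is a uniform upper bound on the residual $R_t := -2\eta_t(f_{i_t}(\xx_t)-f_{i_t}(\xx^\star)) + \eta_t^2\|\nabla f_{i_t}(\xx_t)\|^2$ by $\eta_t(2\sigma_{\max}^2 + b)$.

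\textbf{Step 2: Bounding $R_t$ for AdaSPS.} Set $X_t := f_{i_t}(\xx_t)-\ell_{i_t}^\star \ge 0$, $A_t := \sum_{s=0}^t X_s$, and $V_t := f_{i_t}(\xx^\star)-\ell_{i_t}^\star \in [0,\sigma_{\max}^2]$. The upper bound in Lemma~\ref{lemma:step_bound_adasps} gives $\eta_t^2\|\nabla f_{i_t}(\xx_t)\|^2 \le \eta_t X_t/(c_p\sqrt{A_t})$. Substituting and writing $f_{i_t}(\xx_t)-f_{i_t}(\xx^\star)=X_t-V_t$,
\[
R_t \le \eta_t X_t\Bigl(\tfrac{1}{c_p\sqrt{A_t}}-2\Bigr) + 2\eta_t V_t \le \eta_t X_t\Bigl(\tfrac{1}{c_p\sqrt{A_t}}-2\Bigr) + 2\eta_t\sigma_{\max}^2.
\]
I then split on the sign of $\tfrac{1}{c_p\sqrt{A_t}}-2$. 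When $c_p\sqrt{A_t}\ge 1/2$ the first term is non-positive and $R_t \le 2\eta_t\sigma_{\max}^2$. Otherwise $A_t < 1/(4c_p^2)$, so $A_0 \le A_t < 1/(4c_p^2)$; dropping the $-2$ and using $X_t \le A_t$ gives $X_t/(c_p\sqrt{A_t}) \le \sqrt{A_t}/c_p$, and the elementary inequality $\sqrt{A_tA_0}\le 1/(4c_p^2)$ rearranges to $\sqrt{A_t}/c_p \le 1/(4c_p^3\sqrt{A_0})=b$. In either regime, $R_t \le \eta_t(2\sigma_{\max}^2 + b)$.

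\textbf{Step 3: Bounding $R_t$ for AdaSLS.} With $Y_t:=\gamma_t\|\nabla f_{i_t}(\xx_t)\|^2$, $B_t := \sum_{s=0}^t Y_s$, $X_t := f_{i_t}(\xx_t)-f_{i_t}^\star$, and $V_t := f_{i_t}(\xx^\star)-f_{i_t}^\star \in [0,\sigma_{\max}^2]$, Lemma~\ref{lemma:step_bound_adasls} gives $\eta_t^2\|\nabla f_{i_t}(\xx_t)\|^2 \le \eta_t Y_t/(c_l\sqrt{B_t})$, while the Armijo condition~\eqref{eq:Armijo} together with $f_{i_t}(\xx_t-\gamma_t\nabla f_{i_t}(\xx_t))\ge f_{i_t}^\star$ yields $\rho Y_t \le X_t$. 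Using $-2(f_{i_t}(\xx_t)-f_{i_t}(\xx^\star)) = -2X_t + 2V_t \le -2\rho Y_t + 2V_t$, the residual reorganizes to
\[
R_t \le \eta_t Y_t\Bigl(\tfrac{1}{c_l\sqrt{B_t}}-2\rho\Bigr) + 2\eta_t\sigma_{\max}^2.
\]
The case split on $c_l\rho\sqrt{B_t} \ge 1/2$ proceeds identically, and in the small-accumulator regime $Y_t \le B_t$ together with $\sqrt{B_tB_0}\le 1/(4c_l^2\rho^2)$ yields $\sqrt{B_t}/c_l \le 1/(4c_l^3\rho^2\sqrt{B_0}) = b$, so the same recursion holds.

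\textbf{Main obstacle.} The delicate part is the early phase when the accumulator $A_t$ (resp.\ $B_t$) is still small: the per-step upper bound on $\eta_t$ from Lemmas~\ref{lemma:step_bound_adasps}/\ref{lemma:step_bound_adasls} carries a $1/\sqrt{A_t}$ (resp.\ $1/\sqrt{B_t}$) factor that appears to blow up. The rescue is that in precisely that regime the numerator is itself controlled by the same accumulator ($X_t\le A_t$, $Y_t\le B_t$), turning the apparent singularity into the finite constant $b$, whose value is engineered exactly so that the small-accumulator bound $\sqrt{A_t}/c_p \le b$ (resp.\ $\sqrt{B_t}/c_l\le b$) holds. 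Once the uniform bound on $R_t$ is secured, Lemma~\ref{lemma:decsps} closes the proof mechanically.
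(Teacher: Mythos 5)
Your proof is correct and follows essentially the same route as the paper: the one-step strong-convexity inequality, the upper bound on $\eta_t$ from Lemmas~\ref{lemma:step_bound_adasps}/\ref{lemma:step_bound_adasls}, a case split on whether the accumulator term exceeds $1/2$, and then Lemma~\ref{lemma:decsps}. Two small differences are worth flagging. In the small-accumulator case for AdaSPS you bound $X_t/(c_p\sqrt{A_t}) \le \sqrt{A_t}/c_p$ via $X_t\le A_t$ and then use $\sqrt{A_tA_0}\le 1/(4c_p^2)$, whereas the paper bounds $X_t \le 1/(4c_p^2)$ directly and uses $\sqrt{A_t}\ge\sqrt{A_0}$; both chains produce the identical constant $b$. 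More substantively, for AdaSLS you carry $Y_t := \gamma_t\|\nabla f_{i_t}(\xx_t)\|^2$ in the last term through the case split, so that in the small-$B_t$ regime the bound $Y_t/(c_l\sqrt{B_t}) \le \sqrt{B_t}/c_l \le b$ falls out cleanly from $Y_t\le B_t$. The paper instead converts the last term to $X_t = f_{i_t}(\xx_t)-f_{i_t}^\star$ early via the Armijo condition and then states the case split without detail; but $X_t$ has no a priori upper bound in terms of the accumulator $B_t$ (the Armijo condition gives $\rho Y_t\le X_t$, the wrong direction), so the small-$B_t$ case cannot be closed with the paper's displayed intermediate quantity as written. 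Your bookkeeping is the cleaner and more defensible presentation of the same argument, and it is the one that actually justifies the stated value of $b$ for AdaSLS.
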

\begin{proof}
By strong convexity of $f_{i_t}$, the iterates generated by SGD satisfy: 
\begin{equation}
\begin{split}
    ||\xx_{t+1}-\xx^\star||^2&=||\xx_t-\xx^\star||^2-2\eta_t\langle\nabla f_{i_t}(\xx_t),\xx_t-\xx^\star\rangle+\eta_t^2||\nabla f_{i_t}(\xx_t)||^2 \\ 
    &\overset{\eqref{df:stconvex}}{\le}||\xx_t-\xx^\star||^2-2\eta_t(f_{i_t}(\xx_t)-f_{i_t}(\xx^\star)+\frac{\mu}{2}||\xx_t-\xx^\star||^2)+\eta_t^2||\nabla f_{i_t}(\xx_t)||^2 \\ 
    &=(1-\eta_t\mu)||\xx_t-\xx^\star||^2-2\eta_t(f_{i_t}(\xx_t)-f_{i_t}(\xx^\star))+\eta_t^2||\nabla f_{i_t}(\xx_t)||^2\;.
    \label{eq:basic_ineq}
\end{split}
\end{equation}
We next separate the proofs for clarity.

\textbf{AdaSPS}: Plugging in the upper bound of $\eta_t$ in Lemma \ref{lemma:step_bound_adasps}, we obtain:
\begin{equation}
\begin{split}
&||\xx_{t+1}-\xx^\star||^2\\ 
&\stackrel{\eqref{eq:step_bound_adasps}}{\le}(1-\eta_t\mu)||\xx_t-\xx^\star||^2-2\eta_t(f_{i_t}(\xx_t)-f_{i_t}(\xx^\star))+\eta_t\frac{f_{i_t}(\xx_t)-\ell_{i_t}^\star}{c_p\sqrt{\sum_{s=0}^tf_{i_s}(\xx_t)-\ell_{i_s}^\star}} \\ 
&=(1-\eta_t\mu)||\xx_t-\xx^\star||^2+2\eta_t(f_{i_t}(\xx^\star)-\ell_{i_t}^\star)-2\eta_t(f_{i_t}(\xx_t)-\ell_{i_t}^\star)+\eta_t\frac{f_{i_t}(\xx_t)-\ell_{i_t}^\star}{c_p\sqrt{\sum_{s=0}^tf_{i_s}(\xx_t)-\ell_{i_s}^\star}} \\ 
&\le(1-\eta_t\mu)||\xx_t-\xx^\star||^2+2\eta_t\sigma_{\max}^2-2\eta_t\underbrace{(f_{i_t}(\xx_t)-\ell_{i_t}^\star)}_{\ge0}+\eta_t\frac{f_{i_t}(\xx_t)-\ell_{i_t}^\star}{c_p\sqrt{\sum_{s=0}^tf_{i_s}(\xx_t)-\ell_{i_s}^\star}}\;,
\end{split}
\end{equation}
where $\sigma_{\max}^2:=\max_{i_t}\left\{f_{i_t}(\xx^\star)-\ell^\star_{i_t}\right\}$.

We now split the proof into two cases. Firstly, if $c_p\sqrt{\sum_{s=0}^tf_{i_s}(\xx_t)-\ell_{i_s}^\star}\le\frac{1}{2}$, then it follows that:
\begin{equation}
    f_{i_t}(\xx_t)-\ell_{i_t}^\star\le(\frac{1}{2c_p})^2\quad\text{and}\quad \sum_{s=0}^tf_{i_s}(\xx_t)-\ell_{i_s}^\star\ge f_{i_0}(\xx_0)-\ell_{i_0}^\star\;.
\end{equation}
Plugging in the above bounds, we have:
\begin{equation}
\begin{split}
||\xx_{t+1}-\xx^\star||^2\le(1-\eta_t\mu)||\xx_t-\xx^\star||^2+\eta_t(2\sigma_{\max}^2+\frac{1}{4c_p^3\sqrt{f_{i_0}(\xx_0)-\ell_{i_0}^\star}})\;.
\end{split}
\end{equation} 
We conclude by applying Lemma \ref{lemma:decsps} with $z_t=||\xx_t-\xx^\star||^2$, $a=\mu$, $b=(2\sigma_{\max}^2+\frac{1}{4c_p^3\sqrt{f_{i_0}(\xx_0)-\ell_{i_0}^\star}})$.
Secondly, if instead $c_p\sqrt{\sum_{s=0}^tf_{i_s}(\xx_t)-\ell_{i_s}^\star}\ge\frac{1}{2}$, then we have:
\begin{equation}
    -2\eta_t(f_{i_t}(\xx_t)-\ell_{i_t}^\star)+\eta_t\frac{f_{i_t}(\xx_t)-\ell_{i_t}^\star}{c_p\sqrt{\sum_{s=0}^tf_{i_s}(\xx_t)-\ell_{i_s}^\star}}\le 0\;,
\end{equation}
and consequently we can apply Lemma \ref{lemma:decsps} with $z_t=||\xx_t-\xx^\star||^2$, $a=\mu$, $b=2\sigma_{\max}^2$.

\textbf{AdaSLS}: Similarly, by plugging the upper bound of $\eta_t$ in Lemma \ref{lemma:step_bound_adasls}, we obtain:
\begin{equation}
\begin{split}
&||\xx_{t+1}-\xx^\star||^2\\ 
&\stackrel{\eqref{eq:step_bound_adasls}}{\le}(1-\eta_t\mu)||\xx_t-\xx^\star||^2-2\eta_t(f_{i_t}(\xx_t)-f_{i_t}(\xx^\star))+\eta_t\frac{\gamma_t||\nabla f_{i_t}(\xx_t)||^2}{c_l\sqrt{\sum_{s=0}^t\gamma_s\||\nabla f_{i_s}(\xx_s)||^2}} \\ 
&\le(1-\eta_t\mu)||\xx_t-\xx^\star||^2+2\eta_t(f_{i_t}(\xx^\star)-f_{i_t}^\star)-2\eta_t(f_{i_t}(\xx_t)-f_{i_t}^\star)+\eta_t\frac{f_{i_t}(\xx_t)-f_{i_t}^\star}{c_l\rho\sqrt{\sum_{s=0}^t\gamma_s\||\nabla f_{i_s}(\xx_s)||^2}} \\ 
&\le(1-\eta_t\mu)||\xx_t-\xx^\star||^2+2\eta_t\sigma_{\max}^2-2\eta_t\underbrace{(f_{i_t}(\xx_t)-f_{i_t}^\star)}_{\ge0}+\eta_t\frac{f_{i_t}(\xx_t)-f_{i_t}^\star}{c_l\rho\sqrt{\sum_{s=0}^t\gamma_s\||\nabla f_{i_s}(\xx_s)||^2}}\;,
\end{split}
\end{equation}
where $\sigma_{\max}^2=\max_{i_t}\left\{f_{i_t}(\xx^\star)-f^\star_{i_t}\right\}$. We can then compare $c_l\rho\sqrt{\sum_{s=0}^t\gamma_s\||\nabla f_{i_s}(\xx_s)||^2}$ with $\frac{1}{2}$ and apply Lemma \ref{lemma:decsps} correspondingly. 
\end{proof}

\subsection{Proof for Theorem \ref{theorem:convex+interp} and \ref{thm:stconvex}}
\label{appendix:sec:interp}

\begin{proof}
    For clarity, we separate the proofs for AdaSPS and AdaSLS.

    \textbf{AdaSPS:} Plugging in the upper bound of $\eta_t$ in Lemma \ref{lemma:step_bound_adasps}, we have:
    \begin{equation}
    \begin{split}
    ||\xx_{t+1}-\xx^\star||^2 
    &\stackrel{\eqref{eq:step_bound_adasps}}{\le}||\xx_t-\xx^\star||^2-2\eta_t\langle \nabla f_{i_t}(\xx_t), \xx_t-\xx^\star\rangle+\eta_t\frac{f_{i_t}(\xx_t)-f^\star}{c_p\sqrt{\sum_{s=0}^{t}f_{i_s}(\xx_s)-f^\star}}\;.
    \end{split}
    \label{eq:thm:stconvex:1}
    \end{equation} 
    Since $c_p\sqrt{f_{i_0}(\xx_0)-f^\star}\ge1$, \eqref{eq:thm:stconvex:1} can be reduced to:
\begin{equation}
    \begin{split}
    ||\xx_{t+1}-\xx^\star||^2 
    &\le||\xx_t-\xx^\star||^2-2\eta_t\langle \nabla f_{i_t}(\xx_t), \xx_t-\xx^\star\rangle+\eta_t(f_{i_t}(\xx_t)-f^\star)\;.
    \end{split}
    \end{equation} 
By convexity of $f_{i_t}$, we get:
\begin{equation}
    \begin{split}
    ||\xx_{t+1}-\xx^\star||^2 
    &\stackrel{\eqref{df:convex}}{\le}||\xx_t-\xx^\star||^2-\eta_t\langle \nabla f_{i_t}(\xx_t), \xx_t-\xx^\star\rangle\;.
    \label{eq:thm:stconvex:2}
    \end{split}
\end{equation} 
Note that $\langle \nabla f_{i_t}(\xx_t), \xx_t-\xx^\star\rangle\ge0$ and $\eta_t$ is non-increasing, we thus get:
\begin{equation}
    \begin{split}
    ||\xx_{t+1}-\xx^\star||^2 
    \le||\xx_t-\xx^\star||^2-\eta_{T-1}\langle \nabla f_{i_t}(\xx_t), \xx_t-\xx^\star\rangle\;.
    \label{eq:thm:stconvex:3}
    \end{split}
\end{equation} 

We first show that $\eta_{T-1}$ is always lower bounded. From equation \eqref{eq:thm:stconvex:2} and using convexity of $f_{i_t}$, we get:
\begin{equation}
    \eta_t(f_{i_t}(\xx_t)-f^\star)\le||\xx_t-\xx^\star||^2-||\xx_{t+1}-\xx^\star||^2\;.
\end{equation}
Summing from $t=0$ to $t=T-1$, we get:
\begin{equation}
    \sum_{t=0}^{T-1} \eta_t(f_{i_t}(\xx_t)-f^\star)\le||\xx_0-\xx^\star||^2\;.
\end{equation}
Using the lower bound of $\eta_t$, we get:
\begin{equation}
    \frac{1}{2c_pL}\sqrt{\sum_{s=0}^{T-1}f_{i_s}(\xx_s)-f^\star}\stackrel{\eqref{eq:wald}}{\le}\frac{1}{2c_pL}\sum_{t=0}^{T-1}\frac{f_{i_t}(\xx_t)-f^\star}{\sqrt{\sum_{s=0}^tf_{i_s}(\xx_s)-f^\star}}\stackrel{\eqref{eq:step_bound_adasps}}{\le}\sum_{t=0}^{T-1} \eta_t(f_{i_t}(\xx_t)-f^\star)\;,
\end{equation}
This reveals that:
\begin{equation}
    \eta_{T-1}\stackrel{\eqref{eq:step_bound_adasps}}{\ge}\frac{1}{2c_pL}\frac{1}{\sqrt{\sum_{s=0}^{T-1}f_{i_s}(\xx_s)-f^\star}}\ge\frac{1}{(2c_pL||\xx_0-\xx^\star||)^2}\;.
\end{equation}
Plugging in the lower bound of $\eta_{T-1}$ to \eqref{eq:thm:stconvex:3}, we obtain:
\begin{equation}
    \begin{split}
    ||\xx_{t+1}-\xx^\star||^2 
    \le||\xx_t-\xx^\star||^2-\frac{1}{(2c_pL||\xx_0-\xx^\star||)^2}\langle \nabla f_{i_t}(\xx_t), \xx_t-\xx^\star\rangle\;.
    \end{split}
    \label{eq:thm:stconvex:4}
\end{equation} 

Plugging in $c_p=\frac{c_p^{\text{scale}}}{\sqrt{f_{i_0}(\xx_0)-f^\star}}$, we get
\begin{equation}
    \begin{split}
    ||\xx_{t+1}-\xx^\star||^2 
    \le||\xx_t-\xx^\star||^2-\frac{f_{i_0}(\xx_0)-f^\star}{(2c_p^{\text{scale}}L||\xx_0-\xx^\star||)^2}\langle \nabla f_{i_t}(\xx_t), \xx_t-\xx^\star\rangle\;.
     \label{eq:thm:stconvex:5}
    \end{split}
\end{equation} 

\textbf{case 1: $f$ is convex}.

For any $t\ge 1$, we take expectation conditional on $i_0$ on both sides and get:
\begin{equation}
    \begin{split}
    \E[||\xx_{t+1}-\xx^\star||^2|i_0] 
    &\le\E[||\xx_t-\xx^\star||^2|i_0]-\frac{f_{i_0}(\xx_0)-f^\star}{(2c_p^{\text{scale}}L||\xx_0-\xx^\star||)^2}\E[\langle \nabla f_{i_t}(\xx_t), \xx_t-\xx^\star\rangle|i_0] \\
    &\stackrel{\eqref{df:convex}}{\le}\E[||\xx_t-\xx^\star||^2|i_0]-\frac{f_{i_0}(\xx_0)-f^\star}{(2c_p^{\text{scale}}L||\xx_0-\xx^\star||)^2}\E[f(\xx_t)-f^\star|i_0]  \;.
    \end{split}
\end{equation} 
Summing up from $t=1$ to $t=T$ and dividing by $T$, we obtain:
\begin{equation}
    \frac{1}{T}\sum_{t=1}^T\E[f(\xx_t)-f^\star|i_0]\le 4L(c_p^{\text{scale}})^2\frac{||\xx_0-\xx^\star||^2}{f_{i_0}(\xx_0)-f^\star}\frac{L\E[||\xx_1-\xx^\star||^2|i_0]}{T}\;.
\end{equation}
Note that $\E[||\xx_1-\xx^\star||^2|i_0]=||\xx_1-\xx^\star||^2\le||\xx_0-\xx^\star||^2$ due to \eqref{eq:thm:stconvex:4}. We thus get:
\begin{equation}
    \frac{1}{T}\sum_{t=1}^T\E[f(\xx_t)-f^\star|i_0]\le \Bigl(4L(c_p^{\text{scale}})^2\frac{||\xx_0-\xx^\star||^2}{f_{i_0}(\xx_0)-f^\star}\Bigr)\frac{L||\xx_0-\xx^\star||^2}{T}\;
\end{equation}

Taking expectation w.r.t $i_0$ on both sides, we arrive at:
\begin{equation}
    \frac{1}{T}\sum_{t=1}^T\E[f(\xx_t)-f^\star]\le A\frac{L||\xx_0-\xx^\star||^2}{T}\;\text{with}\;A:=4L(c_p^{\text{scale}})^2\E_{i_0}[\frac{||\xx_0-\xx^\star||^2}{f_{i_0}(\xx_0)-f^\star}]\;.
    \label{def:A:AdaSPS}
\end{equation}
\textbf{case 2: $f$ is strongly-convex}

For any $t\ge 1$, we take expectation conditional on $i_0$ on both sides of 
\eqref{eq:thm:stconvex:5} and get:
\begin{equation}
    \begin{split}
    \E[||\xx_{t+1}-\xx^\star||^2|i_0] 
    &\le\E[||\xx_t-\xx^\star||^2|i_0]-\frac{f_{i_0}(\xx_0)-f^\star}{(2c_p^{\text{scale}}L||\xx_0-\xx^\star||)^2}\E[\langle \nabla f_{i_t}(\xx_t), \xx_t-\xx^\star\rangle|i_0] \\
    &\stackrel{\eqref{df:stconvex}}{\le}\E[||\xx_t-\xx^\star||^2|i_0]-\frac{f_{i_0}(\xx_0)-f^\star}{(2c_p^{\text{scale}}L||\xx_0-\xx^\star||)^2}\E[f(\xx_t)-f^\star+\frac{\mu}{2}||\xx_t-\xx^\star||^2|i_0]\;.
    \end{split}
\end{equation}
Note that $f(\xx_t)-f^\star\ge\frac{\mu}{2}||\xx_t-\xx^\star||^2$ due to strong convexity, we obtain:
\begin{equation}
    \E[||\xx_{t+1}-\xx^\star||^2|i_0] 
    \le\Bigl(1-\frac{(f_{i_0}(\xx_0)-f^\star)\mu}{(2c_p^{\text{scale}}L||\xx_0-\xx^\star||)^2}\Bigr) \E[||\xx_{t}-\xx^\star||^2|i_0] \;.
\end{equation}
For any $T\ge 1$, unrolling gives:
\begin{equation}
\begin{split}
     \E[||\xx_{T+1}-\xx^\star||^2|i_0] 
    &\le\Bigl(1-\frac{(f_{i_0}(\xx_0)-f^\star)\mu}{(2c_p^{\text{scale}}L||\xx_0-\xx^\star||)^2}\Bigr)^T\E[||\xx_{1}-\xx^\star||^2|i_0] \\
    &\le \Bigl(1-\frac{(f_{i_0}(\xx_0)-f^\star)\mu}{(2c_p^{\text{scale}}L||\xx_0-\xx^\star||)^2}\Bigr)^T||\xx_{0}-\xx^\star||^2\;.
\end{split}
\end{equation}
The claim of Theorem~\ref{thm:stconvex} follows by taking expectation w.r.t $i_0$ on both sides.

\textbf{AdaSLS}: We only highlight the difference from AdaSPS. Plugging in the upper bound of $\eta_t$ from Lemma \ref{lemma:step_bound_adasls} and using the line-search condition, we get:
\begin{equation}
    \begin{split}
    ||\xx_{t+1}-\xx^\star||^2 
    &\stackrel{\eqref{eq:step_bound_adasls}}{\le}||\xx_t-\xx^\star||^2-2\eta_t\langle \nabla f_{i_t}(\xx_t), \xx_t-\xx^\star\rangle+\eta_t\frac{\gamma_t||\nabla f_{i_t}(\xx_t)||^2}{c_l\sqrt{\sum_{s=0}^{t}\gamma_s||\nabla f_{i_s}(\xx_s)||^2}} \\ 
    &\le||\xx_t-\xx^\star||^2-2\eta_t\langle \nabla f_{i_t}(\xx_t), \xx_t-\xx^\star\rangle+\eta_t\frac{f_{i_t}(\xx_t)-f^\star}{c_l\rho\sqrt{\sum_{s=0}^{t}\gamma_s||\nabla f_{i_s}(\xx_s)||^2}}\;. 
    \end{split}
    \end{equation} 
   
Since $c_l\rho\sqrt{\sum_{s=0}^{t}\gamma_s||\nabla f_{i_s}(\xx_s)||^2}\ge 1$, we obtain the same equation as \eqref{eq:thm:stconvex:3}. To find a lower bound of $\eta_{T-1}$, we rearrange \eqref{eq:thm:stconvex:2} as:
\begin{equation}
    \eta_t(f_{i_t}(\xx_t)-f^\star)\le||\xx_t-\xx^\star||^2-||\xx_{t+1}-\xx^\star||^2\;,
\end{equation}
the left-hand-side of which can be lower bounded by:
\begin{equation}
\begin{split}
    \eta_t(f_{i_t}(\xx_t)-f^\star)
    \ge\eta_t\rho\gamma_t||\nabla f_{i_t}(\xx_t)||^2\stackrel{\eqref{eq:step_bound_adasls}}{\ge}\min\{\frac{1-\rho}{L},\gamma_{\max}\}\frac{\rho\gamma_t||\nabla f_{i_t}(\xx_t)||^2}{c_l\sqrt{\sum_{s=0}^t\gamma_s||\nabla f_{i_s}(\xx_s)||^2}}\;.
\end{split}
\end{equation}
Summing over $t=0$ to $t=T-1$ gives:
\begin{equation}
    \min\{\frac{1-\rho}{L},\gamma_{\max}\}\frac{\rho}{c_l}\sqrt{\sum_{s=0}^{T-1}\gamma_s||\nabla f_{i_s}(\xx_s)||^2}\stackrel{\eqref{eq:wald}}{\le}\min\{\frac{1-\rho}{L},\gamma_{\max}\}\frac{\rho}{c_l}\sum_{t=0}^{T-1}\frac{\gamma_t||\nabla f_{i_t}(\xx_t)||^2}{\sqrt{\sum_{s=0}^t\gamma_s||\nabla f_{i_s}(\xx_s)||^2}}\le||\xx_0-\xx^\star||^2\;.
    \label{eq:thm:stconvex:-1}
\end{equation}
This implies that:
\begin{equation}
    \eta_{T-1}\stackrel{\eqref{eq:step_bound_adasls}}{\ge}\frac{\min\{\frac{1-\rho}{L},\gamma_{\max}\}}{c_l\sqrt{\sum_{s=0}^{T-1}\gamma_s||\nabla f_{i_s}(\xx_s)||^2}}\stackrel{\eqref{eq:thm:stconvex:-1}}{\ge}\frac{\rho\min^2\{\frac{1-\rho}{L},\gamma_{\max}\}}{c_l^2||\xx_0-\xx^\star||^2}\;.
\end{equation}
After plugging the above into \eqref{eq:thm:stconvex:3}, the remaining proof follows from the same routine as shown for AdaSPS.
\end{proof}

\paragraph{Proofs for Loopless Variance-Reduction}
\subsection{Proof of Lemma \ref{lemma:vr}}
\begin{proof}
    By definition of $F_{i_t}(\xx)$, we have:
    \begin{equation}
        \begin{split}
        &F_{i_t}(\xx_t)-F_{i_t}^\star\\
        &=F_{i_t}(\xx_t)-F_{i_t}(\xx^\star)+F_{i_t}(\xx^\star)-F_{i_t}^\star \\ 
        &= f_{i_t}(\xx_t)-f_{i_t}(\xx^\star)+(\xx_t-\xx^\star)^T(\nabla f(\ww_t)-\nabla f_{i_t}(\ww_t))-\frac{\mu_F}{2}||\xx_t-\xx^\star||^2+F_{i_t}(\xx^\star)-F_{i_t}^\star\;.
        \end{split}
        \label{eq:lemma:vr:1}
    \end{equation}
    By $\mu_F$-strong convexity of $F_{i_t}(\xx)$, we obtain:
    \begin{equation}
        \begin{split}
            F_{i_t}(\xx^\star)-F_{i_t}^\star&\stackrel{\eqref{eq:step_bound}}{\le}\frac{1}{2\mu_F}||\nabla F_{i_t}(\xx^\star)||^2 \\ 
            &=\frac{1}{2\mu_F}||\nabla f_{i_t}(\xx^\star)-\nabla f_{i_t}(\ww_t)+\nabla f(\ww_t)+\mu_F(\xx^\star-\xx_t)||^2\;.
        \end{split}
        \label{eq:lemma:vr:2}
    \end{equation}
    Plugging \eqref{eq:lemma:vr:2} into \eqref{eq:lemma:vr:1}, taking expectation w.r.t the randomness $i_t$ on both sides gives:
    \begin{equation}
    \begin{split}
        &\E_{i_t}[F_{i_t}(\xx_t)-F_{i_t}^\star]\\&\le
        f(\xx_t)-f(\xx^\star)-\frac{\mu_F}{2}||\xx_t-\xx^\star||^2+\E_{i_t}[\frac{1}{2\mu_F}||\nabla f_{i_t}(\xx^\star)-\nabla f_{i_t}(\ww_t)+\nabla f(\ww_t)+\mu_F(\xx^\star-\xx_t)||^2] \\ 
        &=f(\xx_t)-f(\xx^\star)-\frac{\mu_F}{2}||\xx_t-\xx^\star||^2+\frac{\mu_F}{2}||\xx_t-\xx^\star||^2+\frac{1}{2\mu_F}\E_{i_t}[||\nabla f_{i_t}(\xx^\star)-\nabla f_{i_t}(\ww_t)+\nabla f(\ww_t)||^2]\\ 
        &\quad+\frac{1}{\mu_F}\E_{i_t}[\langle\nabla f_{i_t}(\xx^\star)-\nabla f_{i_t}(\ww_t)+\nabla f(\ww_t),\mu_F(\xx^\star-\xx_t)\rangle] \\ 
        &=f(\xx_t)-f(\xx^\star)+\frac{1}{2\mu_F}\E_{i_t}[||\nabla f_{i_t}(\xx^\star)-\nabla f_{i_t}(\ww_t)+\nabla f(\ww_t)||^2] \\ 
        &\le f(\xx_t)-f(\xx^\star)+\frac{1}{2\mu_F}\E_{i_t}[||\nabla f_{i_t}(\xx^\star)-\nabla f_{i_t}(\ww_t)||^2]\;.
    \end{split}
    \label{eq:lemma7:upbound}
    \end{equation}
\end{proof}

\subsection{Proof of Theorem \ref{thm:vr}}
\begin{proof}
    Recall the proof technique that gives equation \eqref{thm:vr:ref:1} and \eqref{eq:thm:vr:ref:2} in Theorem \ref{thm:convex}. Following the same routine, we arrive at:
    \begin{equation}
    \sum_{t=0}^{T-1}\langle\nabla F_{i_t}(\xx_t),\xx_t-\xx^\star\rangle \le \tau\sqrt{\sum_{t=0}^{T-1} F_{i_t}(\xx_t)-F_{i_t}^\star}\;.
    \end{equation}
where $\tau=(2c_p(L+\mu_F)D^2+\frac{1}{c_p})$ for AdaSVRPS and $\tau=\max\Bigl\{\frac{L+\mu_F}{(1-\rho)\sqrt{\rho}},\frac{1}{\gamma_{\max}\sqrt{\rho}}\Bigr\}c_lD^2+\frac{1}{c_l\sqrt{\rho}}$ for AdaSVRLS. The difference is due to the fact that $F_{i_t}(\xx)$ is $(L+\mu_F)$-smooth. Taking the expectation, using the fact that $\E[\nabla F_{i_t}(\xx_t)]=\E[\nabla f_{i_t}(\xx_t)+\nabla f(\ww_t)-\nabla f_{i_t}(\ww_t)]=\nabla f(\xx_t)$ and applying Lemma \ref{lemma:vr}, we end up with:
\begin{equation}
    \begin{split}
        \sum_{t=0}^{T-1} \E[f(\xx_t)-f^\star]\stackrel{\eqref{eq:lemma7:upbound}}{\le}\tau \sqrt{\sum_{t=0}^{T-1}\E[f(\xx_t)-f^\star]+\frac{1}{2\mu_F}\sum_{t=0}^{T-1}\E[||\nabla f_{i_t}(\ww_t)-\nabla f_{i_t}(\xx^\star)||^2]}\;.
    \end{split}
\end{equation} 
Taking the square gives:
\begin{equation}
    \begin{split}
        \Biggl(\sum_{t=0}^{T-1} \E[f(\xx_t)-f^\star]\Biggr)^2\le\tau^2\Biggl(\sum_{t=0}^{T-1}\E[f(\xx_t)-f^\star]+\frac{1}{2\mu_F}\sum_{t=0}^{T-1}\E[||\nabla f_{i_t}(\ww_t)-\nabla f_{i_t}(\xx^\star)||^2]\Biggr)\;.
    \end{split}
    \label{eq:thm:vr:square}
\end{equation} 
Define the Lyapunov function: $\cZ_{t+1}=\frac{1}{2(1-a)}\frac{\tau^2}{p_{t+1}\mu_F}||\nabla f_{i_{t+1}}(\ww_{t+1})-\nabla f_{i_{t+1}}(\xx^\star)||^2$. It follows that:
\begin{equation}
    \begin{split}
        \E[\cZ_{t+1}]&=\frac{1}{2(1-a)}\frac{\tau^2}{p_{t+1}\mu_F}\E[||\nabla f_{i_{t+1}}(\ww_{t+1})-\nabla f_{i_{t+1}}(\xx^\star)||^2] \\ 
        &=\frac{\tau^2}{2(1-a)\mu_F}\E[||\nabla f_{i_{t+1}}(\xx_{t})-\nabla f_{i_{t+1}}(\xx^\star)||^2]+\frac{1-p_{t+1}}{2(1-a)}\frac{\tau^2}{p_{t+1}\mu_F}\E[||\nabla f_{i_{t}}(\ww_{t})-\nabla f_{i_{t}}(\xx^\star)||^2]\\ 
        &\stackrel{\eqref{eq:smooth+convex:property}}{\le}\frac{\tau^2}{2(1-a)\mu_F}\E[2L(f_{i_{t+1}}(\xx_t)-f_{i_{t+1}}(\xx^\star)-\langle \nabla f_{i_{t+1}}(\xx^\star),\xx_t-\xx^\star\rangle)]+\frac{(1-p_{t+1})p_t}{p_{t+1}}\E[\cZ_{t}] \\ 
        &=\frac{L}{(1-a)\mu_F}\tau^2(\E[f(\xx_t)-f^\star])+\frac{(1-p_{t+1})p_t}{p_{t+1}}\E[\cZ_{t}]\;.
    \end{split}
\end{equation}
Adding $\sum_{t=0}^{T-1}\E[\cZ_{t+1}]$ to both sides of \eqref{eq:thm:vr:square} and substituting the above upper bound, we get:
\begin{equation}
    \begin{split}
        \Biggl(\sum_{t=0}^{T-1} \E[f(\xx_t)-f^\star]\Biggr)^2+\sum_{t=0}^{T-1}\E[\cZ_{t+1}]&\le(1+\frac{L}{(1-a)\mu_F})\tau^2\sum_{t=0}^{T-1}\E[f(\xx_t)-f^\star]\\
        &\quad+\sum_{t=0}^{T-1}\Bigl((1-a)p_t+\frac{(1-p_{t+1})p_t}{p_{t+1}}\Bigr)\E[\cZ_{t}]\;.
    \end{split}
\end{equation}
Rearranging and dropping $\E[\cZ_T]\ge0$ gives:
\begin{equation}
    \begin{split}
        \Biggl(\sum_{t=0}^{T-1} \E[f(\xx_t)-f^\star]\Biggr)^2&\le(1+\frac{L}{(1-a)\mu_F})\tau^2\sum_{t=0}^{T-1}\E[f(\xx_t)-f^\star]+\sum_{t=1}^{T-1}\Bigl((1-a)p_t+\frac{(1-p_{t+1})p_t}{p_{t+1}}-1\Bigr)\E[\cZ_{t}]\\ 
        &\quad+\bigl((1-a)p_0+\frac{(1-p_1)p_0}{p_1}\bigr)\E[\cZ_0]\;.
    \end{split}
\end{equation}
By our choice of $p_t$, we have: 
\begin{equation}
    (1-a)p_t+\frac{(1-p_{t+1})p_t}{p_{t+1}}-1=\frac{p_t}{p_{t+1}}-ap_t-1=\frac{at+a+1}{at+1}-\frac{a}{(at+1)}-1=\frac{0}{2(at+1)}= 0\;.
\end{equation}
Therefore, it holds that:
\begin{equation}
    \begin{split}
        \Biggl(\sum_{t=0}^{T-1} \E[f(\xx_t)-f^\star]\Biggr)^2&\le(1+\frac{L}{(1-a)\mu_F})\tau^2\sum_{t=0}^{T-1}\E[f(\xx_t)-f^\star]+\E[\cZ_0]\;.
    \end{split}
\end{equation}
Further, by $L$-smoothness and convexity of $f$, we have
\begin{equation}
    \E[\cZ_0]=\frac{1}{2(1-a)}\frac{\tau^2}{p_0\mu_F}\E[||\nabla f_{i_0}(\xx_0)-\nabla f_{i_0}(\xx^\star)||^2]\stackrel{\eqref{eq:smooth+convex:property}}{\le}\frac{L\tau^2}{(1-a)\mu_F}(f(\xx_0)-f^\star)\;.
\end{equation}
Hence. we obtain:
\begin{equation}
    \Biggl(\sum_{t=0}^{T-1} \E[f(\xx_t)-f^\star]\Biggr)^2\le(1+\frac{2L}{(1-a)\mu_F})\tau^2\sum_{t=0}^{T-1}\E[f(\xx_t)-f^\star].
\end{equation}
It follows that:
\begin{equation}
    \sum_{t=0}^{T-1} \E[f(\xx_t)-f^\star]\le(1+\frac{2L}{(1-a)\mu_F})\tau^2.
\end{equation}

Dividing both sides by $T$ and applying Jensen's inequality concludes the proof.

\end{proof}

\section{Pseudo-codes for AdaSPS and AdaSLS}
In this section, we provide formal pseudo codes for AdaSPS \eqref{AdaSPS} and AdaSLS \eqref{AdaSLS}.

To implement AdaSPS, a lower bound of optimal function value for each minibatch function is required. For machine learning problems where the individual loss functions are non-negative, we can use zero as an input. Apart from that, we need to provide a constant $c_p$ to adjust the magnitude of the stepsize. Theoretically suggested $c_p$ for robust convergence satisfies $c^{\text{scale}}_p=c_p\sqrt{f_{i_0}(\xx_0)-\ell_{i_0}^\star}\ge\frac{1}{2}$. Therefore, a common choice is to set $c_p=\frac{1}{\sqrt{f_{i_0}(\xx_0)-\ell_{i_0}^\star}}$.

\begin{algorithm}[H]
\begin{algorithmic}[1]
\caption{AdaSPS}
\Require $\xx_0\in\R^d$, 
$T\in\mathbb{N}^+$, $c_p>0$
\State set $\eta_{-1}=+\infty$
\State set $\epsilon=10^{-10}$
\For{$t=0$ to $T-1$}
 \State uniformly sample $i_t\subseteq[n]$
 \State provide a lower bound $\ell_{i_t}^\star\le f_{i_t}^\star$
 \State set $\eta_t=\min\left\{\frac{f_{i_t}(\xx_t)-\ell_{i_t}^\star}{c_p||\nabla f_{i_t}(\xx_t)||^2}\frac{1}{\sqrt{\sum_{s=0}^t f_{i_s}(\xx_s)-\ell_{i_s}^\star}+\epsilon},\eta_{t-1}\right\}$
 \State $\xx_{t+1}=\Pi_{\cX}(\xx_t-\eta_t\nabla f_{i_t}(\xx_t))$
\EndFor
\Return $\Bar{\xx}_T=\frac{1}{T}\sum_{t=0}^{T-1}\xx_t$
\end{algorithmic}
\end{algorithm}

To implement AdaSLS \eqref{AdaSLS}, a line-search sub-solver~\ref{Algorithm:Armijo} and an input constant $c_l>0$ are required. Similar to \eqref{AdaSPS}, we can set $c_l=\frac{1}{\rho\sqrt{\gamma_0||\nabla f_{i_0}(\xx_0)||^2}}$ according to the theory.
\begin{algorithm}[H]
\begin{algorithmic}[1]
\caption{AdaSLS}
\Require $\xx_0\in\R^d$, 
$T\in\mathbb{N}^+$, $c_l>0$
\State set $\eta_{-1}=+\infty$
\State set $\epsilon=10^{-10}$
\For{$t=0$ to $T-1$}
 \State uniformly sample $i_t\subseteq[n]$
 \State obtain $\gamma_t$ via backtracking line-search (\ref{Algorithm:Armijo})  
 \State set $\eta_t=\min\left\{\frac{\gamma_t}{c_l\sqrt{\sum_{s=0}^t \gamma_{s}||\nabla f_{i_s}(\xx_s)||^2}+\epsilon},\eta_{t-1}\right\}$
 \State $\xx_{t+1}=\Pi_{\cX}(\xx_t-\eta_t\nabla f_{i_t}(\xx_t))$
\EndFor
\Return $\Bar{\xx}_T=\frac{1}{T}\sum_{t=0}^{T-1}\xx_t$
\end{algorithmic}
\end{algorithm}

\section{Line-search procedure}
\label{sec:line-search}
In this section, we introduce the classical Armijo line-search method~\cite{armijo-line-search,num_opt_book}. Given a function $f_{i_t}(\xx)$, the Armijo line-search returns a stepsize $\gamma_t$ that satisfies the following condition:
\begin{equation}
    f_{i_t}(\xx_t-\gamma_t\nabla f_{i_t}(
\xx_t))\le f_{i_t}(\xx_t)-\rho\gamma_t||\nabla f_{i_t}(\xx_t)||^2\;,
\label{eq:armijo_condition}
\end{equation}
where $\rho\in(0,1)$ is an input hyper-parameter. If $f_{i_t}(\xx)$ is a smooth function, then backtracking line-search~\ref{Algorithm:Armijo} is a practical  implementation way to ensure that \ref{eq:armijo_condition} is satisfied. 

\begin{algorithm}[h]
\caption{Backtracking line-search}
\label{Algorithm:Armijo}
\begin{algorithmic}[1]
\Require$\beta\in[\frac{1}{2},1)$, $\rho\in(0,1)$, $\gamma_{\max}>0$ (We fix $\beta=0.8$ and $\rho=0.5$ for AdaSLS)
\State $\gamma=\gamma_{\max}$
\While{$f_{i_t}(\xx_t-\gamma\nabla f_{i_t}(\xx_t))> f_{i_t}(\xx_t)-\rho\gamma||\nabla f_{i_t}(\xx_t)||^2$}
\State $\gamma=\beta\gamma$
\EndWhile
\Return $\gamma_t=\gamma$ 
\end{algorithmic}
\end{algorithm}

To implement Algorithm~\ref{Algorithm:Armijo}, one needs to provide the decreasing factor $\beta$, the maximum stepsize $\gamma_{\max}$, and the condition parameter $\rho$. Starting from $\gamma_{\max}$, Algorithm~\ref{Algorithm:Armijo} decreases the stepsize iteratively by a constant factor $\beta$ until the condition \ref{eq:armijo_condition} is satisfied. Note that checking the condition requires additional minibatch function value evaluations. Fortunately, note that the output $\gamma$ cannot be smaller than $\frac{1-\rho}{L}$ (Lemma \ref{lemma:step_bound_adasls}), and thus the number of extra function value evaluations required is at most $\cO\Bigl(\max\{\log_{1/\beta}^{L\gamma_{\max}/(1-\rho)},1\}\Bigr)$. In practice, \citet{sls} suggests dynamic initialization of $\gamma_{\max}$ to reduce the algorithm's running time, that is, setting $\gamma_{\max_t}=\gamma_{t-1}\theta^{1/n}$ where a common choice for $\theta$ is $2$. This strategy initializes $\gamma_{\max}$ by a slightly larger number than the last output and thus is usually more efficient than keeping $\gamma_{\max}$ constant or always using $\gamma_{\max_t}=\gamma_{t-1}$. In all our experiments, we use the same $\gamma_{\max}$ at each iteration for AdaSLS to show its theoretical properties.

Goldstein line-search is another line-search method that checks \ref{eq:armijo_condition} and an additional curvature condition \cite{num_opt_book}. We do not study this method in this work and we refer to \cite{sls} for more details. 

\section{Counter examples of SPS and its variants for SVRG}
\label{appendix:sec:counter_ex}
We provide two simple counterexamples where SVRG with the SPS stepsize and its intuitive variants fail to converge. For simplicity, consider the update rule $\xx_{t+1}=\xx_t-\eta_t\nabla f(\xx_t)$, i.e. $\ww_t=\xx_t$ for all $t\ge0$. Consider the function $f(\xx)=\frac{f_1(\xx)+f_2(\xx)}{2}$ where $f_1(\xx)=a_1(\xx-1)^2$ and $f_2(\xx)=a_2(\xx+1)^2$ with $a_1,a_2>0$.
\begin{example}
\textbf{Individual curvature is not representative.}
Consider the standard stochastic Polyak stepsize: $\eta_t=\frac{f_i(\xx_t)-f_i^\star}{||\nabla f_i(\xx_t)||^2}$ where $i$ is randomly chosen from $\{1,2\}$. We now let $a_1=1$ and $a_2<1$. Note that $\nabla^2f(\xx)=a_1+a_2\in(1,2)$
while $\E_i[\eta_t]=\frac{1}{8}+\frac{1}{8a_2}\to+\infty$ as $a_2\to 0$, which leads to divergence. The reason behind this is that individual curvature does not match the global curvature.
\end{example}
\begin{example}
    \textbf{Mismatching quantity.} Consider a variant of stochastic Polyak stepsize: $\eta_t=\frac{f_i(\xx_t)-f_i^\star}{||\nabla f_i(\xx_t)-\nabla f_i(\ww_t)+\nabla f(\ww_t)||^2}$ where $i$ is randomly chosen from $\{1,2\}$. Let $a_1=a_2=1$. We note $\E_i[\eta_t \nabla f(\xx_t)]=\frac{x_t^2+1}{2x_t}\neq 0$ and thus no stationary point exists. Similar reasoning can exclude a number of other variants such as: $\eta_t=\frac{f_i(\xx_t)-f_i(\ww_t)+f(\ww_t)-f_i^\star}{||\nabla f_i(\xx_t)-\nabla f_i(\ww_t)+\nabla f(\ww_t)||^2}$. Indeed, the numerator is not the proper function value difference of a valid function with the gradient defined in the denominator.
\end{example}

\section{Experimental details and additional experiment results}
\label{appendix:ex}
In this section, we provide a detailed setup of the experiments presented in the main paper. 

In practice, we can use a lower bound of $F_{i_t}^\star$ for running AdaSVRPS since convergence is still guaranteed thanks to the property of AdaSPS. By default, we use $\ell_{i_t}^\star+\min_\xx\{\xx^T(\nabla f(\ww_t)-\nabla f_{i_t}(\ww_t))+\frac{\mu_F}{2}||\xx-\xx_t||^2\}$ for all the experiments, where $\ell_{i_t}^\star$ is a lower bound for $f_{i_t}^\star$.

\subsection{Synthetic experiment}
\label{appendix:ex:syn}
We consider the minimization of a quadratic of the form: $f(\xx)=\frac{1}{n}\sum_{i=1}^nf_i(\xx)$ where $f_i(\xx)=\frac{1}{2}(\xx-\bb_i)^TA_i(\xx-\bb_i)$, $\bb_i\in\R^d$ and $A_i\in\R^{d\times d}$ is a diagonal matrix. We use $n=50$, $d=1000$. We control the interpolation by either setting all $\{b_i\}$ to be identical or different. Each component of $\{b_i\}$ is generated according to $\cN(0,10^2)$. We control the complexity of the problems by choosing different matrices $A_i$. 
For the strongly-convex case, we first generate a matrix $A^N=\text{clip}(\begin{pmatrix}a_{11}&...&a_{1d}\\...\\a_{n1}&...&a_{nd}\end{pmatrix},1,10)$ where each $a_{ij}\sim N(0,15^2)$ and the clipping operator clips the elements to the interval between 1 and 10. Then we compute:
\[
A=\begin{pmatrix}
    1&1&...&\frac{n}{\sum_{i=1}^n A^N_{i(d-1)}}&\frac{10n}{\sum_{i=1}^nA^N_{id}}\\
    ...& &...& &... \\
    1&1&...&\frac{n}{\sum_{i=1}^n A^N_{i(d-1)}}&\frac{10n}{\sum_{i=1}^nA^N_{id}}
\end{pmatrix} \bigodot A_N\;,
\]
where $\bigodot$ denotes the Hadamard product.  We set the diagonal elements of each $A_i$ using the corresponding row stored in the matrix $A$. Note that $\nabla^2 f(\xx)=\frac{1}{n}\sum_{i=1}^nA_i$ has the minimum and the largest eigenvalues being  1 and 10. For the general convex case,  we use the same matrix $A_N$ to generate a sparse matrix $A_M$ such that $A_M=A_N \bigodot M$ where $M$ is a mask matrix with $M_{ij}\sim B(1,p)$ and $\begin{pmatrix}
    1&...&1
\end{pmatrix}\cdot M_{:j}\ge1,\;\forall j\in[1,d]$. We then compute the matrix $A$ and set each $A_i$ in the same way.
\[
A=\begin{pmatrix}
    \frac{2^{-20}n}{\sum_{i=1}^nA^M_{i1}}&\frac{2^{-19}n}{\sum_{i=1}^nA^M_{i2}}&...&\frac{2^{-1}n}{\sum_{i=1}^nA^M_{i20}}&1&...&1&\frac{10n}{\sum_{i=1}^nA^M_{id}} \\ 
    &...& &...& &...& &... \\
    \frac{2^{-20}n}{\sum_{i=1}^nA^M_{i1}}&\frac{2^{-19}n}{\sum_{i=1}^nA^M_{i2}}&...&\frac{2^{-1}n}{\sum_{i=1}^nA^M_{i20}}&1&...&1&\frac{10n}{\sum_{i=1}^nA^M_{id}} 
\end{pmatrix} \bigodot A_M\;.
\]
Through the construction, the smallest eigenvalues of $\nabla^2 f(\xx)$ are clustered around zero, and the largest eigenvalue is 10. Additionally, each $\nabla^2 f_i(\xx)$ is positive semi-definite.

We set the batch size to be 1 and thus we have $f_{i_t}^\star=0$ with interpolation and $\ell_{i_t}^\star=0$ without interpolation.

In addition to the optimizers reported in the main paper, we further evaluate the performance of SLS~\cite{sls}, AdaSLS, SGD, SPS$_{\max}$~\cite{sps} and AdaSVRLS. We define theoretically suggested hyperparameters $c^{\text{scale}}_l:=c_l\rho\sqrt{\gamma_0||\nabla f_{i_0}(\xx_0)||^2}\ge\frac{1}{2}$ for AdaSLS and $c^{\text{scale}}_l:=c_l\rho\sqrt{\gamma_0||\nabla F_{i_0}(\xx_0)||^2}\ge\frac{1}{2}$ for AdaSVRLS. We fix $c^{\text{scale}}_l=1$, $\gamma_{\max}=10$, $\beta=0.8$ and $\rho=1/2$ for both algorithms and for AdaSVRLS, we further use $\mu_F=10$ and $p_t=\frac{1}{0.1t+1}$. For SGD, we use the best learning rate schedules in different scenarios. Specifically, for both interpolation problems, we keep the stepsize constant and for non-interpolation problems, we apply $\cO(1/\sqrt{t})$ and $\cO(1/t)$ decay schedules for convex and strongly-convex problems respectively. We further pick the best stepsize from $\{10^i\}_{i=-4,...,3}$. For SPS$_{\max}$, we use $\gamma_b=10^{-3}$ and we only showcase its performance in non-interpolated settings. We report the results in Figure~\ref{fig:syn_appendix}. We observe that AdaSLS is comparable if no faster than the best tuned vanilla SGD. SPS$_{\max}$ is reduced to the vanilla SGD with constant stepsize. AdaSVRLS provides similar performance to AdaSVRPS but due to the cost of additional function evaluations, it is less competitive than AdaSVRPS.

\begin{figure*}[ht!]
    \centering
    \includegraphics[width=1\textwidth]{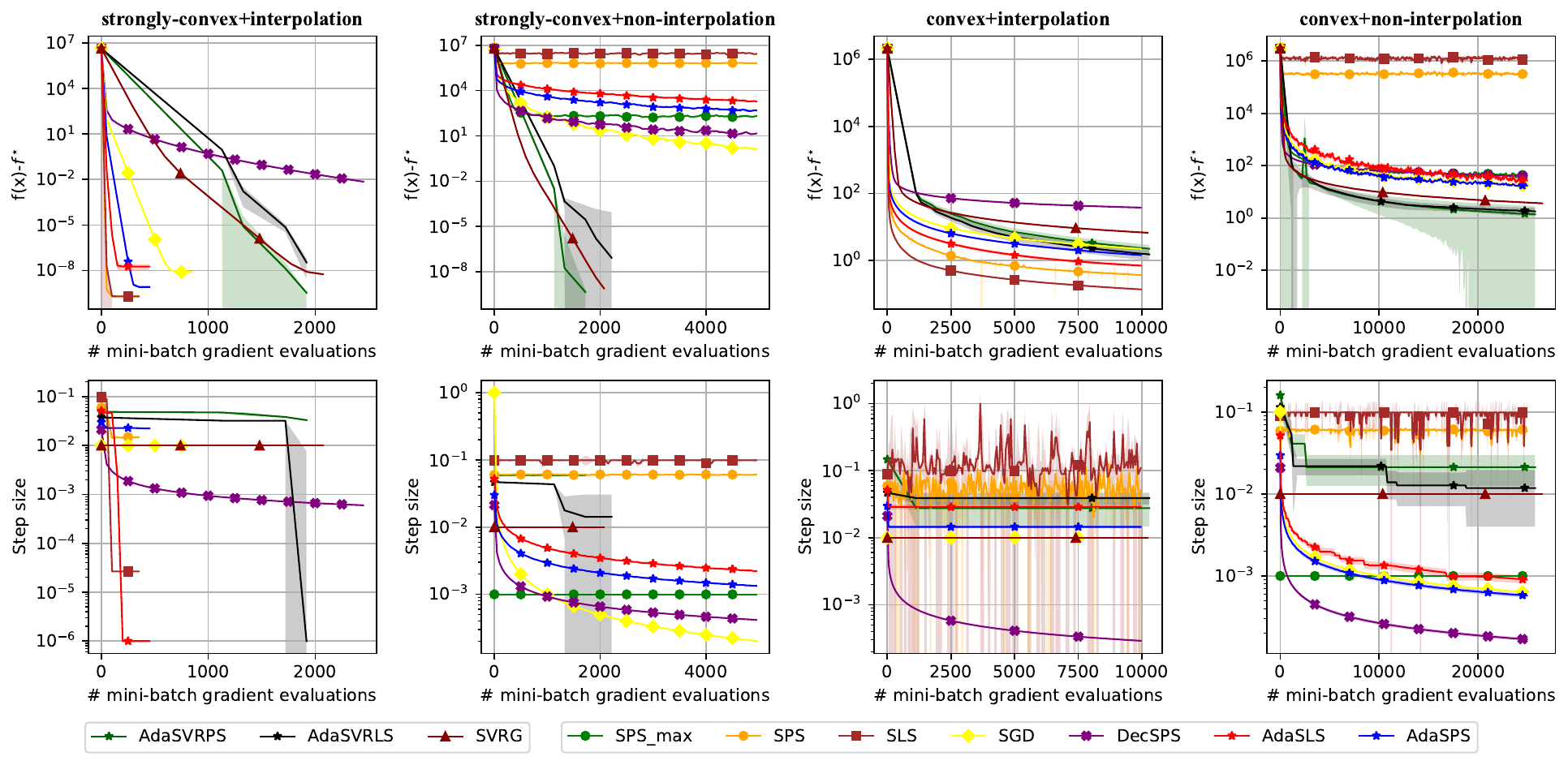}  
    \caption{Comparison of the considered optimizers on synthetic data set with quadratic loss. The left block of the label illustrates the variance-reduced methods and the right represents SGD with different stepsizes. (Repeated 3 times. The solid lines and the shaded area represent the mean and the standard deviation.)}
    \label{fig:syn_appendix}
\end{figure*}

\begin{table*}[ht!]
\resizebox{\textwidth}{!}
{\begin{minipage}{1.2\textwidth}
\centering
\begin{tabular}{@{}c|cccc@{}}
\toprule
\multirow{2}{*}{\textbf{optimizers}} &
\multicolumn{4}{c}{\textbf{hyper-parameters used for synthetic experiments}} \\
\cmidrule(lr){2-5} &
\multicolumn{1}{c}{st-convex+ip} & \multicolumn{1}{c}{st-convex+non-ip} & \multicolumn{1}{c}{convex+ip} & \multicolumn{1}{c}{convex+non-ip} \\
\midrule
AdaSPS & \multicolumn{4}{c}{$c_{p}^{\text{scale}}=1$}\\
AdaSLS & \multicolumn{4}{c}{$c_{l}^{\text{scale}}=1$, $\beta=0.8$, $\rho=0.5$, $\gamma_{\max}=10$}\\
SPS & \multicolumn{4}{c}{$c=0.5$}\\
SPS$_{\max}$ & \multicolumn{4}{c}{$c=0.5$, $\gamma_b=10^{-3}$}\\
SLS & \multicolumn{4}{c}{$\rho=0.1$, $\beta=0.9$, $\gamma_{\max}=10$}\\
DecSPS & \multicolumn{4}{c}{$c_0=1, \gamma_b=10$}\\
SGD & constant, $\eta=10^{-2}$ & $\cO(1/t)$, $\eta=1$& constant, $\eta=10^{-2}$ & $\cO(1/\sqrt{t})$, $\eta=10^{-1}$\\
AdaSVRPS & \multicolumn{4}{c}{$c_{p}^{\text{scale}}=1$, $\mu_F=10$, $p_t=\frac{1}{0.1t+1}$}\\
AdaSVRLS & \multicolumn{4}{c}{$c_{l}^{\text{scale}}=1$, $\beta=0.8$, $\rho=0.5$, $\gamma_{\max}=0.1$, $\mu_F=10$, $p_t=\frac{1}{0.1t+1}$}\\
SVRG & \multicolumn{4}{c}{$\eta=10^{-2}$}\\
\bottomrule
\end{tabular}
\end{minipage}}
\caption{Hyper-parameters of the considered optimizers used in synthetic experiments. st-convex stands for strongly-convex and ip stands for interpolation.}
\label{tab:params:syn}
\end{table*}

\subsection{Binary classification}
\label{appendix:ex:libsvm}
Following the binary classification experiments presented in the main paper, we provide additional experiments for line-search type algorithms. We fix $\gamma_{\max}=10^3$, $\beta=0.8$ and $\rho=1/2$ for AdaSLS and AdaSVRLS. 
We report the best $c_{l}^{\text{scale}}\in\{0.5,1,2\}$, $\mu_F\in\{10^{-4},10^2\}$. In Figure~\ref{fig:libsvm_appendix}, we observe that SLS has the same issues as SPS, AdaSLS shows comparable performance to AdaSPS and AdaSVRLS provides similar performance to the other three variance-reduced methods. The details of the four considered datasets are summarized in Table~\ref{tab:dataset}. The chosen hyper-parameters for each algorithm can be found in Table~\ref{tab:params:libsvm}.

We next investigate the impact of the probability schedule on the convergence behaviours. We pick w8a as the dataset and run AdaSVRPS (Alg.~\ref{AdaVR}) with and without probability decay. Specficially, we set $p_t=B/n$ and $p_t=\frac{1}{0.1t+1}$ to separate the cases. We control the level of the interpolation by using $B=32$ and $B=128$ since $\sigma_{f,128}\le\sigma_{f,32}$. From Figure~\ref{fig:comp_prob}, we observe that decreasing probability schedule is more efficient when the problem is more non-interpolated. This is because for interpolated problems, the frequent computation of the full gradients at the beginning provides no additional convergence benefits. 

\begin{table*}[ht!]
\resizebox{\textwidth}{!}
{\begin{minipage}{1.2\textwidth}
\centering
\begin{tabular}{ccccc@{}}
\toprule
\multicolumn{1}{c}{} &
\multicolumn{1}{c}{\textbf{duke}} & 
\multicolumn{1}{c}{\textbf{rcv1}} & 
\multicolumn{1}{c}{\textbf{ijcnn}} & 
\multicolumn{1}{c}{\textbf{w8a}}  \\
\midrule
n & 44 & 20242  & 49990 & 49749 \\
d & 7129 & 47236 & 22 & 300 \\
B & 1 & 64 & 64 & 128 \\
\bottomrule
\end{tabular}
\end{minipage}}
\caption{Number of datapoints, dimension of features, used batch size of four datasets from LIBSVM~\cite{libsvm}}
\label{tab:dataset}
\end{table*}

\begin{figure*}[ht!]
    \centering
    \includegraphics[width=1\textwidth]{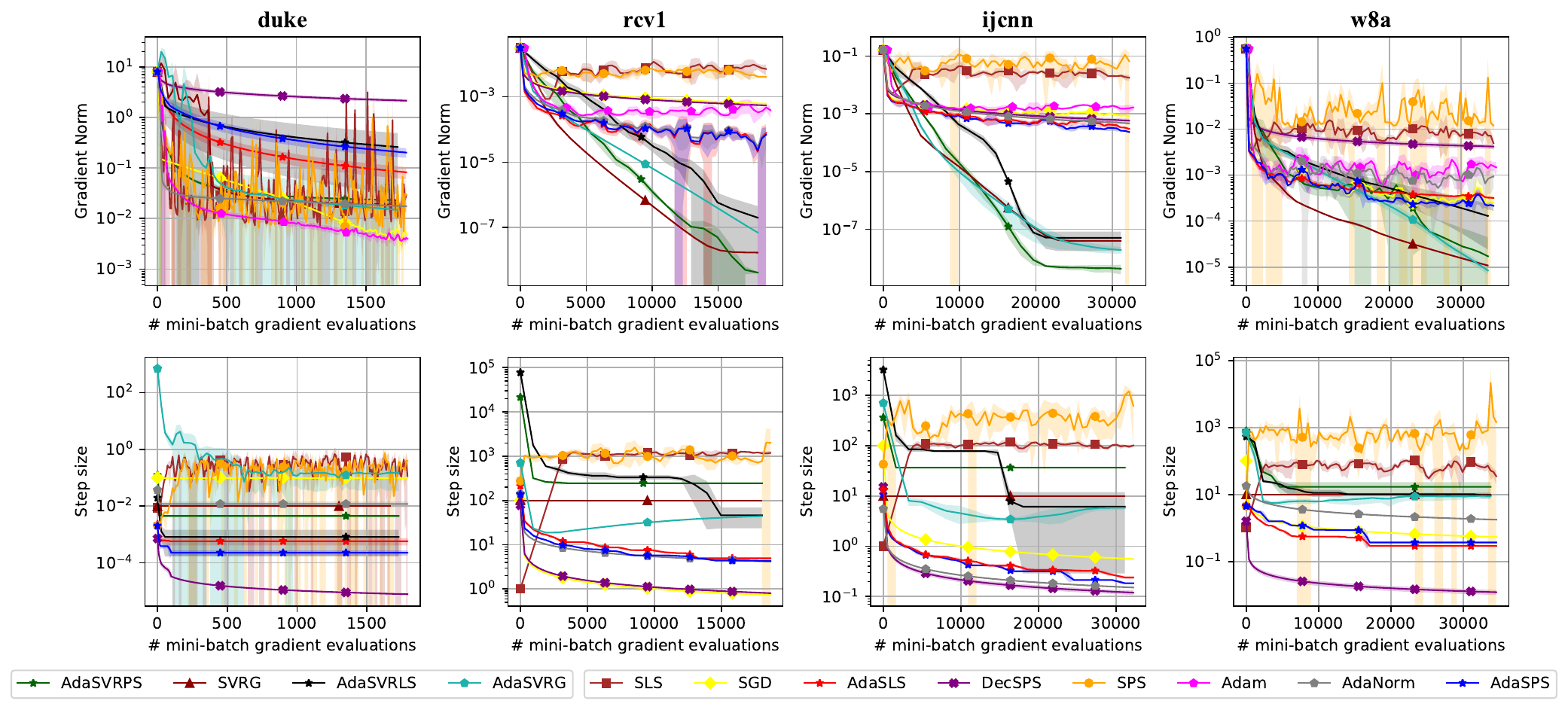}  
    \caption{Comparison of the considered optimizers on four LIBSVM datasets with regularized logistic loss. The left block of the label illustrates the variance-reduced methods and the right represents SGD with different stepsizes. (Repeated 3 times. The solid lines and the shaded area represent the mean and the standard deviation.)}
    \label{fig:libsvm_appendix}
\end{figure*}

\begin{figure*}[ht!]
    \centering
    \includegraphics[width=0.75\textwidth]{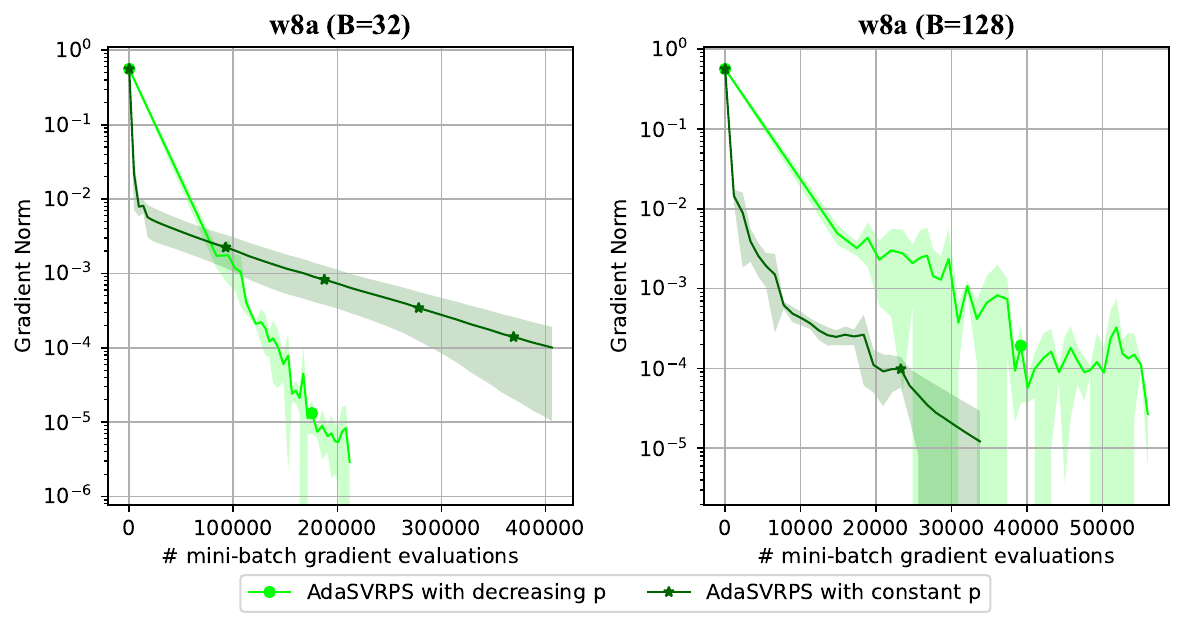}  
    \caption{Comparison of different probability schedules for AdaSVRPS on the w8a dataset with regularized logistic loss. Decreasing probability is more efficient when optimizing highly non-interpolated convex problems (Repeated 3 times. The solid lines and the shaded area represent the mean and the standard deviation.)}
    \label{fig:comp_prob}
\end{figure*}

\begin{table*}[ht!]
\resizebox{\textwidth}{!}
{\begin{minipage}{1.5\textwidth}
\centering
\begin{tabular}{@{}c|ccccc@{}}
\toprule
\multirow{2}{*}{\textbf{optimizers}} &
\multicolumn{5}{c}{\textbf{hyper-parameters used for binary classification tasks}} \\
\cmidrule(lr){2-6} &
\multicolumn{1}{c}{duke} & \multicolumn{1}{c}{rcv1} & \multicolumn{1}{c}{ijcnn} & \multicolumn{1}{c}{w8a} & \multicolumn{1}{c}{for all}\\
\midrule
AdaSPS & $c_{p}^{\text{scale}}=0.5$ & $c_{p}^{\text{scale}}=1$ & $c_{p}^{\text{scale}}=2$ & $c_{p}^{\text{scale}}=0.5$ & \\
AdaSLS & $c_{l}^{\text{scale}}=0.5$, $\gamma_{\max}=10$ & $c_{l}^{\text{scale}}=1$, $\gamma_{\max}=10^3$ & $c_{l}^{\text{scale}}=2$, $\gamma_{\max}=10^3$ & $c_{l}^{\text{scale}}=0.5$, $\gamma_{\max}=10^3$  & $\beta=0.8$, $\rho=0.5$\\
SPS & & & & & $c=0.5$\\
SLS & $\gamma_{\max}=10$& $\gamma_{\max}=10^3$ & $\gamma_{\max}=10^3$ & $\gamma_{\max}=10^3$  &$\beta=0.9$, $\rho=0.1$ \\
DecSPS &$\gamma_b=200$ & $\gamma_b=100$& $\gamma_b=100$& $\gamma_b=100$ &$c_0=1$\\
SGD & constant, $\eta=10^{-1}$ & $\cO(1/\sqrt{t})$, $\eta=100$& $\cO(1/\sqrt{t})$, $\eta=100$ & $\cO(1/\sqrt{t})$, $\eta=100$\\
AdaNorm & $c_g=1$ & $c_g=10$ & $c_g=10$ & $c_g=10$ & $b_0=10^{-10}$\\
Adam & ${\rm lr}=10^{-3}$ & ${\rm lr}=10^{-2}$ & ${\rm lr}=10^{-2}$ & ${\rm lr}=10^{-2}$ & $\beta_1=0.9$, $\beta_2=0.999$\\
AdaSVRPS & $c_{p}^{\text{scale}}=0.5$, $\mu_F=100$ & $c_{p}^{\text{scale}}=1$, $\mu_F=10^{-4}$ & $c_{p}^{\text{scale}}=2$, $\mu_F=10^{-4}$ &  $c_{p}^{\text{scale}}=0.5$, $\mu_F=10^{-4}$ & $p_t=\frac{B}{N}$\\
AdaSVRLS\footnote{$\gamma_{\max}=\frac{1}{\mu_F}$} & $c_{l}^{\text{scale}}=0.5$, $\mu_F=100$ & $c_{l}^{\text{scale}}=1$, $\mu_F=10^{-4}$ & $c_{l}^{\text{scale}}=2$, $\mu_F=10^{-4}$ &  $c_{l}^{\text{scale}}=0.5$, $\mu_F=10^{-4}$ & $\beta=0.8$, $\rho=0.5$, $p_t=\frac{B}{N}$\\
SVRG & $\eta=10^{-2}$ & $\eta=100$ & $\eta=10$ & $\eta=10$ & \\
AdaSVRG & \multicolumn{5}{c}{We use the trick provided in Section 5 from~\cite{ada-svrg}.} \\
\bottomrule
\end{tabular}
\end{minipage}}
\caption{Hyper-parameters of the considered optimizers used in binary classification.}
\label{tab:params:libsvm}
\end{table*}

\section{Deep learning task}
\label{appendix:ex:cifar}
In this section, we provide a heuristic extension of AdaSPS to over-parameterized non-convex optimization tasks. When training modern deep learning models, \citet{warmupstepsize} observe that a cyclic behaviour of the stepsize, i.e., increasing at the beginning and then decreasing up to a constant, can help with fast training and good generalization performance. Since AdaSPS is a non-increasing stepsize, it excludes such a cyclic behaviour. To address this issue, we provide a non-convex version of AdaSPS which incorporates a restart mechanism that allows an increase of the stepsize according to the local curvature. The full algorithm is summarized in Algorithm~\ref{adasps:dl}. In practice, we can set $u=\frac{B}{n}$.   Algorithm~\ref{adasps:dl} updates the stepsize and $c_p$ at the beginning of each epoch and uses AdaSPS~\eqref{AdaSPS} for the rest of the epoch. 

\begin{algorithm}[H]
\begin{algorithmic}[1]
\caption{AdaSPS (DL)}
\Require $\xx_0\in\R^d$, 
$T\in\mathbb{N}^+$, $c_p^{\text{scale}}>0$, update frequency $u\in\mathbb{N}^+$
\State set $\eta_{-1}=+\infty$
\State set $\epsilon=10^{-10}$
\For{$t=0$ to $T-1$}
 \State uniformly sample $i_t\subseteq[n]$
 \State provide a lower bound $\ell_{i_t}^\star\le f_{i_t}^\star$
 \If{$t$ mod $u$ is $0$}
 \State set $c_p=\frac{c_p^{\text{scale}}}{\sqrt{\sum_{s=0}^{t}f_{i_s}(\xx_s)-\ell_{i_s}^\star}}$
 \State set $\eta_t=\frac{f_{i_t}(\xx_t)-\ell_{i_t}^\star}{c_p||\nabla f_{i_t}(\xx_t)||^2}\frac{1}{\sqrt{\sum_{s=0}^t f_{i_s}(\xx_s)-\ell_{i_s}^\star}+\epsilon}$
 \Else 
 \State set $\eta_t=\min\left\{\frac{f_{i_t}(\xx_t)-\ell_{i_t}^\star}{c_p||\nabla f_{i_t}(\xx_t)||^2}\frac{1}{\sqrt{\sum_{s=0}^t f_{i_s}(\xx_s)-\ell_{i_s}^\star}+\epsilon},\eta_{t-1}\right\}$
 \EndIf
 \State $\xx_{t+1}=\xx_t-\eta_t\nabla f_{i_t}(\xx_t)$
\EndFor
\Return $\xx_T$
\label{adasps:dl}
\end{algorithmic}
\end{algorithm}

Following~\cite{sps,sls}, we benchmark the convergence and generalization performance of AdaSPS (DL)~\ref{adasps:dl} for the multi-class classification tasks on CIFAR10~\cite{cifar10} and CIFAR100~\cite{cifar100} datasets using ResNet-34~\cite{resnet}. We compare against SPS~\cite{sps}, Adam~\cite{adam}, AdaGrad~\cite{AdaGrad}, DecSPS~\cite{decsps} and SGD with momentum. We use the smoothing technique and pick $c=0.02$ for SPS as suggested in \cite{sps}. We use the official implementations of Adam, AdaGrad, and SGD with momentum from \href{https://pytorch.org/docs/stable/optim.html}{https://pytorch.org/docs/stable/optim.html}. We choose ${\rm lr}=10^{-3}$, $\beta_1=0.9$ and $\beta_2=0.999$ for Adam. We choose ${\rm lr}=0.01$ for AdaGrad. We choose ${\rm lr}=0.01$ and $\beta=0.9$ for SGD with momentum. Finally, we pick $c_p^{\text{scale}}=0.02$ for Algorithm~\ref{adasps:dl}. In Figure~\ref{fig:cifar}, AdaSPS (DL) shows competitive performance on both datasets. We leave the study of its theoretical properties to future work.

\begin{figure*}[ht!]
\centering
\includegraphics[width=1\textwidth]{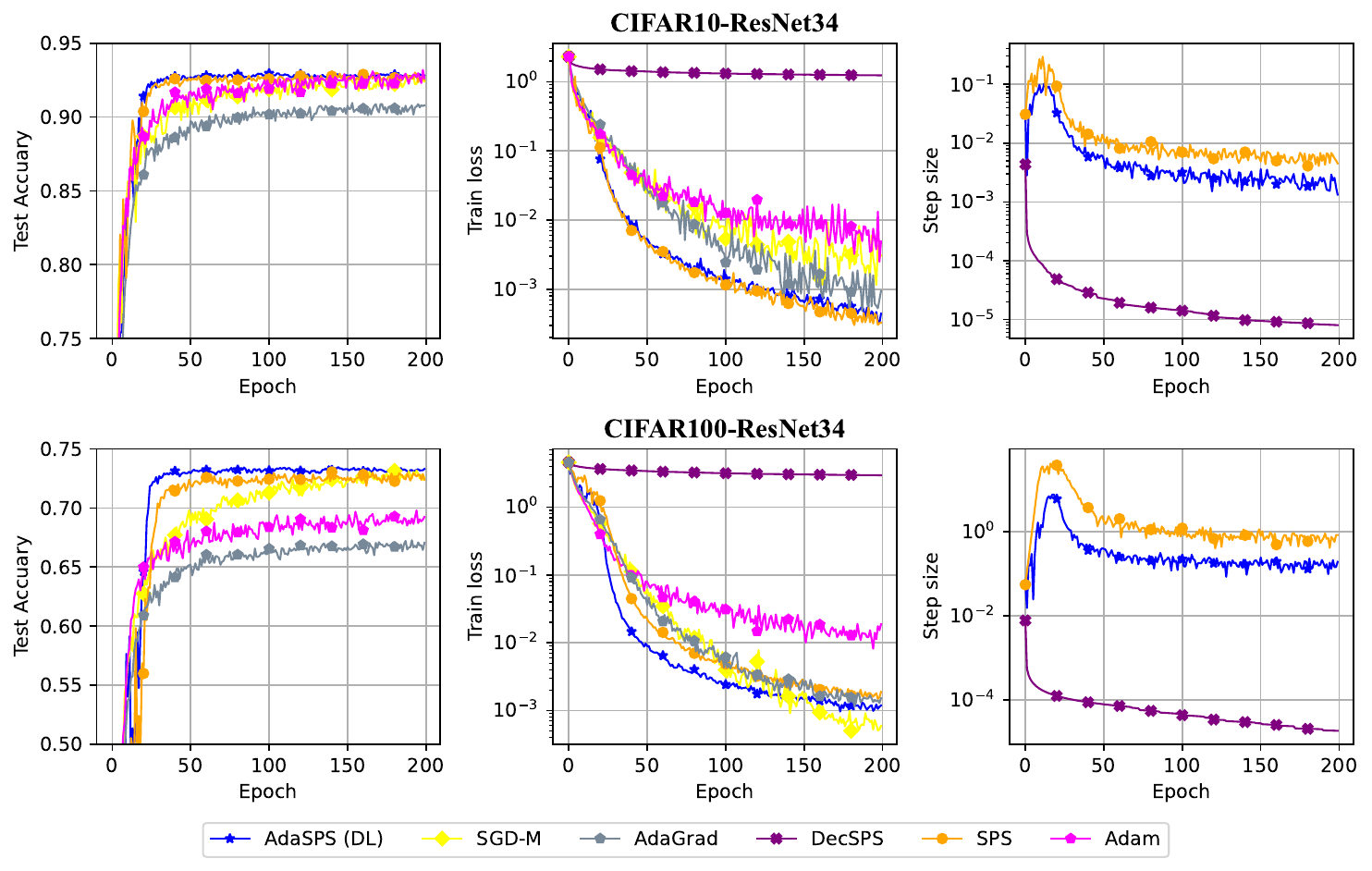}  
\caption{Comparison of the considered optimizers on multi-class classification tasks with CIFAR10 and CIFAR100 datasets using ResNet34 with softmax loss. AdaSPS (DL)~\ref{adasps:dl} and SPS provide remarkable performance on both datasets.}
\label{fig:cifar}
\end{figure*}

\begin{table*}[ht!]
\resizebox{\textwidth}{!}
{\begin{minipage}{1.5\textwidth}
\centering
\begin{tabular}{@{}c|c@{}}
\toprule
\multicolumn{1}{c}{\textbf{optimizers}} &
\multicolumn{1}{c}{\textbf{hyper-parameters used for multi-classification tasks}} \\
\midrule
AdaSPS (DS) & $c_{p}^{\text{scale}}=0.2$ \\
SPS & $c=0.2$ + smoothing technique~\cite{sps}\\
DecSPS & $c_0=1$, $\gamma_b=1000$ \\
SGD-M & ${\rm lr}=0.01$, $\beta=0.9$ \\ 
AdaGrad & ${\rm lr}=0.01$ \\ 
Adam & ${\rm lr}=10^{-3}$, $\beta_1=0.9$, $\beta_2=0.999$ \\
\bottomrule
\end{tabular}
\end{minipage}}
\caption{Hyper-parameters of the considered optimizers used in multi-classification tasks.}
\label{tab:params:cifar}
\end{table*}

\end{document}